\def\eqref#1{equation~\ref{#1}}
\def\1{\bm{1}}
\DeclareMathAlphabet{\mathsfit}{\encodingdefault}{\sfdefault}{m}{sl}
\SetMathAlphabet{\mathsfit}{bold}{\encodingdefault}{\sfdefault}{bx}{n}
\newtheorem{theorem}{Theorem}
\newtheorem{lemma}{Lemma}
\newtheorem{proposition}{Proposition}
\title{LoCA: Location-Aware Cosine Adaptation for Parameter-Efficient Fine-Tuning}
\author{Zhekai Du$^{\dagger,\ddagger}$\thanks{This work was done when Zhekai Du was a visiting student at The University of Melbourne.}, \ Yinjie Min$^\diamond$, \ Jingjing Li$^\dagger$\textsuperscript{\Letter}, \ Ke Lu$^\dagger$, Changliang Zou$^\diamond$, Liuhua Peng$^\ddagger$\\
\textbf{Tingjin Chu$^\ddagger$}, \ \textbf{Mingming Gong$^{\ddagger, \star}$}\\
$^\dagger$ University of Electronic Science and Technology of China \ \ 
$^\ddagger$ The University of Melbourne \\
$^\diamond$ Nankai University \ \
$^\star$ Mohamed bin Zayed University of Artificial Intelligence \\
\texttt{\{zhekaid, jjl, kel\}@uestc.edu.cn, \{nk.yjmin, nk.chlzou\}@gmail.com} \\
\texttt{\{liuhua.peng, tingjin.chu, mingming.gong\}@unimelb.edu.au}
% \texttt{\{qingru.zhang,abukharin3,tourzhao\}@gatech.edu} \\
% \texttt{mc0750@princeton.edu} \\
% \texttt{\{nikosk,penhe,yu.cheng,wzchen\}@microsoft.com}
}
\begin{document}

\maketitle

\begin{abstract}

   Low-rank adaptation (LoRA) has become a prevalent method for adapting pre-trained large language models to downstream tasks. However, the simple low-rank decomposition form may constrain the hypothesis space. To address this limitation, we introduce Location-aware Cosine Adaptation (LoCA), a novel frequency-domain parameter-efficient fine-tuning method based on inverse Discrete Cosine Transform (iDCT) with selective locations of learnable components. We begin with a comprehensive theoretical comparison between frequency-domain and low-rank decompositions for fine-tuning pre-trained large models. Our analysis reveals that frequency-domain decomposition with carefully selected frequency components can surpass the expressivity of traditional low-rank-based methods. Furthermore, we demonstrate that iDCT offers a more efficient implementation compared to inverse Discrete Fourier Transform (iDFT), allowing for better selection and tuning of frequency components while maintaining equivalent expressivity to the optimal iDFT-based adaptation. By employing finite-difference approximation to estimate gradients for discrete locations of learnable coefficients on the DCT spectrum, LoCA dynamically selects the most informative frequency components during training. Experiments on diverse language and vision fine-tuning tasks demonstrate that LoCA offers enhanced parameter efficiency while maintains computational feasibility comparable to low-rank-based methods.

   \end{abstract}
   
   \section{Introduction}
   Pre-trained large language models (LLMs) \citep{radford2019language,liu2019roberta,brown2020language} have shown strong capabilities in learning language knowledge and adapting to various natural language processing (NLP) tasks through fine-tuning (FT). This FT paradigm has extended to vision \citep{dosovitskiy2020image,liu2021swin} and multi-modal domains \citep{radford2021learning,li2022blip}, leveraging the Transformer architecture \citep{vaswani2017attention}. However, as models grow larger, fine-tuning the entire model becomes too costly for practical use.
   
   % To address this challenge, various Parameter-Efficient Fine-Tuning (PEFT) methods \citep{houlsby2019parameter} have been developed to reduce trainable parameters while maintaining performance comparable to full fine-tuning. Among them, {\it adapter-based} methods \citep{hu2023llm,he2021towards} insert small trainable modules into transformer layers. {\it Prompt-based} approaches \citep{lester2021power,wang2023non} prepend learnable vectors to input or hidden states. However, these methods often introduce non-negligible inference overhead due to the additional modules or prompt vectors. {\it Partial fine-tuning} \citep{zaken2021bitfit,xu2021raise} selectively updates a subset of existing model parameters, but they still suffer from suboptimal performance compared to full FT. To address these limitations,
   % Low-Rank Adaptation (LoRA) \citep{hu2021lora} offers an alternative by reparameterizing incremental updates of pre-trained weights using low-rank decomposition. For a pre-trained weight matrix $W_0 \in \mathbb{R}^{p \times q}$ in an attention layer or a feed-forward layer, LoRA approximates fine-tuned weights as $W^{\prime} = W_0 + \Delta W = W_0 + BA$, where $B \in \mathbb{R}^{p \times r}$, $A \in \mathbb{R}^{r \times q}$, and $r \ll \min(p, q)$. During FT, only $A$ and $B$ are updated. This allows LoRA to significantly reduce the number of trainable parameters while still achieving impressive performance.

    To address this challenge, various Parameter-Efficient Fine-Tuning (PEFT) methods \citep{houlsby2019parameter} have been developed. {\it Adapter-based} methods \citep{hu2023llm,he2021towards} insert small trainable modules into Transformer layers. {\it Prompt-based} approaches \citep{lester2021power,wang2023non} prepend learnable vectors to input or hidden states. However, these methods often introduce non-negligible inference overhead. {\it Partial FT} \citep{zaken2021bitfit,xu2021raise} selectively updates a subset of existing model parameters, but they still suffer from suboptimal performance compared to full FT. To address these limitations,
    Low-Rank Adaptation (LoRA) \citep{hu2021lora} offers an alternative by reparameterizing incremental updates of pre-trained weights using low-rank decomposition. For a pre-trained weight matrix $W_0 \in \mathbb{R}^{p \times q}$ in an attention layer or a feed-forward layer, LoRA reparameterizes fine-tuned weights as $W^{\prime} = W_0 + \Delta W = W_0 + BA$, where $B \in \mathbb{R}^{p \times r}$, $A \in \mathbb{R}^{r \times q}$, and $r \ll \min(p, q)$. During FT, only $A$ and $B$ are updated. This allows LoRA to significantly reduce the number of trainable parameters while still achieving impressive performance.
   
   The success of LoRA has inspired a series of subsequent work. These LoRA variants typically aim to better utilize the parameter budget \citep{zhang2023adalora,valipour2022dylora,kopiczko2023vera}, improve computational efficiency \citep{dettmers2024qlora,zhang2023memory,hedegaard2024structured}, enable diverse learning
   patterns \citep{liu2024dora}, or achieve a higher rank \citep{hyeon2021fedpara,edalati2022krona,hao2024flora}. However, they still reparameterize weight update with the low-rank decomposition form, which may limit the \textcolor{black}{hypothesis space} and prevent further parameter reduction. To address this issue, FourierFT \citep{gao2024parameter} proposes to reparameterize $\Delta W$ with a randomly selected set of frequency-domain components by inverse Discrete Fourier Transform (iDFT). This implicitly allows for \textcolor{black}{enhanced expressivity} and flexible parameter budget.
   
   While FourierFT has shown empirical success, its advantages over low-rank methods have not been theoretical analyzed. To fill this gap, we aim to provide a comprehensive understanding of frequency-domain PEFT. We begin with a systematic analysis of weight updates during FT, and identify the asymptotic normality of weight incremental matrices through both empirical observations and theoretical justification. This foundation enables a rigorous mathematical comparison of the expressivity between frequency-domain and low-rank methods. Interestingly, our analysis reveals that iDFT-based methods with randomly selected locations of learnable frequency components exhibit lower expressivity than low-rank methods. In response, we design iDFT-based variants with carefully selected components, which consequently surpass the expressivity of low-rank-based methods. We further demonstrate that the best choice of iDFT-based variants can be equivalently and more efficiently implemented using inverse Discrete Cosine Transform (iDCT).
   
   Building on these insights, we introduce Location-aware Cosine Adaptation (LoCA), an iDCT-based PEFT method that optimizes both the coefficients and locations of frequency components. By employing finite-difference approximation to estimate gradients for discrete location variables, LoCA dynamically selects the most informative frequency components for each weight update matrix. We demonstrate that LoCA offers enhanced parameter efficiency while maintaining computational feasibility comparable to low-rank methods.
   Experiments across various language and vision tasks show that LoCA matches state-of-the-art PEFT performance using significantly fewer parameters.
   
   \section{Preliminary Analysis of Fine-Tuning Modern LLMs} \label{sec:Preliminary_Analysis}
   % Modern LLMs are predominantly built upon the Transformer architecture \citep{vaswani2017attention}, where each Transformer block consists of a multi-head self-attention (MHSA) module and a feed-forward network (FFN). Given an input sequence $x \in \mathbb{R}^{n \times d}$, the MHSA projects $x$ into query, key, and value matrices for each head $h$ using weight matrices $W_q^h, W_k^h, W_v^h \in \mathbb{R}^{d \times d/H}$, respectively, where $H$ is the number of heads. It then computes the attention scores to aggregate the value vectors. The aggregated output is then processed by the FFN, which consists of two fully connected layers parameterized by $W_{f1} \in \mathbb{R}^{d \times d_m}$ and $W_{f2} \in \mathbb{R}^{d_m \times d}$, where $d_m$ is the hidden dimension of the FFN.

    Modern LLMs are predominantly built upon the Transformer architecture  \citep{vaswani2017attention}, where each Transformer block has a multi-head self-attention (MHSA) and a feed-forward network (FFN). For input $x \in \mathbb{R}^{n \times d}$, MHSA projects $x$ into query, key, and value matrices per head $h$ using $W_q^h, W_k^h, W_v^h \in \mathbb{R}^{d \times d/H}$, where $H$ is the number of heads. The FFN then processes the attention output using $W_{f1} \in \mathbb{R}^{d \times d_m}$ and $W_{f2} \in \mathbb{R}^{d_m \times d}$, where $d_m$ is the hidden dimension.
   
   To systematically analyze the behavior of fine-tuning LLMs, we fine-tune a pretrained LLaMA-7b model \citep{touvron2023llama} on the Alpaca-52K dataset \citep{taori2023stanford}. For each fine-tuned weight matrix $W^{'} \in \mathbb{R}^{p \times q}$ $(p \geq q)$, we get the incremental matrix $\Delta W=W^{'}-W_{0}$ and examine its properties from various perspectives. Our empirical observations reveal that the weights in each $\Delta W$ closely approximate a Gaussian distribution (Fig. \ref{fig:analysis_1}). We claim that this normality can be theoretically justified.
   Consider a pre-trained model $f$ with a pre-trained weight matrix $W_0$. 
   Assume the fine-tuning dataset is sampled from $P(X, Y;\overline{W})$, where $\overline{W}$ can be considered as the distribution parameter as well as the oracle solution of fine-tuning, $X$ and $Y$ denote the input data and corresponding labels, respectively.
   During the FT process, we obtain the parameter $W^{\prime}$ by minimizing the empirical loss. Consequently, $W^{\prime}$ can be regarded as an M-estimator of $\overline{W}$, which satisfies $\mathbb{P}_n\psi(W^{\prime})\overset{def.}{=}\mathbb{P}_n\nabla \ell \left[ Y-f(X;W^{\prime}) \right]^2=0$,
   where $\mathbb{P}_n$ is the empirical average over $n$ samples drawn from $P(X, Y;\overline{W})$, $\psi$ is the score function, and $\ell$ is an objective function. Under fairly general conditions, $W^{\prime}-\overline{W}$ is known to be asymptotically normal \citep{yohai1979asymptotic}: $\sqrt{n}\left( W^{\prime}-\overline{W} \right)^V\overset{d.}{\rightarrow}\mathcal{N}_{pq}\left( 0, \Sigma_{\overline{W}} \right)$, where $\cdot^V$ denotes vectorization. We further assert that, under some mild assumptions, the incremental matrix $\Delta W$ also exhibits asymptotic normality.

   \begin{proposition}\label{normalmatrixprop}
   Let $W_0 \in \mathbb{R}^{K\times K}$ and $W^{\prime} \in \mathbb{R}^{K \times K}$ be the pre-trained weight matrix and fine-tuned weight trained on datasets with $N$ and $n^{\prime}$ data samples, respectively. Assume that
   (A1) The pre-training dataset follows $P(X, Y;\overline{W}_0)$. For real-world fine-tuning datasets, the vectorized $\overline{W}^V$ follows a prior distribution $\mathcal{N}_{K^2}(\overline{W}_0^V, \overline{\sigma}^2 I_{K^2})$, where $\overline{\sigma}$ is a constant.
   (A2) For any given $\overline{W}$, let $W^{\prime}$ be an M-estimator that satisfies asymptotic normality. The elements on $W^{\prime}-\overline{W}$ are asymptotically independent and identically distributed, and the estimation error $W^{\prime}-\overline{W}$ is independent of $\overline{W}$.
   Under these assumptions, there exists $\sigma_0>0$, the weight update matrix $\Delta W = W^{\prime} - W_0$ satisfies:
   \begin{equation*}
   \Delta W^V \sim \mathcal{N}_{K^2}\left(0, \left(\frac{\sigma_0^2}{n^{\prime}} + \overline{\sigma}^2\right)I_{K^2}\right) + o_P\left(\frac{1}{\sqrt{n^{\prime}}}\right) + O_P\left(\frac{1}{\sqrt{N}}\right).
   \end{equation*}
   \end{proposition}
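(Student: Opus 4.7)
The plan is to split $\Delta W$ via the telescoping identity
\[
\Delta W = (W^{\prime} - \overline{W}) + (\overline{W} - \overline{W}_0) - (W_0 - \overline{W}_0),
\]
so that each summand can be handled by exactly one of the hypotheses. The first summand is the fine-tuning estimation error at $n^{\prime}$ samples, the second is the oracle drift controlled by the Gaussian prior in (A1), and the third is the pre-training estimation error at $N$ samples. The strategy is then to show that the first two summands are jointly Gaussian and independent, so their variances simply add, while the third is absorbed into the stochastic remainder.

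For the first summand I would invoke (A2): conditionally on $\overline{W}$, M-estimator asymptotic normality together with the iid assumption on the entries of $W^{\prime}-\overline{W}$ gives $\Sigma_{\overline{W}}=\sigma_0^2\, I_{K^2}$ and hence $(W^{\prime} - \overline{W})^V = Z_1/\sqrt{n^{\prime}} + o_P(1/\sqrt{n^{\prime}})$ with $Z_1\sim \mathcal{N}_{K^2}(0,\sigma_0^2 I_{K^2})$. The second summand is Gaussian by (A1), namely $(\overline{W} - \overline{W}_0)^V \sim \mathcal{N}_{K^2}(0, \overline{\sigma}^2 I_{K^2})$. The third summand is treated by applying the same M-estimator theory to the pre-training problem on $N$ samples, yielding $(W_0 - \overline{W}_0)^V = O_P(1/\sqrt{N})$. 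Because (A2) declares $W^{\prime}-\overline{W}$ to be independent of $\overline{W}$, the two Gaussians in the first two summands are independent, so that $Z_1/\sqrt{n^{\prime}} + (\overline{W}^V - \overline{W}_0^V) \sim \mathcal{N}_{K^2}(0,(\sigma_0^2/n^{\prime} + \overline{\sigma}^2) I_{K^2})$; collecting the $o_P(1/\sqrt{n^{\prime}})$ and $O_P(1/\sqrt{N})$ remainders yields the claimed decomposition.

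The main obstacle I expect is justifying the scalar covariance $\sigma_0^2 I_{K^2}$ in the first step. Classical M-estimator theory produces a sandwich covariance that depends on the unknown $\overline{W}$, so one has to exploit the iid entrywise structure in (A2) to collapse $\Sigma_{\overline{W}}$ to a scalar and then argue that this scalar can be taken as a single constant $\sigma_0$ across the support of the prior (by a continuity/uniform-convergence argument, or by defining $\sigma_0^2$ as the prior-averaged asymptotic variance). A related subtlety is the bookkeeping between the conditional law of $W^{\prime}$ given $\overline{W}$ and the marginal law appearing in the statement: the $o_P$ and $O_P$ orders, as well as the independence used to add the two variances, must be shown to survive integration over the prior on $\overline{W}$, and the pre-training error $W_0 - \overline{W}_0$ must be argued to be probabilistically independent of the fine-tuning pipeline (justified by the disjoint datasets).
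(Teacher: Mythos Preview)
Your proposal is correct and matches the paper's argument essentially step for step: the same telescoping decomposition, the same handling of each of the three summands, and the same use of the independence clause in (A2) to add the two Gaussian variances. The one subtlety you flag about passing from the conditional law given $\overline{W}$ to the marginal law is exactly where the paper inserts a short dominated-convergence lemma, and your worry about $\sigma_0$ varying with $\overline{W}$ is handled directly by the independence assumption in (A2) rather than by any averaging or continuity argument.
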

   
   \begin{figure*}[t] 
     \centering 
     \subfloat[Empirical Distribution of $\Delta W$]{
     \includegraphics[width=0.32\linewidth] {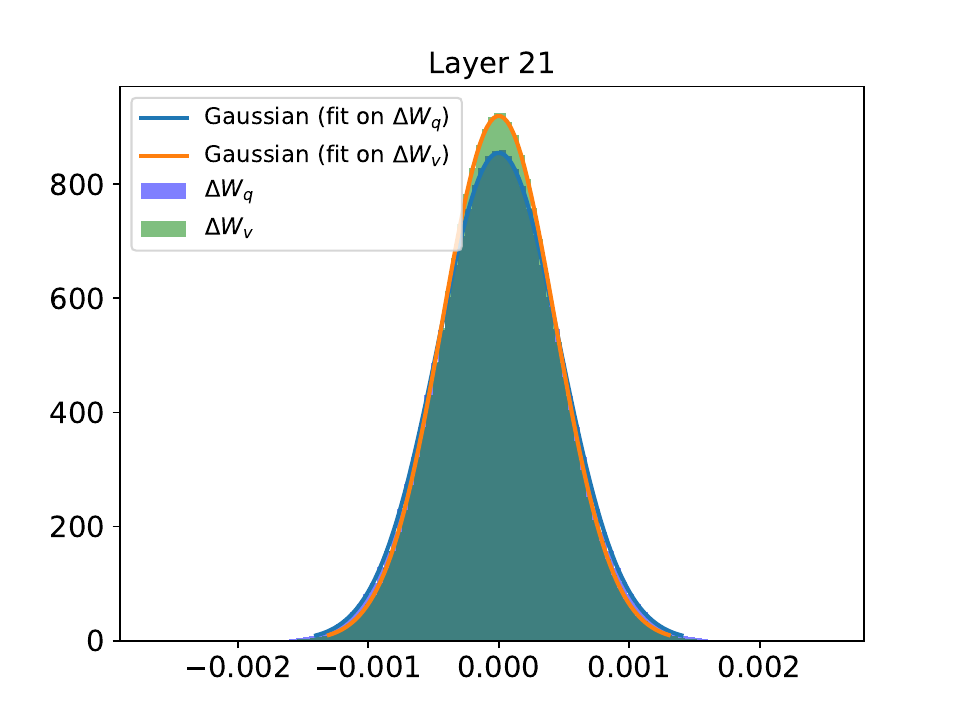} 
     \label{fig:analysis_1}
     }
     \subfloat[Hypothesis Testing]{
     \includegraphics[width=0.33\linewidth] {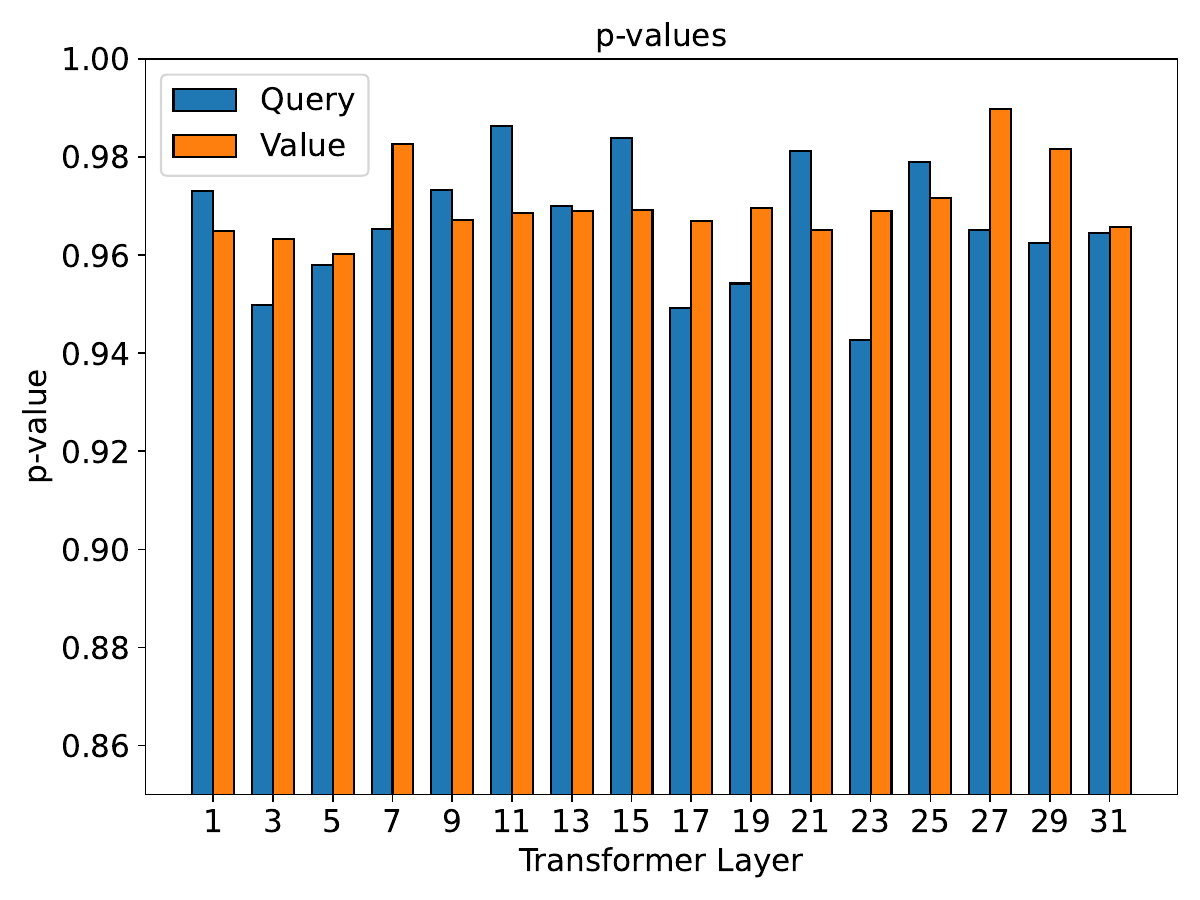}
     \label{fig:analysis_2}
     }
     \subfloat[Empirical Spectral Density]{
     \includegraphics[width=0.31\linewidth] {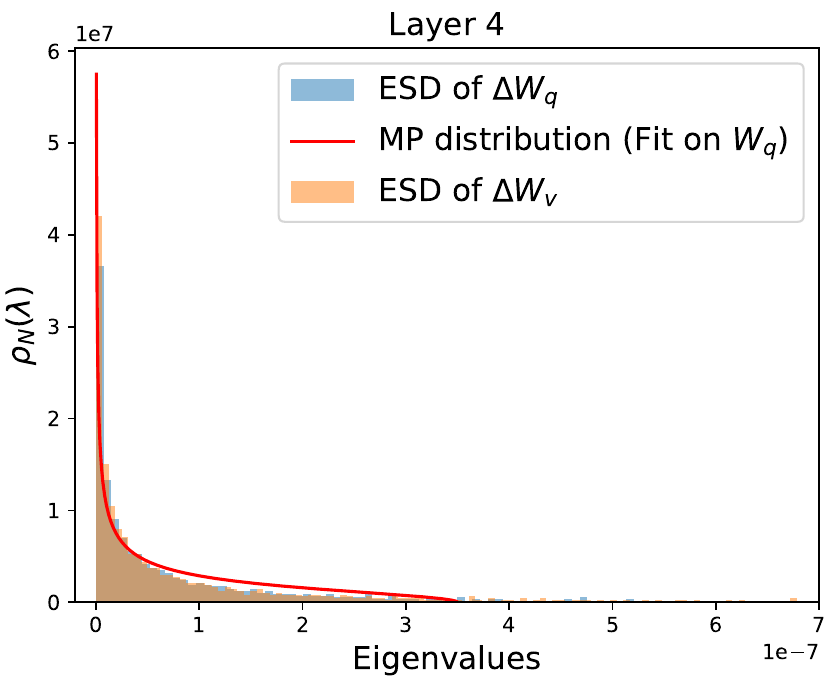}
     \label{fig:analysis_3}
     }
     \vspace{-5pt}
     \caption{Analysis of the weight incremental matrices. (a) Empirical distribution of the incremental query ($\Delta W_q$) and value ($\Delta W_v$) projection matrices for a representative middle layer. (b) p-values of the hypothesis test for $\Delta W_q$ and $\Delta W_v$ across different layers. (c) Empirical spectral density (ESD) of $\Delta W_q$ and $\Delta W_v$ for layer 4. Same phenomena are observed in other weight matrices.} 
     \label{fig:spectral_analysis} 
     \vspace{-14pt} 
   \end{figure*}
   
   We justify the reasonability of these assumptions in Appendix \ref{sec:justification}. For ease of representation, we use square matrices for theoretical analysis without loss of generality. Proposition \ref{normalmatrixprop} shows that during FT, the weight update follows an isotropic Gaussian, plus two error terms. In practice, the second term can be assumed to be zero due to the vast amount of pre-training data. However, the last term, which is related to the size of the FT dataset, causes the final distribution to deviate slightly from a Gaussian distribution. To examine the impact of this error term, we design a hypothesis test, where the null hypothesis posits that the total variation (TV) between the distribution of parameters $w \in \Delta W$ and the normal distribution is less than a constant $\varepsilon$, i.e., $H_0: d_{TV}(P(w), \mathcal{N}(w;\hat{\mu}, \hat{\sigma}^2)) \leq \epsilon$, where $d_{TV}(\cdot, \cdot)$ denotes the total variation, $P(w)$ is the true distribution of $w$, $\hat{\mu}$ and $\hat{\sigma}$ are the empirical mean and standard deviation of $w$ respectively. We use the TV between the the empirical distribution of $w$ and $\mathcal{N}(w;\hat{\mu}, \hat{\sigma}^2)$ as the test statistic and employ a bootstrap-like method to estimate its distribution (the details are described in Appendix \ref{sec:hypothesis_test}). Fig. \ref{fig:analysis_2} illustrates the results for $\Delta W_q$ and $\Delta W_v$ across different layers. We choose $\epsilon=0.001$ and significance level 0.05 for this test. The large p-values across all tests in Fig. \ref{fig:analysis_2} mean that the null hypothesis $H_0$ cannot be rejected, i.e., the parameter updates indeed asymptotically follow a Gaussian distribution.
   
   Another observation from Proposition \ref{normalmatrixprop} is that the parameters in $\Delta W$ are asymptotically i.i.d. To examine this, we analyze the empirical spectral density (ESD) of each $\Delta W$, which is defined as the probability density of the eigenvalues $\{\lambda_i\}_{i=1}^{q}$ of the correlation matrix $\Delta C = \frac{1}{p}\Delta W^T \Delta W \in \mathbb{R}^{q \times q}$. ESD is extensively studied in random matrix theory and helps understand the asymptotic behavior of the eigenvalues of large random matrices with i.i.d. elements. According to the Marchenko-Pastur (MP) law \citep{yang2012testing}, as $p, q \rightarrow \infty$ with a fixed aspect ratio $Q = p/q$, the ESD for a random matrix converges to the MP distribution determined by the element-wise variance $\sigma_{mp}^2$. The agreement between the ESD and the MP distribution in Fig. \ref{fig:analysis_3} suggests that $\Delta W$ behaves like an i.i.d. random matrix. This property will help us to better analyze various PEFT methods.
   
   \section{Comparison between frequency-space and low-rank Adaptation}
   Given the asymptotic Gaussian nature of $\Delta W$, we can now analytically compare the expressivities of low-rank-based and frequency-space-based adaptation methods. We regard {\it expressivity} as the ability to approximate a fully fine-tuned weight incremental matrix using the same parameter budget.
   
   Given any $\Delta W \in \mathbb{R}^{p \times q}$ obtained through full fine-tuning, low-rank-based methods approximate it as $\hat{W}_R = BA$ with $N_0 = (p+q)r$ parameters, where $r$ is the chosen rank. In contrast, FourierFT \citep{gao2024parameter} adopts a frequency-domain approach by randomly selecting $N_1$ components on the Fourier spectrum $F=\mathcal{F}(\Delta W)$ to learn, setting others to zero, and approximates $\Delta W$ as $\hat{W}_F^{(1)} = \mathcal{F}^{-1}(\hat{F}^{(1)})$, where $\mathcal{F},\mathcal{F}^{-1}$ denote the FFT and inverse FFT respectively, and $\hat{F}^{(1)} \in \mathbb{C}^{p \times q}$ is the learned spectrum, which has non-zero values at randomly selected locations $Id^{(1)}=\{ id^{(1)}_i=(x^{(1)}_i,y^{(1)}_i)\}_{i=1}^ {N_1}$.
   However, FourierFT only considers learning the real part on $\hat{F}^{(1)}$, and simply discards the imaginary part after the inverse FFT. Besides, it fails to exploit the conjugate symmetry property inherent in the Fourier spectra for real-valued matrices. We argue that this could lead to information loss and inefficient utilization of the parameter budget.
   To address these concerns, we consider a more comprehensive approach that leverages both the real and imaginary parts of the Fourier spectrum while exploiting the conjugate symmetry property. Specifically, we select learnable locations only on the non-redundant half (i.e., the left half) of $F$, and learn both real and imaginary coefficients at these locations. We still denote the result of the improved version as $\hat{W}_F^{(1)}$.
   
   Intuitively, when approximating a matrix through low-rank decomposition, the learned low-rank matrices are effectively the left and right singular matrices corresponding to the largest $r$ singular values of $\Delta W$. However, for frequency-domain methods, this order statistic is not inherently involved. To incorporate this information, we consider an oracle variant that selects $N_2$ locations in the non-redundant half of $\mathcal{F}(\Delta W)$ with the largest amplitude values (the search space is $\Omega_1 = [p] \times [q/2]$), and sets other locations to 0. We denote the resulting sparse Fourier spectrum with optimal locations as $\hat{F}^{(2)}$, yielding $\hat{W}_F^{(2)} = \mathcal{F}^{-1}(\hat{F}^{(2)})$.
   Furthermore, we explore an additional variant leveraging the fact that each location in the Fourier spectrum has a real and an imaginary coefficient, which need not be bound together for selection. We propose selecting $N_3$ learnable coefficients individually with a search space $\Omega_2 = [p] \times [q/2] \times [2]$. In this case, the optimal strategy is to choose the top $N_3$ coefficients with the largest absolute values in the non-redundant half of $\mathcal{F}(\Delta W)$ for learning. Denoting the spectrum with these optimal coefficients as $\hat{F}^{(3)}$, we obtain $\hat{W}_F^{(3)} = \mathcal{F}^{-1}(\hat{F}^{(3)})$.
   We show that, given the asymptotic Gaussian nature of $\Delta W$, we can mathematically compare these PEFT methods. In our theoretical analysis, we account for location indexing within the parameter budget. For a fair comparison with rank $r$ decomposition, we set $N_1 = N_3 = 1/2 N_0$ and $N_2 = 2/3 N_0$ \footnote{A 2D location can be represented by a 1D index given the matrix height $p$ and width $q$.}.
   
   \begin{theorem} \label{theorem_1}
   Let $W \in \mathbb{R}^{K \times K} \sim G$ be a weight matrix where each element independently follows a standard normal distribution $\mathcal{N}(0, 1)$. Define the reconstruction error $L(W, \hat{\mathcal{W}}) =||W - \hat{\mathcal{W}}||_F^2$, where $\hat{\mathcal{W}}$ can be $\hat{W}_R$, $\hat{W}_F^{(1)}$, $\hat{W}_F^{(2)}$, or $\hat{W}_F^{(3)}$ stated above. Then, for $r < K/3$, we have
   \begin{equation*}
       \mathbb{E}_{Id^{(1)}}\mathbb{E}_{W \sim G}[L(W, \hat{W}_F^{(1)})] > \mathbb{E}_{W \sim G}[L(W, \hat{W}_R)] > \mathbb{E}_{W \sim G}[L(W, \hat{W}_F^{(2)})] > \mathbb{E}_{W \sim G}[L(W, \hat{W}_F^{(3)})].
   \end{equation*}
   \end{theorem}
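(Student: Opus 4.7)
The plan is to use Parseval's identity in the unitary DFT convention to rewrite each of the four expected squared errors as $K^2$ minus a captured-energy term, evaluate or bound each captured energy with random-matrix and order-statistic tools, and then chain the four pairwise comparisons. With the unitary normalization, $\mathbb{E}\|W\|_F^2=K^2=\mathbb{E}\|F\|_F^2$ where $F=\mathcal{F}(W)$; at every non-self-conjugate frequency $F_{k,l}\sim\mathcal{CN}(0,1)$, so $|F_{k,l}|^2\sim\mathrm{Exp}(1)$ and the parts $\mathrm{Re}(F_{k,l}),\mathrm{Im}(F_{k,l})$ are independent $\mathcal{N}(0,1/2)$'s. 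After a $\sqrt{2}$-rescaling on the non-redundant half, the Fourier side is equivalently $K^2$ iid $\mathcal{N}(0,1)$ real coefficients whose squared sum still equals $\|W\|_F^2$. Writing $\mathbb{E}[L(W,\hat{\mathcal{W}})]=K^2-\mathbb{E}[\mathrm{cap}(\hat{\mathcal{W}})]$ for each method, the theorem is equivalent to the reverse chain $\mathbb{E}[\mathrm{cap}(\hat W_F^{(3)})]>\mathbb{E}[\mathrm{cap}(\hat W_F^{(2)})]>\mathbb{E}[\mathrm{cap}(\hat W_R)]>\mathbb{E}[\mathrm{cap}(\hat W_F^{(1)})]$.

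Next I would compute the four expected captured energies. For $\hat W_F^{(1)}$, independence of the random location set from $W$ gives $\mathbb{E}[\mathrm{cap}]=2N_1\cdot\mathbb{E}|F_{k,l}|^2=N_0$. For $\hat W_R$, Eckart--Young yields $\mathrm{cap}=\sum_{i=1}^r\sigma_i^2(W)$, and integrating against the Marchenko--Pastur density on $[0,4K]$ via $\lambda=4K\sin^2\theta$ gives the closed-form expression $\mathbb{E}[\mathrm{cap}]=K^2\,(1-2\theta_\alpha/\pi+\sin(4\theta_\alpha)/(2\pi))$, where $\alpha=r/K$ and $\theta_\alpha$ satisfies $2\theta_\alpha+\sin 2\theta_\alpha=\pi(1-\alpha)$. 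For $\hat W_F^{(2)}$, conjugate symmetry doubles each chosen location's contribution, so $\mathbb{E}[\mathrm{cap}]=2\,\mathbb{E}[\text{top-}N_2\ \text{sum of}\ K^2/2\ \text{iid}\ \mathrm{Exp}(1)]$, which I would evaluate via R\'enyi's representation of exponential order statistics or the tail form $N_2\,\mathbb{E}[Y\mid Y>t_\star]$ with $e^{-t_\star}=2N_2/K^2$. For $\hat W_F^{(3)}$ the same technique applies to the top-$N_3$ sum of $K^2$ iid $\chi^2_1$.

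With these four quantities in hand, I would verify the three inequalities in turn. The comparison $\mathbb{E}[\mathrm{cap}(\hat W_R)]>N_0$ reduces, after substituting the expression for $\alpha$ in terms of $\theta_\alpha$, to the trigonometric inequality $\sin s\,(\cos s+1)>\pi\alpha$ in the variable $s=2\theta_\alpha$, which I would check holds on the full interval $\alpha\in(0,1/3]$ by elementary monotonicity together with the boundary value $\sin s(\cos s+1)=0=\pi\alpha$ at $\alpha=0$; this is the one place the hypothesis $r<K/3$ is used. The comparison $\mathbb{E}[\mathrm{cap}(\hat W_F^{(2)})]>\mathbb{E}[\mathrm{cap}(\hat W_R)]$ follows because the expected top-$N_2$ sum of $\mathrm{Exp}(1)$ order statistics scales like $\Theta(N_2\log(K^2/N_2))=\Theta(Kr\log(K/r))$, beating the low-rank capture, which the Marchenko--Pastur edge bounds above by $4Kr$. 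The last inequality $\mathbb{E}[\mathrm{cap}(\hat W_F^{(3)})]>\mathbb{E}[\mathrm{cap}(\hat W_F^{(2)})]$ I would prove by a coupling that upgrades any bundled selection to a coordinate-wise one of equal cost and then exploits the heavier per-item tail of a single Gaussian square relative to a pooled pair of them.

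The main obstacle is expected to be this last inequality. Under the stated budgets $(N_2,N_3)=(2N_0/3,\,N_0/2)$, $\hat W_F^{(3)}$ uses strictly fewer raw real coefficients than $\hat W_F^{(2)}$, so the coupling cannot be purely set-theoretic; a quantitative, uniform-in-$K$ comparison between $\mathbb{E}[\text{top-}N_3\ \text{of}\ K^2\ \text{iid}\ \chi^2_1]$ and $2\,\mathbb{E}[\text{top-}N_2\ \text{of}\ K^2/2\ \text{iid}\ \mathrm{Exp}(1)]$ is needed, most likely via matched Gaussian tail integrals with explicit constants. The MP-based step is also delicate at finite $K$, where the asymptotic density should be replaced by concentration of Wishart eigenvalues together with an explicit trace-moment identity so as to convert the limiting calculation into a bona fide finite-$K$ inequality.
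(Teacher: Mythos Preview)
Your overall framework---Parseval to convert losses into captured energies, Wishart/Marchenko--Pastur for the low-rank term, and $\chi^2_1$/$\chi^2_2$ order statistics for the Fourier terms---is exactly the scaffolding the paper uses. The paper likewise reduces $K^2-L_F^{(3)}$ to the top-$N_3$ sum of $K^2$ iid $\chi^2_1$ variables and $K^2-L_F^{(2)}$ to the top-$N_2$ sum of (mostly) $\chi^2_2$ variables, and computes $K^2-L_F^{(1)}=\tfrac{K^3r}{K^2/2+2}<2Kr$, matching your $N_0$ to leading order.

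Where you diverge is in the low-rank step and the comparisons. The paper works with the exact finite-$K$ Wishart eigenvalue density, proves $\mathbb{E}\lambda_1'>2K$ analytically only for $r=1$ via the Tracy--Widom edge asymptotic, and then \emph{verifies the entire chain of inequalities numerically} for the relevant ranges of $(K,r)$; it does not claim a closed-form proof. Your Marchenko--Pastur route, with the substitution $\lambda=4K\sin^2\theta$ and the resulting trigonometric inequality $\sin s(1+\cos s)>\pi\alpha$ for $s=2\theta_\alpha$, is a genuinely different and cleaner argument for the first comparison, and it does go through on the full interval $\alpha\in(0,1/3]$ in the MP limit. The price is exactly the one you flag: MP is asymptotic, so you would need a finite-$K$ concentration step to make it a proof of the stated finite-$K$ theorem. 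The paper sidesteps this by computing against the exact density numerically rather than the limit.

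Two places in your plan are looser than you may realize. First, your argument for $\mathbb{E}[\mathrm{cap}(\hat W_F^{(2)})]>\mathbb{E}[\mathrm{cap}(\hat W_R)]$ via ``$\Theta(Kr\log(K/r))$ beats the $4Kr$ edge bound'' is only an order-of-magnitude statement; at $\alpha=r/K$ bounded away from zero (e.g.\ $\alpha$ near $1/3$) the logarithm is $O(1)$ and the constants matter. The paper does not attempt an analytic proof here either---it again falls back on numerics---so you are not missing a known trick, but your sketch would not close the gap as written. Second, for the last inequality you correctly identify that $\hat W_F^{(3)}$ selects strictly fewer raw real coefficients than $\hat W_F^{(2)}$ under the stated budgets, so no pure coupling works. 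The paper's only analytic gesture here is to compare Arnold-type \emph{upper bounds} on the two order-statistic sums, which of course does not establish the inequality between the quantities themselves; the actual verification is again numerical. So your instinct that this is the genuine obstacle is right, and you should not expect the paper to supply the missing quantitative tail comparison you describe.
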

   
   Note that we use $\mathcal{N}(0,1)$ in Theorem \ref{theorem_1} without loss of generality, as any matrix can be rescaled to have zero mean and unit variance. Importantly, Theorem \ref{theorem_1} shows that randomly selecting learnable coefficients in the frequency domain, i.e., $\hat{W}_F^{(1)}$, has worse expressivity than all other method, highlighting the importance of strategic selection of frequency components. On the other hand, the superior performance of $\hat{W}_F^{(3)}$, which allows for individual selection of (real or imaginary) coefficients, indicates that this increased flexibility in frequency component selection can lead to better expressivity. These findings have significant implications for the design of PEFT methods.
   
   % that approximates it by inverse Fast Fourier Transform (iFFT) with $N_1$ randomly located non-zero elements on the Fourier matrix $F_L \in \mathbb{R}^{p \times q \times 2}$, i.e., $\Delta \hat{W} = \mathcal{F}^{-1}(F_L)$, where $\mathcal{F}^{-1}$ is the inverse discrete Fourier operator. 
   
   \section{Location-aware Cosine Adaptation}
   \subsection{Problem Formulation}
   In this work, we regard the goal of PEFT as effectively reparameterizing a weight incremental matrix. Building on our previous analysis, we aim to propose a frequency-domain PEFT method that considers both the coefficients and locations of frequency components. Formally, given a pre-trained weight matrix $W_0 \in \mathbb{R}^{p \times q}$, our objective is to fine-tune it on a specific dataset to obtain the fine-tuned weight matrix $W^{\prime} = W_0 + \Delta W = W_0 + \alpha \mathcal{F}^{-1}(\mathcal{S}(\bm{a}, \bm{l}, \bm{k}))$, where $\alpha$ is a scaling coefficient, $\bm{a}=\{a_i\}_{i=1}^{\mathcal{B}}$ represents the learnable coefficients, $\bm{l}=\{(l_i^1,l_i^2)\}_{i=1}^\mathcal{B}$ stores the component locations, $\bm{k}=\{0,1\}^{\mathcal{B}}$ indicates real (1) or imaginary (0) coefficients, $\mathcal{B}$ is the component budget, and $\mathcal{S}(\cdot)$ is an operator that scatters $\bm{a}$ onto a zero matrix according to $\bm{l}$ and $\bm{k}$.
   
   However, its practical implementation presents significant challenges, primarily due to the requirement for extensive discrete optimization of $\bm{l}$ and $\bm{k}$. This motivates our exploration of alternative formulations that balance the benefits of frequency-space adaptation with computational feasibility.
   
   \subsection{Inverse Discrete Cosine Transform-based Reparameterization} \label{sec:idct_para}
   Individually selecting learnable coefficients requires deciding whether to learn the real or imaginary part on each location in $\bm{l}$, which involves extensive discrete optimization of $\bm{k}$ in practical implementation. To address this issue, we introduce the discrete cosine transform (DCT). We prove that in this problem, individually selecting learnable coefficients on the Fourier spectrum is equivalent to selecting locations on the DCT spectrum, which involves only real-valued coefficients.
   
   \begin{theorem} \label{theorem2}
   Let $W \in \mathbb{R}^{K \times K} \sim G$ be a weight matrix where each element independently follows a standard normal distribution $\mathcal{N}(0, 1)$. Let $\mathcal{D}(\cdot)$ and $\mathcal{D}^{-1}(\cdot)$ denote the discrete cosine transform (DCT) and inverse DCT, respectively, and $\mathcal{F}(\cdot)$ denote the discrete Fourier transform. Define $F_D$ as the sparse matrix that preserves the $N_D$ coefficients with the largest absolute values on $\mathcal{D}(W)$ and sets others to 0. With $\hat{W}_D = \mathcal{D}^{-1}(F_D)$, and $L(\cdot, \cdot)$, $N_3$, $\hat{W}_{F}^{(3)}$ stated above, if $N_D = N_3$, then:
   $$E_{W \sim G} [L(W, \hat{W}_F^{(3)})] = E_{W \sim G} [L(W, \hat{W}_D)].$$
   \end{theorem}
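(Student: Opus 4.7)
The plan is to exhibit both $\hat{W}_D$ and $\hat{W}_F^{(3)}$ as optimal $N$-term approximations of $W$ in \emph{orthonormal} bases of $\mathbb{R}^{K\times K}$, and then invoke rotational invariance of the standard Gaussian to conclude that the two expected losses coincide.

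First I would verify the DCT side. Taken in its orthonormal (type-II) normalization, the 2D DCT acts on the vectorization of $W$ as the matrix $D\otimes D$ with $DD^\top=I_K$, hence is an orthogonal map. Thus the entries of $\mathcal{D}(W)$ are iid $\mathcal{N}(0,1)$ whenever $W\sim G$, and by Parseval,
\[
L(W,\hat{W}_D)=\|\mathcal{D}(W)-F_D\|_F^2=\sum_{(i,j)\notin\mathcal{T}_D}[\mathcal{D}(W)]_{ij}^2,
\]
where $\mathcal{T}_D$ indexes the $N_D$ entries with the largest absolute values. This is literally the best $N_D$-term approximation of $W$ in the DCT basis.

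Next I would build an orthonormal ``real-valued DFT'' basis of $\mathbb{R}^{K\times K}$ that corresponds to the parametrization of $\hat{W}_F^{(3)}$. Because $W$ is real, $\mathcal{F}(W)$ satisfies $F_{ij}=\overline{F_{-i,-j}}$, so the non-redundant half carries all $K^2$ real degrees of freedom. After rescaling $\operatorname{Re} F_{ij}$ and $\operatorname{Im} F_{ij}$ for generic (non-self-conjugate) $(i,j)$ by $\sqrt{2}$, and leaving the self-conjugate Re-only entries with scale $1$, the resulting $K^2$ real coordinates $\{c_k(W)\}$ are exactly the coefficients of $W$ in an orthonormal cosine/sine basis. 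Parseval then gives $\|W\|_F^2=\sum_k c_k(W)^2$, and for any conjugate-symmetric sparse $\tilde{F}$ obtained by retaining a subset $\mathcal{T}$ of these coordinates (and completing by conjugate symmetry before applying $\mathcal{F}^{-1}$),
\[
\|W-\mathcal{F}^{-1}(\tilde{F})\|_F^2=\sum_{k\notin\mathcal{T}}c_k(W)^2.
\]
The error-optimal choice of $\mathcal{T}$ with $|\mathcal{T}|=N_3$ keeps the $N_3$ largest $|c_k(W)|$, and this is exactly the selection rule defining $\hat{W}_F^{(3)}$ once the $\sqrt{2}$ rescaling is absorbed.

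Finally, since both $(D\otimes D)\,\mathrm{vec}(W)$ and the vector of rescaled real/imag parts $(c_k(W))$ are orthogonal linear images of the iid Gaussian $W$, both collections are iid $\mathcal{N}(0,1)$ of length $K^2$. The loss in either case is a sum of the squares of the smallest $K^2-N$ order statistics of $K^2$ iid $\chi^2_1$ random variables. Setting $N_D=N_3$ therefore yields $\mathbb{E}[L(W,\hat{W}_D)]=\mathbb{E}[L(W,\hat{W}_F^{(3)})]$. The main obstacle is the bookkeeping in the middle step: pinning down the $\sqrt{2}$ scaling factors for the non-self-conjugate pairs so that the real/imag parts truly form an orthonormal basis, and verifying that the ``top $N_3$ absolute values'' selection in the definition of $\hat{W}_F^{(3)}$ agrees with the error-optimal selection in this rescaled basis; once that identification is made, the remaining argument reduces to an elementary order-statistic comparison.
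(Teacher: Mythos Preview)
Your proposal is correct and follows essentially the same route as the paper: both arguments reduce each expected loss to the sum of the bottom $K^2-N$ order statistics of $K^2$ i.i.d.\ $\chi_1^2$ variables, using orthogonality of the DCT on one side and the real/imaginary decomposition of the DFT (with the $\sqrt{2}$ rescaling for non-self-conjugate frequencies) on the other. The paper carries out the Fourier side by explicit block computations of $\Sigma_R$ and $\Sigma_I$ inside the proof of Theorem~\ref{theorem_1} and then cites that reformulation; your packaging of the same fact as ``coordinates in an orthonormal real-DFT basis, hence an orthogonal image of the standard Gaussian'' is a cleaner way to reach the identical conclusion, and the bookkeeping concern you flag (that the top-$N_3$ selection in the definition of $\hat W_F^{(3)}$ must match the error-optimal choice in the rescaled coordinates) is exactly the place where the paper's $F^S$ scaling enters.
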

   Theorem \ref{theorem2} guides us towards a more efficient alternative by utilizing the iDCT instead of the iDFT. By reparameterizing $\Delta W$ using iDCT, We can maintain the equivalent expressivity while avoiding the optimization of $\bm{k}$. This is because DCT operates in the real domain, which simplifies computations and reduces the complexity of parameter selection. It is known that iDCT is essentially a linear transformation \citep{ahmed1974discrete}. We can express the reparameterization based on 2D iDCT by
   \begin{equation} \label{eq:original_dct}
   W^{\prime} = W_0 + \Delta W = W_0 + \alpha [C^{T} \mathcal{S}(\bm{a}, \bm{l}, \bm{1}) D],
   \end{equation}
   where $C \in \mathbb{R}^{p \times p}$, $D \in \mathbb{R}^{q \times q}$ are the DCT matrices. The elements of $C$ are defined as:
   \begin{equation}
   C_{ij} = \sqrt{\frac{2}{p}} \cdot k_i \cdot \cos\left(\frac{\pi(2j+1)i}{2p}\right),
   \text{where}~
   k_i = \begin{cases}
   \frac{1}{\sqrt{2}}, & \text{if } i = 0 \\
   1, & \text{if } i > 0.
   \end{cases}
   \end{equation}
   The formulation is similar for $D$. In practice, when $\mathcal{S}(\bm{a}, \bm{l}, \bm{1})$ is highly sparse, we can further simplify the computation by
   % \begin{equation}
   % $
   % \Delta W = \alpha[C^T \mathcal{S}(\bm{a},\bm{l},\bm{1}) D] = \alpha \sum_{i=1}^\mathcal{B} a_i \mathbf{c}{l^1_i} \mathbf{d}{l^2_i}^T
   % $
   % \end{equation}
   % where $\mathbf{c}{l^1_i}$ is the $l^1_i$-th column of $C^T$, and $\mathbf{d}{l^2_i}$ is the $l^2_i$-th row of $D$. This simplification reduces the computation complexity of iDCT from $O(p^2q^2)$ to $O(\mathcal{B}pq)$. In contrast, when a large number of frequency components are needed, it is recommended to use the fast DCT algorithm with an asymptotic complexity of $O(log(pq)pq)$. A
   % detailed discussion of computation complexity can be found in Appendix \ref{sec:computation}.
   $
   \Delta W = \alpha[C^T \mathcal{S}(\bm{a},\bm{l},\bm{1}) D] = \alpha \sum_{i=1}^\mathcal{B} a_i C_{l^1_i\cdot}^T D_{l^2_i\cdot}
   $,
   where $C_{l^1_i\cdot}$ is the $l^1_i$-th row of $C$, and $D_{l^2_i\cdot}$ is the $l^2_i$-th row of $D$. This simplification reduces the computation complexity of iDCT from $O(p^2q^2)$ to $O(\mathcal{B}pq)$. In contrast, when more frequency components are needed, it is recommended to use the fast DCT algorithm with an asymptotic complexity of $O(log(pq)pq)$. A
   detailed discussion of computation complexity can be found in Appendix \ref{sec:computation}. Noting that we can pre-generate $C$ and $D$ with only one global copy, which does not consume additional memory usage.
   
   \subsection{Estimating location gradient using finite-difference approximation} \label{sec:gradient_estimate}
   While the coefficients $\bm{a}$ can be directly optimized through backpropagation, the operation $\mathcal{S}(\cdot)$ does not produce gradients with respect to the locations $\bm{l}$. Furthermore, $\bm{l}$ needs to be treated as a discrete variable, which prevents us from directly learning the locations through backpropagation.
   
   To address this issue, we draw inspiration from the straight-through estimator (STE) \citep{bengio2013estimating}, a technique that allows gradient-based optimization of neural networks with discrete variables by using a surrogate gradient. However, unlike traditional STE that simply bypasses the gradient computation for discrete variables, e.g., the STE used in VQ-VAE \citep{van2017neural}, we estimate their gradients using the central difference approximation, as we elaborate below.
   
   \textbf{Forward Pass.}
   To enable gradient-based learning of location variables, we first redefine the locations $\bm{l}$ as continuous variables. During the forward pass, we discretize $\bm{l}$ by $\hat{\bm{l}} = round(\bm{l}) = \{(\hat{l}_i^1,\hat{l}_i^2)\}_{i=1}^\mathcal{B}$, where $round(\cdot)$ maps each element of $\bm{l}$ to its nearest integer.
   
   \textbf{Backward Pass.}
   During the backward propagation, we estimate the gradient of the loss function $\mathcal{L}$ to each element in $\bm{l}$. For clarity, we take $l_n^1$ and $a_n$ as an example. 
   The location gradient is
   \begin{equation} \label{eq:position_gradient}
   \frac{\partial \mathcal{L}}{\partial l_n^1} = \sum_{i=1}^{p} \sum_{j=1}^{q} \frac{\partial \mathcal{L}}{\partial \Delta W_{ij}} \frac{\partial \Delta W_{ij}}{\partial l_n^1} = tr[(\frac{\partial \mathcal{L}}{\partial \Delta W})^T (\frac{\partial \Delta W}{\partial l_n^1})].
   \end{equation}
   Here, ${\partial \mathcal{L}} /{\partial \Delta W}$ can be obtained directly through backpropagation. The tricky part is how to estimate $\partial \Delta W / \partial l_n^1$. In this work, we choose to use central difference approximation, i.e.,
   \begin{equation}
   \frac{\partial \Delta W}{\partial l_n^1} = \frac{\alpha C^T [\mathcal{S}(a_n, (\hat{l_n^1} + 1, \hat{l_n^2}), 1)-\mathcal{S}(a_n, (\hat{l_n^1} - 1, \hat{l_n^2}), 1)] D}{2}.
   \end{equation}
   For simplicity, we denote $\mathcal{S}(a_n, (\hat{l_n^1} + 1, \hat{l_n^2}), 1)-\mathcal{S}(a_n, (\hat{l_n^1} - 1, \hat{l_n^2}), 1)$ as $\Delta \mathcal{S}$, then Eq. (\ref{eq:position_gradient}) becomes
   \begin{equation}\label{eq:position_gradient2}
   \frac{\partial \mathcal{L}}{\partial l_n^1} = \frac{\alpha}{2} tr[(\frac{\partial \mathcal{L}}{\partial \Delta W})^T  C^T \Delta \mathcal{S} D] = \frac{\alpha}{2} tr[\underbrace{D (\frac{\partial \mathcal{L}}{\partial \Delta W})^T  C^T}_{DCT} \Delta \mathcal{S}].
   \end{equation}
   Eq. (\ref{eq:position_gradient2}) demonstrates that the gradient estimate for $l_n^1$ can be obtained by first applying a DCT to $(\partial \mathcal{L}/ \partial \Delta W)^T$ (we denote the resulting matrix as $Z$), and then multiplying it with $\Delta S$. Note that $\Delta S$ is a matrix with non-zero elements only at locations $(\hat{l_n^1} - 1, \hat{l_n^2})$ and $(\hat{l_n^1} + 1, \hat{l_n^2})$. Therefore, the result of Eq. (\ref{eq:position_gradient2}) can be simplified as $\alpha a_n(Z_{\hat{l_n^2}, \hat{l_n^1} + 1} - Z_{\hat{l_n^2}, \hat{l_n^1} - 1})/2$. Since $Z$ can be reused for computing gradients for all locations $\bm{l}$ and coefficients $\bm{a}$ (the gradient to $\bm{a}$ can also be obtained from $Z$), Eq. (\ref{eq:position_gradient2}) introduces almost no additional computational burden (see Appendix \ref{sec:computation_grad}). 

% \begin{algorithm}[t]
% \caption{SSD Fine-tuning}
% \begin{algorithmic}[1]
% \Require Pre-trained weight $W_0$, dataset $\mathcal{D}$, hyperparameters $\mathcal{B}_s, \mathcal{B}_a, \mathcal{B}_l$, total training iterations $T$
% \Ensure Fine-tuned weight $W'$
% \State Initialize $\bm{a} \leftarrow 0$ and initialize $\bm{l}$ randomly
% \For{$t = 1$ to $T$}
%     \State Sample a mini-batch from $\mathcal{D}$ and compute the loss $\mathcal{L}$
%     \If{$t \leq B_s$}
%         \If{$t \bmod (B_a + B_l) < B_a$}
%             \State Update $a$ by $a \leftarrow a - \eta_a \nabla_a L$
%         \Else
%             \State Update $l$: $l \leftarrow l - \eta_l \frac{\partial L}{\partial l}$ (using central difference)
%         \EndIf
%     \Else
%         \State Update $a$: $a \leftarrow a - \eta_a \nabla_a L$
%     \EndIf
% \EndFor
% \State \Return $W' = W_0 + \alpha[C^T S(a, l, 1)D]$
% \end{algorithmic}
% \end{algorithm}

   \subsection{Alternating Optimization Strategy}
   To effectively optimize both the coefficients $\bm{a}$ and locations $\bm{l}$, we implement an alternating optimization scheme inspired by coordinate ascent methods \citep{wright2015coordinate}, which have shown remarkable efficacy in tackling multi-variable optimization problems. Specifically, we initially train the coefficients $\bm{a}$ for $B_a$ steps while maintaining fixed locations $\bm{l}$. Subsequently, we fix $\bm{a}$ and optimize the locations $\bm{l}$ for $B_l$ steps. This alternating process continues for totally $\mathcal{B}_s$ iterations. After that, we only optimize the coefficients $\bm{a}$ until convergence. This strategy facilitates an efficient exploration of the frequency domain while progressively refining the selected components in the early training state, while focusing on the coefficients of the identified important frequency components in the remaining stage. A detailed training procedure can be found in Appendix \ref{sec:algorithm}.

   \section{Experiments}
   We mainly evaluate LoCA across four domains: natural language understanding (NLU), \textcolor{black}{natural language generation (NLG)}, instruction tuning, and computer vision. For NLU tasks, we fine-tune RoBERTa models on the GLUE benchmark \citep{wang2018glue}. For NLG, we fine-tune GPT-2 (medium/large) on  E2E NLG Challenge. For instruction tuning, we fine-tune LLaMA-family models on the Alpaca-52K dataset \citep{taori2023stanford} and evaluate them on the MT-Bench \citep{zheng2024judging} and Vicuna \citep{chiang2023vicuna} datasets. 
   %We also compare the instruction tuning performance on the MathInstruct Benchmark \cite{yue2023mammoth}. 
   For vision tasks, we fine-tune Vision Transformer (ViT) models on 8 classification datasets. More experiments can be found in Appendix.

   % $$
   % \frac{\partial \mathcal{L}}{\partial l_i^1} = \frac{\partial \mathcal{L}}{\partial \Delta W} \cdot \frac{\partial \Delta W}{\partial \bm{l}} = \frac{\partial \mathcal{L}}{\partial \Delta W} \cdot \frac{\alpha C^T [\mathcal{S}(a_i, l_i^1 + 1, l_i^2)-\mathcal{S}(a_i, l_i^1 - 1, l_i^2)] D}{2}
   % $$  
   {\bf Implementation Details.} We implement our method using the PyTorch framework. Our code is built on the PEFT library \citep{peft} from Huggingface, and all pre-trained models are sourced from Huggingface's Transformers library \citep{transformers}. For the alternating optimization, we used $\mathcal{B}_a$ = 10 and $\mathcal{B}_l$ = 20. The coefficients $\bm{a}$ are initialized to be zeros and the locations $\bm{l}$ are randomly initialized with a uniform distribution. We scale $\bm{l}$ to the range [0, 1] for optimization. All PEFT experiments are conducted on a single NVIDIA Tesla H100 GPU.
   Noting that while LoCA initially optimizes both $\bm{a}$ and $\bm{l}$, the locations are fixed after $\mathcal{B}_s$ iterations. Therefore, the reported number of trainable parameters only includes the final coefficient parameters.
   
   \textcolor{black}{{\bf Baseline Methods.} We compare our LoCA with \textit{Full fine-tuning} (FF), \textit{BitFit} \citep{zaken2021bitfit}, \textit{Adapter-based methods} \citep{houlsby2019parameter}, \textit{LoRA} \citep{hu2021lora}, \textit{AdaLoRA} \citep{zhang2023adalora}, \textit{VeRA} \citep{kopiczko2023vera} , \textit{DoRA} \citep{liu2024dora} and \textit{FourierFT} \citep{gao2024parameter}}.

   \begin{table}[t]
   \centering
   \caption{Fine-tuning results with RoBERTa-base/large on the GLUE benchmark. We report the overall accuracy (matched and mismatched) for MNLI, Matthew's correlation coefficient (MCC) for CoLA and use the Pearson correlation coefficient (PCC) for STS-B. Accuracy (Acc.) is reported for all other tasks. $\dagger, \ddagger, *$ denote values from prior works. 
   %Values in parentheses indicate performance when the parameter budget matches LoRA. 
   Best results are shown in \textbf{bold}.}
   \vspace{-5pt}
   \label{tab:glue_results}
   \resizebox{0.9\linewidth}{!}{
   \begin{tabular}{c|l|r|ccccccccc}
   \toprule
   {\bf Model} & {\bf FT Method}  &  {\bf Param.}   & {\begin{tabular}[c]{@{}c@{}} {\bf CoLA} \\ MCC\end{tabular}} & {\begin{tabular}[c]{@{}c@{}} {\bf MNLI} \\ Acc\end{tabular}} & {\begin{tabular}[c]{@{}c@{}} {\bf MRPC} \\ Acc\end{tabular}} & {\begin{tabular}[c]{@{}c@{}} {\bf QNLI} \\ Acc\end{tabular}} & {\begin{tabular}[c]{@{}c@{}} {\bf QQP} \\ Acc\end{tabular}}  & {\begin{tabular}[c]{@{}c@{}} {\bf RTE} \\ Acc\end{tabular}}  & {\begin{tabular}[c]{@{}c@{}} {\bf SST-2} \\ Acc\end{tabular}} & {\begin{tabular}[c]{@{}c@{}} {\bf STS-B} \\ PCC\end{tabular}} & {\begin{tabular}[c]{@{}c@{}} {\bf All} \\ Avg.\end{tabular}}    \\ \midrule \midrule
         & FT $\ddagger$  & 125M        & 63.6 & {\bf 87.6} & 90.2 & 92.8 & {\bf 91.9} & 78.7 & 94.8 & 91.2  & {\bf 86.4}    \\
   \multirow{6}{*}{\rotatebox[origin=c]{90}{RoBERTa-base}}  & BitFit $\ddagger$ & 0.1M       & 62.0   & 84.7 & {\bf 92.7} & 91.8 & 84.0 & {\bf 81.5} & 93.7 & 90.8  & 85.2  \\
         & Adapter\textsuperscript{D} $\ddagger$  & 0.9M   & 62.6 & 87.3 & 88.4 & 93.0   & 90.6 & 75.9 & 94.7 & 90.3  & 85.4   \\
         & LoRA   & 0.3M  & 62.8 & 86.6 & 89.7 & 93.3 & 90.8 & 79.3 & 94.9 & 91.4  & 86.1  \\
         & AdaLoRA  & 0.3M & 63.0   & 86.8  & 90.2 & {\bf 93.4} & 90.9 & 80.4   & 94.6 & 90.9  & 86.3 \\
         & DoRA &  0.31M   & 63.5 & 87.0   & 90.2 & 93.1 & 91.4 & 78.6 & {\bf 95.2} & {\bf 91.5}  & 86.3  \\
         & \textcolor{black}{VeRA} $\dagger$ & \textcolor{black}{0.043M}   & \textcolor{black}{{\bf 65.6}} & \textcolor{black}{85.1}   & \textcolor{black}{89.5} & \textcolor{black}{91.8} & \textcolor{black}{89.6} & \textcolor{black}{78.7} & \textcolor{black}{94.6} & \textcolor{black}{90.7}  & \textcolor{black}{85.7}  \\
         & FourierFT $^*$ & 0.024M  & 63.8 & 84.9 & 90.0   & 92.2 & 88.2 & 79.1 & 94.2 & 90.8  & 85.4  \\
         & {\bf LoCA}    &  0.024M      & 64.5 & 85.2 & 90.5 & 92.0 & 88.7 & {\bf 81.5} & 94.6 & 90.9  & 86.0 \\ \midrule
   \multirow{7}{*}{\rotatebox[origin=c]{90}{RoBERTa-large}} & FT $\ddagger$  & 355M          & 68.0   & 90.2 & 90.9 & 94.7 & {\bf 92.2}   & 86.6 & {\bf 96.4} & 92.4  & {\bf 88.9}    \\
         & Adapter\textsuperscript{H} $\ddagger$  & 6M  & 66.5 & 89.9 & 88.7 & 94.7 & 92.1   & 83.4 & 96.2 & 91.0    & 87.8  \\
         & LoRA   &   0.8M    & 68.4 & 90.5 & 90.2 & 94.4 & 91.6 & 85.7 & 96.2 & 92.4  & 88.7   \\
         & AdaLoRA  & 0.8M    & 67.9 & {\bf 90.6} & 90.6 & 94.2 & 91.6 & 86.4 & 95.9 & {\bf 92.7}  & 88.7   \\
         & DoRA   & 0.83M  & 68.3 & 90.5 & 90.7 & {\bf 94.8} & 91.8 & 85.4 & 96.3 & 92.4  & 88.8 \\
         & \textcolor{black}{VeRA} $\dagger$  & \textcolor{black}{0.061M}  & \textcolor{black}{68.0} & \textcolor{black}{90.2} & \textcolor{black}{90.9} & \textcolor{black}{94.4} & \textcolor{black}{90.3} & \textcolor{black}{85.9} & \textcolor{black}{96.1} & \textcolor{black}{91.7}  & \textcolor{black}{88.4} \\
         & FourierFT $^*$  & 0.048M & 67.1 & 88.9 & 90.9 & 94.4 & 89.2 & 87.4 & 96.0   & 91.9  & 88.2   \\
         & {\bf LoCA}    & 0.048M       & {\bf 68.8} & 89.4 & {\bf 91.0} & 94.4 & 90.0 & {\bf 87.9} & {\bf 96.4} & 92.0  & 88.7    \\ \bottomrule
   \end{tabular}}
   \vspace{-15pt}
   \end{table}

    \subsection{Natural Language Understanding} \label{sec:NLU}
    We evaluate our method on NLU tasks using the GLUE benchmark \citep{wang2018glue}, which consists of diverse tasks that cover various aspects of language understanding, including single-sentence classification, similarity and paraphrase, and inference task. For our experiments, we fine-tune RoBERTa-base and RoBERTa-large models \citep{liu2019roberta} on 8 GLUE tasks using different adaptation methods. Following \citet{zhang2023adalora, gao2024parameter}, we report the best results on the validation set for each task. Mean results are reported after 3 runs with different random seeds.

    \textbf{Implementation Details.} For LoRA and its variants, we use a rank $r=8$ and a scaling value $\alpha=8$. To maintain consistency with FourierFT, we set the number of frequency components $\mathcal{B}$ to 1000 for both frequency-domain methods, resulting in significantly less parameters compared to low-rank decomposition methods. Since FourierFT does not report results for the MNLI and QQP tasks, we obtained these results by our own runs with tuned hyperparameters. Following the settings in \citet{hu2021lora,gao2024parameter}, all low-rank decomposition methods and frequency-domain decomposition methods are applied only to the {\it query} and {\it value} matrices, and the best performance on the validation set for each run is recorded. Detailed hyperparameters can be found in Table \ref{tab:glue_hyper}.

    \textbf{Experimental Results.}
    Table \ref{tab:glue_results} presents the results for RoBERTa-base and RoBERTa-large models. Our LoCA achieves competitive average scores of 86.0 and 88.7 respectively, approaching cutting-edge performance while using significantly fewer parameters. LoCA consistently outperforms FourierFT across most tasks despite the same parameter budget, and shows comparable or superior results to LoRA-family methods on several tasks. Notably, LoCA achieves the highest scores on CoLA for both model sizes, surpassing even FF. For challenging tasks (e.g., QQP), we will show in Section \ref{sec:analytical} that if we appropriately increase the parameter budget, the performance of LoCA will improve significantly, which eventually surpasses LoRA with the same parameter budget.

    \subsection{Natural Language Generation} \label{sec:nlg}
    \textcolor{black}{We evaluate LoCA on the E2E NLG Challenge dataset \citep{novikova2017e2e}, a widely-used benchmark for data-to-text generation. The dataset consists of over 50K samples in the restaurant domain, with each input being a set of slot-value pairs and the corresponding output being a natural language description. We conduct experiments on both GPT-2 medium and GPT-2 large.}

    \textcolor{black}{\textbf{Implementation Details.} Following \citet{hu2021lora}, we train our models using AdamW optimizer with a linear learning rate decay schedule for 5 epochs. We set the batch size to 32 and use a label} 
    \begin{wraptable}{r}{0.61\textwidth}
    \centering
    \small
    \vspace{-2mm}
    \caption {\textcolor{black}{Results of tuning GPT-2 Medium/Large models on the E2E benchmark. Higher values indicate better performance for all metrics. $\dagger, \ddagger, *$ denote values from prior works.}}
    \vspace{-5pt}
    \label{tab:NLG}
    \resizebox{\linewidth}{!}{
    \begin{tabular}{@{}c|l|r|ccccc@{}}
    \toprule
    {\bf Model} & {\bf FT Method} & \multicolumn{1}{c|}{\begin{tabular}[c]{@{}c@{}}{\bf Param.}\end{tabular}} & {\bf BLEU} & {\bf NIST} & {\bf METEOR} & {\bf ROUGE-L} & {\bf CIDEr} \\ \midrule
    \multirow{6}{*}{\rotatebox{90}{\begin{tabular}[c]{@{}c@{}} GPT-2\\ Medium\end{tabular}}} & FF*        & 354.92M & 68.2 & 8.62 & 46.2 & 71.0 & 2.47 \\
    % & \multicolumn{1}{l|}{$\text{Adpt}^{\text{L}}$*}         & \multicolumn{1}{r|}{0.37M}   & 66.3 & 8.41 & 45.0 & 69.8 & 2.40 \\
    & \multicolumn{1}{l|}{$\text{Adpt}^{\text{L}}$*}         & \multicolumn{1}{r|}{11.09M}  & 68.9 & 8.71 & 46.1 & 71.3 & 2.47 \\
    & \multicolumn{1}{l|}{$\text{Adpt}^{\text{H}}$*}         & \multicolumn{1}{r|}{11.09M}  & 67.3\textsubscript{$\pm$.6} & 8.5\textsubscript{$\pm$.07} & 46.0\textsubscript{$\pm$.2} & 70.7\textsubscript{$\pm$.2} & 2.44\textsubscript{$\pm$.01} \\
    & \multicolumn{1}{l|}{LoRA $\ddagger$}     & \multicolumn{1}{r|}{0.35M}   & 68.9\textsubscript{$\pm$.3} & 8.76\textsubscript{$\pm$.06} & 46.6\textsubscript{$\pm$.1} & 71.5\textsubscript{$\pm$.1} & \textbf{2.53}\textsubscript{$\pm$.03} \\
    & \multicolumn{1}{l|}{VeRA $\dagger$}     & \multicolumn{1}{r|}{0.098M}   & {\bf 70.1} & 8.81 & 46.6 & 71.5 & 2.50 \\
    & \multicolumn{1}{l|}{FourierFT $\ddagger$} & \multicolumn{1}{r|}{0.048M}   & 69.1\textsubscript{$\pm$.1}  & 8.82 \textsubscript{$\pm$.05} & \textbf{47.0} \textsubscript{$\pm$.3} & 71.8 \textsubscript{$\pm$.1} & 2.51\textsubscript{$\pm$.02} \\
    & \multicolumn{1}{l|}{\textbf{LoCA}} & \multicolumn{1}{r|}{0.048M} & 69.7 \textsubscript{$\pm$.2} & \textbf{8.85} \textsubscript{$\pm$.04} & 46.6 \textsubscript{$\pm$.2} & \textbf{72.1} \textsubscript{$\pm$.3}  & 2.52 \textsubscript{$\pm$.06} \\
    \midrule
    \multirow{5}{*}{\rotatebox{90}{\begin{tabular}[c]{@{}c@{}} GPT-2\\ Large\end{tabular}}} & 
    \multicolumn{1}{l|}{FF*}        & \multicolumn{1}{r|}{774.03M} & 68.5 & 8.78 & 46.0 & 69.9 & 2.45 \\
    % & \multicolumn{1}{l|}{$\text{Adpt}^{\text{L}}$*}         & \multicolumn{1}{r|}{0.88M}   & 69.1\textsubscript{$\pm$.1} & 8.68\textsubscript{$\pm$.03} & 46.3\textsubscript{$\pm$.0} & 71.4\textsubscript{$\pm$.2} & 2.49\textsubscript{$\pm$.0} \\
    & \multicolumn{1}{l|}{$\text{Adpt}^{\text{L}}$*}         & \multicolumn{1}{r|}{23.00M}  & 68.9\textsubscript{$\pm$.3} & 8.70\textsubscript{$\pm$.04} & 46.1\textsubscript{$\pm$.1} & 71.3\textsubscript{$\pm$.2} & 2.45\textsubscript{$\pm$.02} \\
    & \multicolumn{1}{l|}{LoRA $\ddagger$}     & \multicolumn{1}{r|}{0.77M}   & 70.1\textsubscript{$\pm$.3} & 8.83\textsubscript{$\pm$.02} & 46.8\textsubscript{$\pm$.2} & 72.0\textsubscript{$\pm$.3} & 2.47\textsubscript{$\pm$.02} \\
    & \multicolumn{1}{l|}{VeRA $\dagger$}     & \multicolumn{1}{r|}{0.17M}   & 70.3 & 8.85 & 46.9 & 71.6 & {\bf 2.54} \\
    & \multicolumn{1}{l|}{FourierFT $\ddagger$} & \multicolumn{1}{r|}{0.072M} & 70.2\textsubscript{$\pm$.2} & \textbf{8.90}\textsubscript{$\pm$.02} & 47.0\textsubscript{$\pm$.2} & 71.8\textsubscript{$\pm$.1} &  2.50 \textsubscript{$\pm$.02} \\ 
    & \multicolumn{1}{l|}{\textbf{LoCA}} & \multicolumn{1}{r|}{0.072M} & \textbf{70.4} \textsubscript{$\pm$.2}  & 8.88 \textsubscript{$\pm$.05} & \textbf{47.2} \textsubscript{$\pm$.02} & {\bf 72.1} \textsubscript{$\pm$.2} & \textbf{2.54} \textsubscript{$\pm$.02} \\
    \bottomrule
    \end{tabular}}
    \vspace{-5pt}
    \end{wraptable}    
    \textcolor{black}{smoothing factor of 0.1. We only adapt the {\it query} and {\it value} matrices, with 1000 frequency components for both LoCA and FourierFT. See Table \ref{tab:e2e_hyper} for more details.}

    \textcolor{black}{\textbf{Experimental Results.} Table \ref{tab:NLG} shows that LoCA achieves superior performance compared to previous PEFT methods including FourierFT and LoRA across multiple metrics. Specifically, when using GPT-2 large as the base model, LoCA outperforms others on BLEU, METEOR and ROUGE-L scores.}
    
    \subsection{Instruction Tuning}
    %Due to limited space, we only show the FT results of various LLaMA-family models \citep{touvron2023llama,touvron2023llama2} using the Alpaca-52K dataset \citep{taori2023stanford} in this section. 
    We fine-tune various LLaMA-family models \citep{touvron2023llama,touvron2023llama2} using the Alpaca-52K dataset \citep{taori2023stanford}.
    The Alpaca-52K dataset, derived from the self-instruct technique, provides a diverse set of instruction-following examples. In this experiment, we mainly compare our method with FF, LoRA and FourierFT. 
    After fine-tuning, we evaluate the model on the MT-Bench \citep{zheng2024judging} and Vicuna \citep{chiang2023vicuna} datasets, which offer challenging multi-turn and open-ended scenarios for LLM evaluation. We employed GPT-4 to assign scores on a scale of 1-10 based on the quality, relevance, and coherence of the responses. 
    % Experiments on MathInstruct Benchmark can be found in Appendix \ref{}.
    
    {\bf Implementation Details.} We apply all PEFT methods to the {\it query} and {\it value} matrices. For 
    \begin{wraptable}{r}{0.5\textwidth}
    \centering
    \small
    \vspace{-2mm}
    \caption {Evaluation results for fine-tuned LLaMA-family models on MT-Bench and Vicuna datasets, using GPT-4 as the judge with a 1-10 scoring scale. Bold and underlined values indicate the best and second best results, respectively.}
    \label{tab:Instruction}
    \resizebox{0.95\linewidth}{!}{
    \begin{tabular}{l|l|l|cc}
    \toprule
    {\bf Model}                & {\bf FT Method} & {\bf Param.} & {\bf MT-Bench} & {\bf Vicuna} \\ \midrule \midrule
    \multirow{4}{*}{LLaMA1-7b}                  & FF      &    6.8B    & 4.46     &   \underline{7.24}     \\
                               & LoRA      &  33.5M    &    {\bf 4.52}      &  {\bf 7.52}      \\
                               & FourierFT &   9.6M    &   4.33   &   6.97    \\
                               & {\bf LoCA}       &   9.6M     &  \underline{4.47}    &  7.18 \\ \midrule
    \multirow{4}{*}{LLaMA1-13b} & FF        &  13B  &  4.78  &    7.68    \\
                               & LoRA      &   52.4M     &   {\bf 4.87}       &   \underline{7.82}     \\
                               & FourierFT &    12M    &    4.70      &   7.61    \\ 
                               & {\bf LoCA}       &   12M    &   \underline{4.83}    &   {\bf 7.85}    \\ \midrule
    \multirow{4}{*}{LLaMA2-7b} & FF        &    6.8B    &    {\bf 4.94}      & {\bf 7.81}       \\
                               & LoRA      &   33.5M   &   4.67       &    7.68    \\
                               & FourierFT &    9.6M    &   4.65     &   7.62    \\
                               & {\bf LoCA}       &    9.6M    &    \underline{4.82}      &   \underline{7.78}      \\
                               \midrule
    \multirow{4}{*}{LLaMA2-13b} & FF     &   13B     &    {\bf 5.55}   &   {\bf 8.13}     \\
                               & LoRA      &   52.4M     &     5.48     & 8.03        \\
                               & FourierFT &   12M     &    5.37      &   7.95    \\ 
                               & {\bf LoCA}       &   12M     &   \underline{5.52}    &   \underline{8.11}     \\
                               \bottomrule
    \end{tabular}
    } 
    \end{wraptable}
    LoRA, we set the rank $r$ to 64 and the scaling value $\alpha$ to 16. For FourierFT, we use 150K frequency components and tune other hyperparameters to ensure the optimal performance, since we cannot reproduce the results in \citet{gao2024parameter}.
    For LoCA, we also use 150K frequency components, and set the scaling value $\alpha$ to 1. We utilize the {\it LLM-as-a-Judge} repository \citep{zheng2024judging} for fair evaluation. We train LLaMA-1-7b/LLaMA-2-7b for 3 epochs and LLaMA-1-13b/LLaMA-2-13b for 1 epoch. Quantization \citep{dettmers2024qlora} is used for LLaMA-1-13b/LLaMA-2-13b to ensure feasible FT on a single GPU. Detailed hyperparameters can be found in Table \ref{tab:It_hyper}. \\[7pt]
    {\bf Experimental Results.} The results in Table \ref{tab:Instruction} demonstrate the competitive performance of our method across various LLaMA model sizes and architectures. Notably, LoCA consistently outperforms FourierFT and, in many scenarios, either approaches or surpasses the performance of LoRA, despite the latter utilizing a larger parameter budget. This underscores the superior efficiency of LoCA in parameter utilization and its effectiveness in acquiring task-specific knowledge.
    
    \subsection{Image Classification} \label{sec:Vision}
    We evaluate our method on computer vision tasks by conducting experiments on 8 image classification datasets, including OxfordPets \citep{parkhi2012cats}, StanfordCars \citep{krause20133d}, CIFAR10 \citep{krizhevsky2009learning}, DTD \citep{cimpoi2014describing}, EuroSAT \citep{helber2019eurosat}, FGVC \citep{maji2013fine}, RESISC45 \citep{cheng2017remote} and CIFAR100 \citep{krizhevsky2009learning}. We fine-tune ViT/16-base and ViT/16-large models \citep{dosovitskiy2020image}, both pre-trained on ImageNet-21k \citep{ridnik2021imagenet}. In this experiment, we compares LoCA against several baselines: Linear Probing (LP), FF, LoRA, and FourierFT. Noting that we encountered significant discrepancies when attempting to reproduce the results reported in \citet{gao2024parameter}, possibly due to the lack of detailed hyperparameter setup. To ensure a fair comparison, we re-run all methods using our own hyperparameter settings. All results are obtained after 5 random trials.

    {\bf Implementation Details.}  
    To ensure a fair comparison across all methods, the classification head is configured identically for all approaches. For LoRA, we  a rank of 16 and a scaling factor $\alpha$ of 16. \textcolor{black}{Following \citet{gao2024parameter}, FourierFT is implemented with 3000 and 10,000 frequency components and a scaling factor of 300. For our LoCA, we also evaluate 3000 and 10,000 frequency components for both base and large models.} The learning rates for all methods are carefully tuned to ensure good performance across different tasks and model sizes. We report the number of trainable parameters excluding the classification head to provide a clear comparison of parameter efficiency. Detailed hyperparameter configurations for all methods can be found in Table \ref{tab:vit_hyper}.

   {\bf Experimental Results.} The results are presented in Table \ref{tab:vit_results}. 
   Notably, LoCA achieves superior performance compared to FourierFT while using the same number of parameters. For instance, with ViT-Base, LoCA using 72K parameters outperforms FourierFT on most datasets, with obvious improvements on StanfordCars and FGVC. Furthermore, when increasing the parameter budget to 10,000 for LoCA, we observe performance comparable to LoRA across most tasks. These results demonstrate that LoCA achieves a favorable balance between parameter efficiency and performance.

\begin{table}[t]
   \centering
   \caption{Fine-tuning results on 8 image classification datasets with ViT-base and ViT-large models. For fair comparison, we report the accuracy (\%) and standard deviation after 10 epochs of training for all methods. Best results are shown in \textbf{bold}.}
   \label{tab:vit_results}
   \resizebox{\linewidth}{!}{
   \begin{tabular}{c|l|r|ccccccccc}
   \toprule
   {\bf Model} & {\bf FT Method}   & {\bf Param.} & {\bf OxfordPets}  & {\bf StanfordCars} & {\bf CIFAR10}     & {\bf DTD}   & {\bf EuroSAT}     & {\bf FGVC}        & {\bf RESISC45}    & {\bf CIFAR100}    & {\bf Avg.}     \\ \midrule \midrule
   \multirow{6}{*}{\rotatebox[origin=c]{90}{ViT-base}}
              & LP            & -                                        & 92.94\textsuperscript{\scriptsize{$\pm$0.12}} & 47.02\textsuperscript{\scriptsize{$\pm$0.23}}  & 96.82\textsuperscript{\scriptsize{$\pm$0.01}} & 76.47\textsuperscript{\scriptsize{$\pm$0.22}} & 94.78\textsuperscript{\scriptsize{$\pm$0.02}} & 29.21\textsuperscript{\scriptsize{$\pm$1.33}} & 86.13\textsuperscript{\scriptsize{$\pm$0.10}} & 86.05\textsuperscript{\scriptsize{$\pm$0.08}} & 76.18  \\
              & FF            & 85.8M                                    & 93.09\textsuperscript{\scriptsize{$\pm$0.11}} & \textbf{84.71}\textsuperscript{\scriptsize{$\pm$0.03}}  & \textbf{98.89}\textsuperscript{\scriptsize{$\pm$0.00}} & 77.37\textsuperscript{\scriptsize{$\pm$0.30}} & 98.91\textsuperscript{\scriptsize{$\pm$0.09}} & \textbf{63.83}\textsuperscript{\scriptsize{$\pm$1.13}} & \textbf{95.72}\textsuperscript{\scriptsize{$\pm$0.21}} & 90.72\textsuperscript{\scriptsize{$\pm$0.23}} & \textbf{87.91}   \\
              & LoRA          & 581K                                     & 93.26\textsuperscript{\scriptsize{$\pm$0.28}} & 82.12\textsuperscript{\scriptsize{$\pm$0.22}}  & 98.51\textsuperscript{\scriptsize{$\pm$0.07}} & 79.54\textsuperscript{\scriptsize{$\pm$0.72}} & 98.65\textsuperscript{\scriptsize{$\pm$0.06}} & 55.67\textsuperscript{\scriptsize{$\pm$1.24}} & 94.82\textsuperscript{\scriptsize{$\pm$0.45}} & 91.51\textsuperscript{\scriptsize{$\pm$0.12}} & 86.76    \\
              & FourierFT  & 72K                                      & 93.07\textsuperscript{\scriptsize{$\pm$0.34}} & 73.74\textsuperscript{\scriptsize{$\pm$0.13}}  & 98.64\textsuperscript{\scriptsize{$\pm$0.02}} & 77.72\textsuperscript{\scriptsize{$\pm$0.74}} & 98.32\textsuperscript{\scriptsize{$\pm$0.05}} & 48.24\textsuperscript{\scriptsize{$\pm$1.09}} & 92.89\textsuperscript{\scriptsize{$\pm$0.07}} & 91.23\textsuperscript{\scriptsize{$\pm$0.04}} & 84.23 \\
              & \textcolor{black}{{\bf LoCA }}     & \textcolor{black}{72K}                           & \textcolor{black}{93.36\textsuperscript{\scriptsize{$\pm$0.03}}} &   \textcolor{black}{77.78\textsuperscript{\scriptsize{$\pm$0.14}}} & 
              \textcolor{black}{98.66\textsuperscript{\scriptsize{$\pm$0.21}}} &
              \textcolor{black}{78.44\textsuperscript{\scriptsize{$\pm$0.31}}} &  
              \textcolor{black}{98.94\textsuperscript{\scriptsize{$\pm$0.06}}} &
              \textcolor{black}{53.23\textsuperscript{\scriptsize{$\pm$0.96}}} &
              \textcolor{black}{93.88\textsuperscript{\scriptsize{$\pm$0.20}}} &
              \textcolor{black}{91.40\textsuperscript{\scriptsize{$\pm$0.11}}} & \textcolor{black}{85.71} \\
              & \textcolor{black}{FourierFT}  & \textcolor{black}{239K}                                      & \textcolor{black}{93.44\textsuperscript{\scriptsize{$\pm$0.31}}} & \textcolor{black}{79.34\textsuperscript{\scriptsize{$\pm$0.14}}}  & \textcolor{black}{98.70\textsuperscript{\scriptsize{$\pm$0.08}}} & \textcolor{black}{79.43\textsuperscript{\scriptsize{$\pm$1.15}}} & \textcolor{black}{98.81\textsuperscript{\scriptsize{$\pm$0.05}}} & \textcolor{black}{52.26\textsuperscript{\scriptsize{$\pm$1.50}}} & \textcolor{black}{94.19\textsuperscript{\scriptsize{$\pm$0.06}}} & \textcolor{black}{91.60\textsuperscript{\scriptsize{$\pm$0.15}}} & 86.02 \\
              & \textcolor{black}{{\bf LoCA }}     & \textcolor{black}{239K}                                     & \textcolor{black}{\textbf{94.10}\textsuperscript{\scriptsize{$\pm$0.21}}} & \textcolor{black}{80.11\textsuperscript{\scriptsize{$\pm$0.58}}}  & \textcolor{black}{98.62\textsuperscript{\scriptsize{$\pm$0.21}}} & \textcolor{black}{\textbf{80.15}\textsuperscript{\scriptsize{$\pm$0.61}}} & \textcolor{black}{\textbf{99.04}\textsuperscript{\scriptsize{$\pm$0.08}}} & \textcolor{black}{54.86\textsuperscript{\scriptsize{$\pm$0.65}}} & \textcolor{black}{94.73\textsuperscript{\scriptsize{$\pm$0.18}}} & \textcolor{black}{\textbf{91.68\textsuperscript{\scriptsize{$\pm$0.43}}}} & \textcolor{black}{86.66} \\ \midrule
   \multirow{6}{*}{\rotatebox[origin=c]{90}{ViT-large}}
              & LP            & -                                        & 91.93\textsuperscript{\scriptsize{$\pm$0.21}} & 43.24\textsuperscript{\scriptsize{$\pm$0.30}}  & 97.78\textsuperscript{\scriptsize{$\pm$0.23}} & 72.52\textsuperscript{\scriptsize{$\pm$0.35}} & 93.76\textsuperscript{\scriptsize{$\pm$0.18}} & 26.55\textsuperscript{\scriptsize{$\pm$0.86}} & 83.52\textsuperscript{\scriptsize{$\pm$0.38}} & 88.73\textsuperscript{\scriptsize{$\pm$0.34}} & 74.75 \\
              & FF            & 303.3M                                    & 94.13\textsuperscript{\scriptsize{$\pm$0.12}} & 85.84\textsuperscript{\scriptsize{$\pm$0.17}}  & 99.22\textsuperscript{\scriptsize{$\pm$0.15}} & \textbf{81.64}\textsuperscript{\scriptsize{$\pm$0.29}} & 99.13\textsuperscript{\scriptsize{$\pm$0.07}} & 63.33\textsuperscript{\scriptsize{$\pm$0.37}} & {\bf 96.21}\textsuperscript{\scriptsize{$\pm$0.11}} & \textbf{94.67}\textsuperscript{\scriptsize{$\pm$0.09}} & \textbf{89.27} \\
              & LoRA          & 1.57M                                     & 94.34\textsuperscript{\scriptsize{$\pm$0.36}} & \textbf{85.92\textsuperscript{\scriptsize{$\pm$0.24}}}  & 98.93\textsuperscript{\scriptsize{$\pm$0.02}} & 79.90\textsuperscript{\scriptsize{$\pm$0.88}}  & 98.91\textsuperscript{\scriptsize{$\pm$0.07}} & \textbf{64.47}\textsuperscript{\scriptsize{$\pm$0.63}} & 95.63\textsuperscript{\scriptsize{$\pm$0.13}} & 92.37\textsuperscript{\scriptsize{$\pm$0.02}} & 88.81 \\
              & FourierFT  & 144K                                      & 94.52\textsuperscript{\scriptsize{$\pm$0.53}} & 75.35\textsuperscript{\scriptsize{$\pm$0.32}}  & {\bf 99.12\textsuperscript{\scriptsize{$\pm$0.42}}} & 79.78\textsuperscript{\scriptsize{$\pm$0.76}} & 98.79\textsuperscript{\scriptsize{$\pm$0.35}} & 48.32\textsuperscript{\scriptsize{$\pm$0.89}} & 94.18\textsuperscript{\scriptsize{$\pm$0.41}} & 93.01\textsuperscript{\scriptsize{$\pm$0.14}} & 85.38 \\
              & \textcolor{black}{{\bf LoCA }}     & \textcolor{black}{144K}                                 & \textcolor{black}{94.60\textsuperscript{\scriptsize{$\pm$0.03}}} & \textcolor{black}{82.04\textsuperscript{\scriptsize{$\pm$0.25}}}  & \textcolor{black}{98.92\textsuperscript{\scriptsize{$\pm$0.03}}} & \textcolor{black}{79.02\textsuperscript{\scriptsize{$\pm$0.18}}} & \textcolor{black}{98.97\textsuperscript{\scriptsize{$\pm$0.05}}} & \textcolor{black}{57.62\textsuperscript{\scriptsize{$\pm$0.02}}} & \textcolor{black}{94.41\textsuperscript{\scriptsize{$\pm$91.76}}} & \textcolor{black}{91.76\textsuperscript{\scriptsize{$\pm$0.09}}} &  \textcolor{black}{87.17} \\
              & \textcolor{black}{FourierFT}  & \textcolor{black}{480K}                                      & \textcolor{black}{{\bf 94.78}\textsuperscript{\scriptsize{$\pm$0.09}}} & \textcolor{black}{82.27\textsuperscript{\scriptsize{$\pm$0.30}}}  & \textcolor{black}{99.00\textsuperscript{\scriptsize{$\pm$0.08}}} & \textcolor{black}{79.03\textsuperscript{\scriptsize{$\pm$0.04}}} & \textcolor{black}{98.95\textsuperscript{\scriptsize{$\pm$0.10}}} & \textcolor{black}{56.96\textsuperscript{\scriptsize{$\pm$1.09}}} & \textcolor{black}{95.53\textsuperscript{\scriptsize{$\pm$0.03}}} & \textcolor{black}{92.56\textsuperscript{\scriptsize{$\pm$0.04}}} & \textcolor{black}{87.39} \\
              & \textcolor{black}{{\bf LoCA }}     & \textcolor{black}{480K}                                & \textcolor{black}{94.47\textsuperscript{\scriptsize{$\pm$0.82}}} & \textcolor{black}{83.47\textsuperscript{\scriptsize{$\pm$0.32}}}  & \textcolor{black}{99.02\textsuperscript{\scriptsize{$\pm$0.03}}} & \textcolor{black}{80.21\textsuperscript{\scriptsize{$\pm$0.66}}} & \textcolor{black}{\textbf{99.03}\textsuperscript{\scriptsize{$\pm$0.18}}} & \textcolor{black}{63.02\textsuperscript{\scriptsize{$\pm$0.61}}} & \textcolor{black}{95.49\textsuperscript{\scriptsize{$\pm$0.15}}} & \textcolor{black}{92.65\textsuperscript{\scriptsize{$\pm$0.22}}} & \textcolor{black}{88.42} \\ \bottomrule
   \end{tabular}}
   \vspace{-15pt}
   \end{table}

    \subsection{Analytical Experiments} \label{sec:analytical}
    {\bf Effectiveness of Gradient Estimation.} To validate the reliability of our estimated location gradients, we present the training process on 4 selected datasets in Fig. \ref{fig:training_process}. The left figure shows
    \begin{wrapfigure}{r}{0.5\linewidth}
    \vspace{-3pt}
    \centering
    \begin{minipage}[t]{0.485\linewidth}
    \centering
    \includegraphics[width=\linewidth]{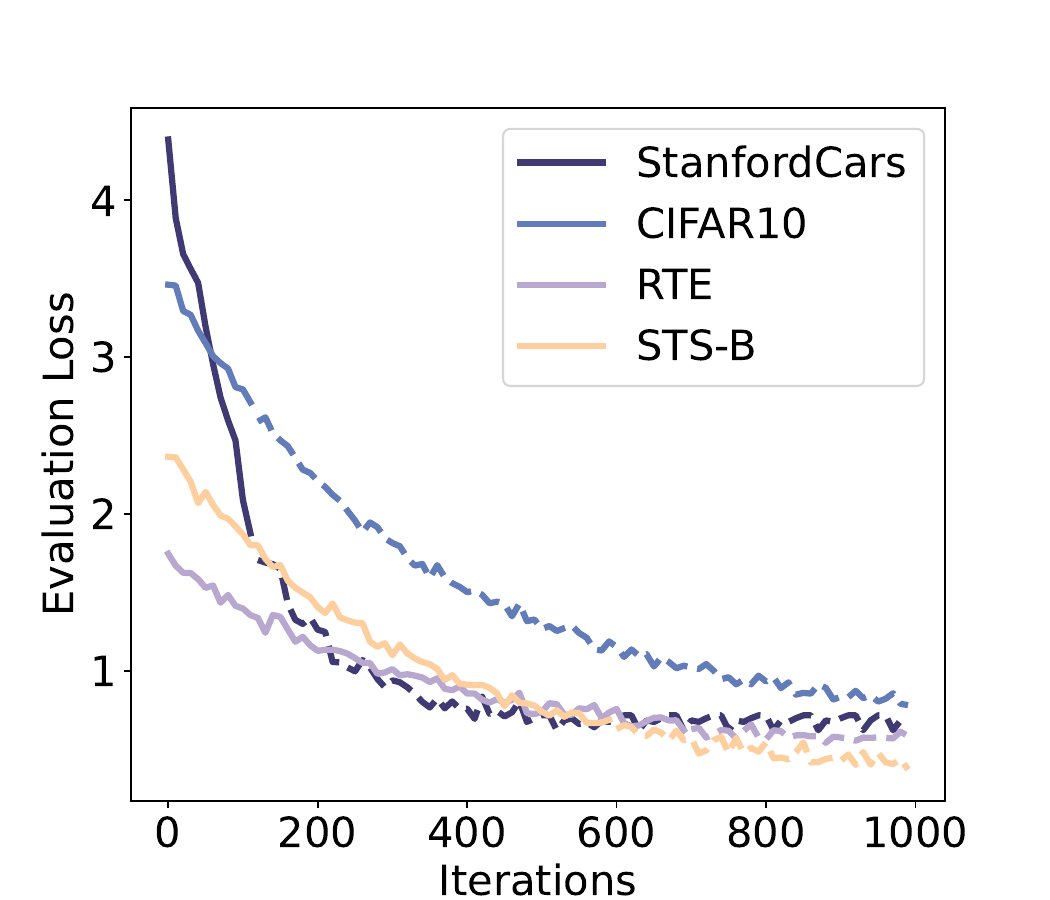}
    \end{minipage}%
    \begin{minipage}[t]{0.495\linewidth}
    \centering
    \includegraphics[width=\linewidth]{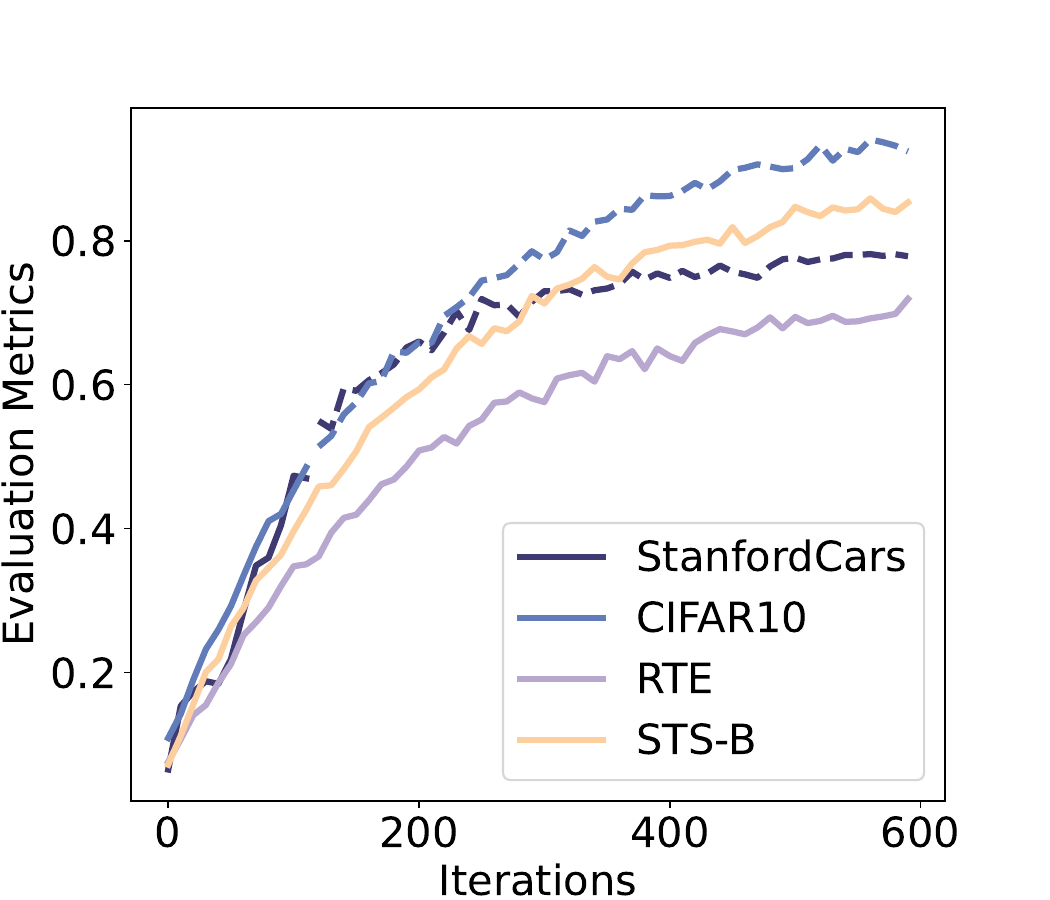}
    \end{minipage}%
    \centering
    \caption{Evaluation loss (left) and performance (right) of our method with RoBERTa-base and ViT-base models. We record every 10 steps. The solid lines represent alternating optimization of coefficients and locations, while the dashed lines represent optimizing coefficients only.}
    \label{fig:training_process}
    \vspace{-6pt}
    \end{wrapfigure}
 that during the alternating optimization phase, the validation loss generally decreases in most steps, particularly for StanfordCars and CIFAR10. The right figure demonstrates corresponding improvements in validation accuracy (or Pearson correlation). These trends indicate that our central difference approximation method effectively guides the optimization process, enabling successful updates to frequency component locations. We also conduct a toy experiment to show the convergence of the alternating optimization strategy in Appendix \ref{sec:toy_experiment}. \\[7pt]
    {\bf Performance under Different Parameter Budgets.} Fig. \ref{fig:budget} compares various methods under same parameter budgets. Here we focus on QQP and FHVC, which present significant challenges for LoRA. The parameter budget is standardized using LoRA's rank $r$ as the base unit. Our results reveal that FourierFT often underperforms LoRA when using fewer parameters. This observation aligns with expectations, as the locations of frequency components becomes increasingly critical under constrained parameter budgets. Notably, LoCA consistently outperforms LoRA and FourierFT across the tested scenarios.
    It is worth noting that our theoretical analysis centers on expected performance. While specific task structures may allow FourierFT to surpass LoRA in certain instances, these exceptions do not undermine our overall conclusions and analytical framework.

    {\bf Choice of Scaling value $\alpha$ and Alternating Optimization Steps $\mathcal{B}_s$.} Fig. \ref{fig:parameter_sen} demonstrates the impact of different choices of $\alpha$ and $\mathcal{B}_s$ on the MRPC task. We empirically find that a scaling value between 1-2 can achieve better results. Additionally, setting $\mathcal{B}_s$ to between 10\%-20\% of the total training steps is more appropriate (with a total of 5750 steps for the MRPC task).

{\bf Ablation Study of the Alternating Optimization Strategy.} 
    Table \ref{tab:ablation} compares several variants of our method:
{\bf V1} only optimizes coefficients with randomly initialized locations.
{\bf V2} alternately optimizes coefficients and locations throughout the training.
{\bf V3} jointly optimizes locations and coefficients in each step for $B_s$ steps.
{\bf V4} and {\bf V5} use forward and backward difference approximation for gradient estimation, respectively. Hyperparameters are identical 
\begin{wraptable}{r}{0.5\textwidth}
\centering
\small
% \vspace{-6pt}
\caption {Comparison between different optimization strategies on 4 datasets. We use RoBERTa-base and ViT-base models for this experiment. Best results are shown in {\bf bold}.}
\label{tab:ablation}
\vspace{-5pt}
\resizebox{\linewidth}{!}{
\begin{tabular}{c|c|c|c|c}
\toprule
\multirow{2}{*}{{\bf Variants}} & \multicolumn{2}{c}{{\bf Vision Tasks ($\mathcal{B}$ =5000)}} & \multicolumn{2}{c}{{\bf Language Tasks ($\mathcal{B}$ =1000)}} \\ \cmidrule{2-5} 
                         & OxfordPets         & DTD         & ~~~~~QQP~~~~~             & CoLA            \\ \midrule \midrule
V1                       & 92.8               & 76.8        & 87.7             & 63.2            \\
V2                       & 91.9               & 76.3        & 86.5             & 61.6            \\
V3                       & 93.4               & 79.1        & 88.0             & 64.1            \\
V4                       & {\bf 93.8}               & 79.5        & 88.6             & 64.3            \\
V5                       & {\bf 93.8}               & {\bf 79.7}        & 88.4             & 64.4            \\
\midrule
\textcolor{black}{LoCA} &  \textcolor{black}{{\bf 93.8}} & \textcolor{black}{{\bf 79.7}} & \textcolor{black}{{\bf 88.7}} & \textcolor{black}{{\bf 64.5}} \\
\bottomrule
\end{tabular}
}
\vspace{-10pt}
\end{wraptable} 
to the ones in Section \ref{sec:NLU} and \ref{sec:Vision}.
It can be observed that alternating optimization throughout the entire process leads to instability, resulting in a suboptimal performance. 
Simultaneously optimizing coefficients makes convergence not guaranteed, thus being less effective than alternating optimization. Both one-side (forward and backward) difference approximations show effectiveness, but it is challenging to theoretically analyze which is superior. Therefore, we choose using the central difference approximation as the default implementation.

    \section{Related Work}
    The recent surge in LLM research has reignited interest in PEFT research. To pursue favorable task performance while using only a small number of trainable parameters, current PEFT methods primarily lie in four categories: adding extra trainable modules \citep{houlsby2019parameter,ruckle2020adapterdrop}, selectively training a small subset of key parameters \citep{zaken2021bitfit,lawton2023neural}, employing reparameterization techniques like low-rank decomposition to the incremental matrices \citep{hu2021lora,zhang2023adalora,liu2024dora,hao2024flora}, or combining multiple strategies \citep{chen2023parameter}. Among them, low-rank methods have garnered significant attention due to their mergable nature and parameter efficiency. These low-rank methods, which aim to approximate large weight matrices using a few principal components, is highly analogous to techniques employed in data compression. In fact, low-rank decomposition (or singular value decomposition) and frequency-domain decomposition (e.g., JPEG compression) represents two fundamental tools in image compression and signal processing.
    
    For image compression, frequency-domain reconstruction (e.g., DCT) are preferred due to the inherent smoothness prior of image data \citep{wallace1991jpeg}. However, when dealing with the complex data structures of neural network parameter matrices, the relative efficacy of these approaches remains unexplored. To the best of our knowledge, although FourierFT \citep{gao2024parameter} has made an empirical study of frequency-domain PEFT by employing Fourier Transform, no prior work has conducted a rigorous comparison between low-rank and frequency-domain decomposition methods in the context of PEFT. Our work aims to bridge this gap by providing a comprehensive theoretical analysis and designing a more efficient frequency-domain PEFT method. 

\begin{figure}[t]
    \centering
    \begin{minipage}{0.63\textwidth}
        \centering
    \begin{minipage}[t]{0.43\linewidth}
    \centering
    \includegraphics[width=\linewidth]{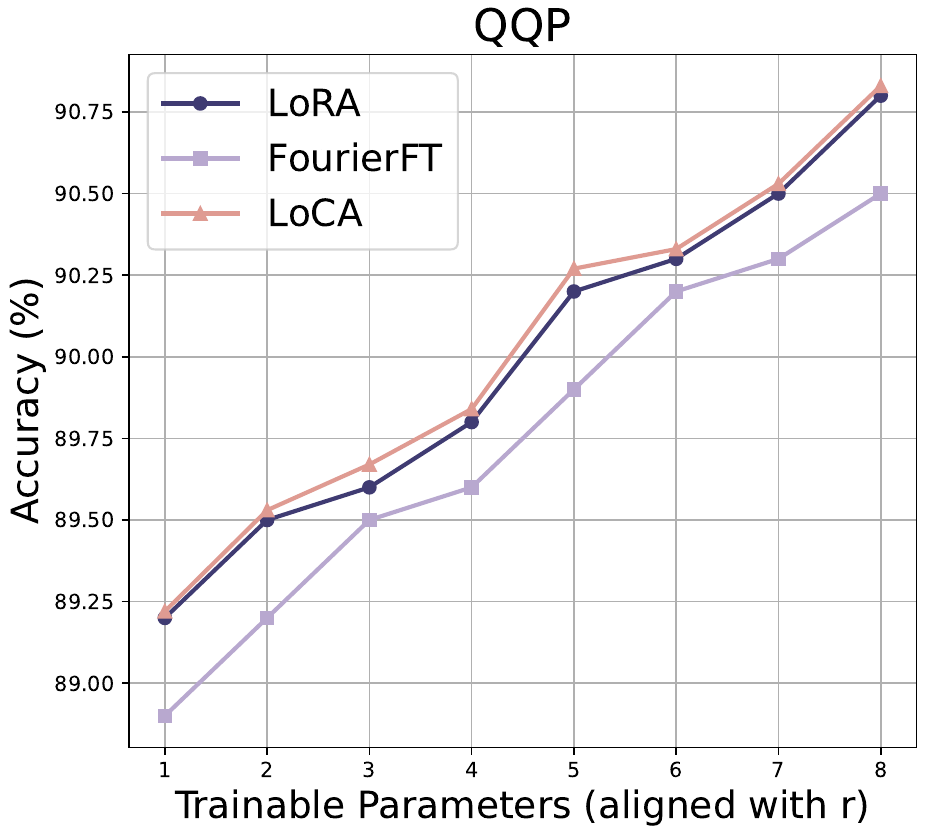}
    \end{minipage}%
    \begin{minipage}[t]{0.43\linewidth}
    \centering
    \includegraphics[width=\linewidth]{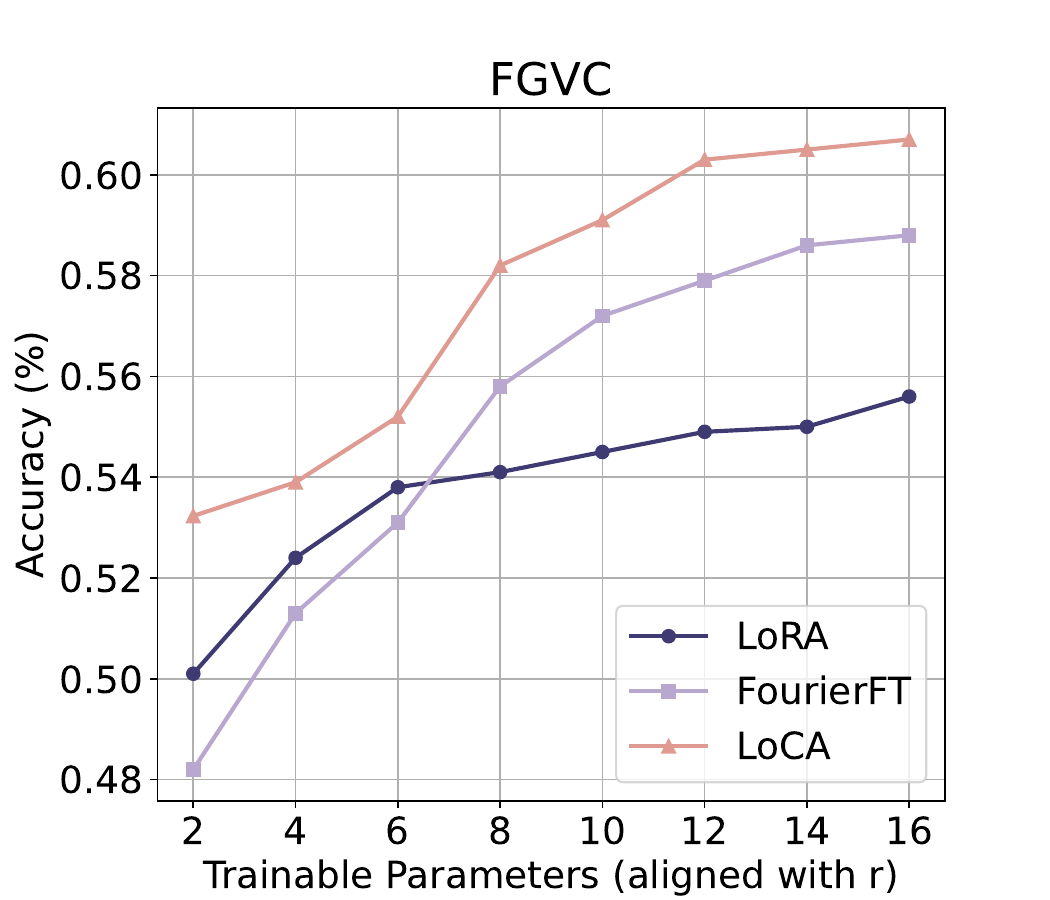}
    \end{minipage}
    \centering
    \caption{Performance comparison under different parameter budgets on QQP (RoBERTa-base) and FGVC (ViT-base).}
    \label{fig:budget}
    \end{minipage}
    \hfill
    \begin{minipage}{0.32\textwidth}
        \centering
        \includegraphics[width=0.9\textwidth]{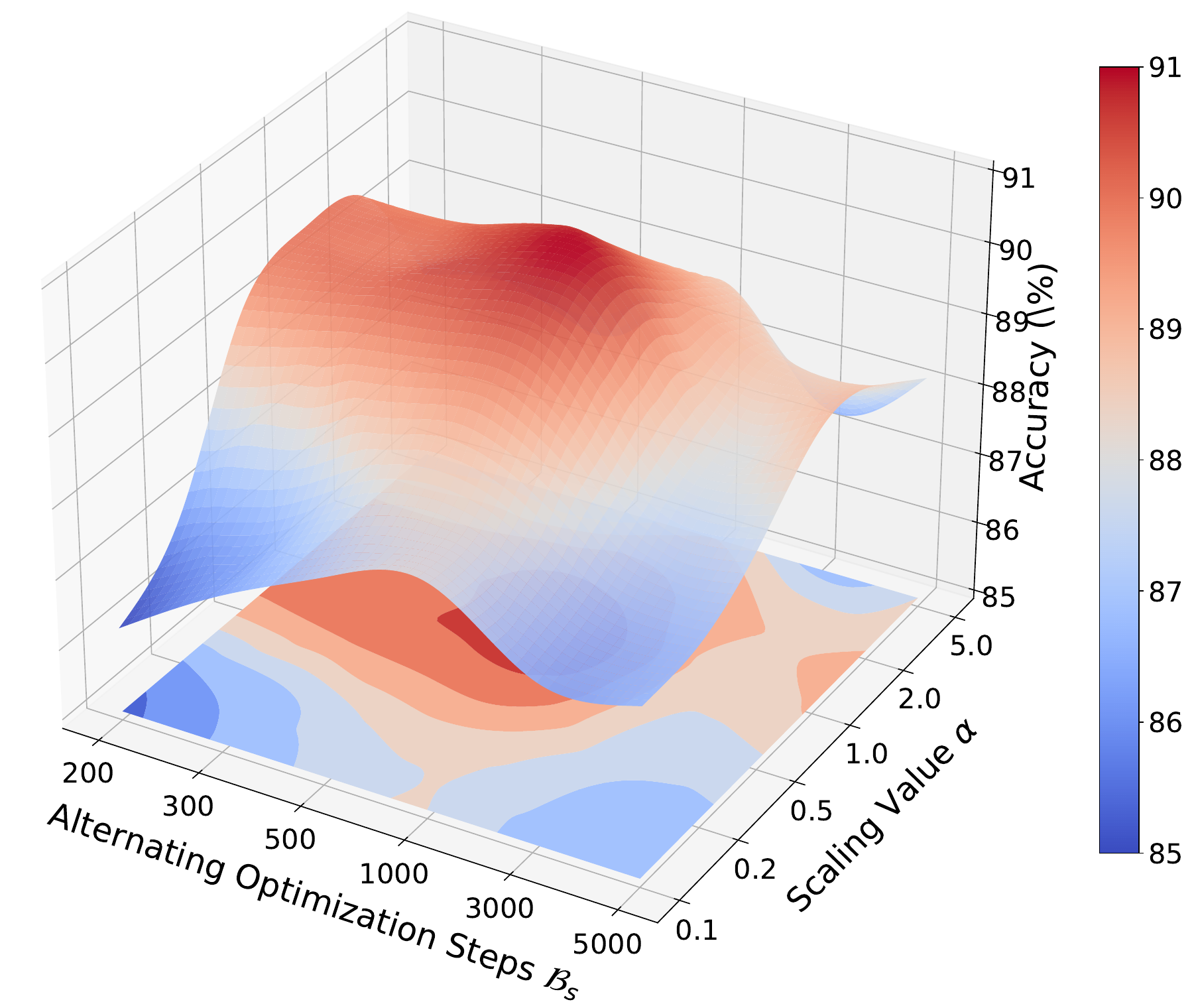}
        \caption{Influence of $\alpha$ and $\mathcal{B}_s$ on MRPC (RoBERTa-base).}
        \label{fig:parameter_sen}
    \end{minipage}
    \vspace{-13pt}
\end{figure}

   \section{Conclusion}
   This paper provides a theoretical foundation for frequency-domain PEFT methods. We prove that carefully selected frequency components can outperform low-rank approaches, leading to the development of location-aware frequency-domain PEFT method. Our method optimizes both coefficients and locations of frequency components using iDCT and difference approximation. We show that our method enhances expressiveness while maintaining computational efficiency. Extensive experiments across NLP and computer vision tasks demonstrate the superior performance and parameter efficiency compared to existing PEFT methods. 

   \WFclear

   \bibliography{iclr2025_conference}  

\begin{thebibliography}{71}
\providecommand{\natexlab}[1]{#1}
\providecommand{\url}[1]{\texttt{#1}}
\expandafter\ifx\csname urlstyle\endcsname\relax
  \providecommand{\doi}[1]{doi: #1}\else
  \providecommand{\doi}{doi: \begingroup \urlstyle{rm}\Url}\fi

\bibitem[Ahmed et~al.(1974)Ahmed, Natarajan, and Rao]{ahmed1974discrete}
Nasir Ahmed, T\_ Natarajan, and Kamisetty~R Rao.
\newblock Discrete cosine transform.
\newblock \emph{IEEE transactions on Computers}, 100\penalty0 (1):\penalty0
  90--93, 1974.

\bibitem[Arnold \& Groeneveld(1979)Arnold and Groeneveld]{arnold1979bounds}
Barry~C Arnold and Richard~A Groeneveld.
\newblock Bounds on expectations of linear systematic statistics based on
  dependent samples.
\newblock \emph{The Annals of Statistics}, pp.\  220--223, 1979.

\bibitem[Bengio et~al.(2013)Bengio, L{\'e}onard, and
  Courville]{bengio2013estimating}
Yoshua Bengio, Nicholas L{\'e}onard, and Aaron Courville.
\newblock Estimating or propagating gradients through stochastic neurons for
  conditional computation.
\newblock \emph{arXiv preprint arXiv:1308.3432}, 2013.

\bibitem[Bertsimas et~al.(2006)Bertsimas, Natarajan, and
  Teo]{bertsimas2006tight}
Dimitris Bertsimas, Karthik Natarajan, and Chung-Piaw Teo.
\newblock Tight bounds on expected order statistics.
\newblock \emph{Probability in the Engineering and Informational Sciences},
  20\penalty0 (4):\penalty0 667--686, 2006.

\bibitem[Br{\"o}cker \& Tom~Dieck(2013)Br{\"o}cker and
  Tom~Dieck]{brocker2013representations}
Theodor Br{\"o}cker and Tammo Tom~Dieck.
\newblock \emph{Representations of compact Lie groups}, volume~98.
\newblock Springer Science \& Business Media, 2013.

\bibitem[Brooks et~al.(2023)Brooks, Holynski, and
  Efros]{brooks2023instructpix2pix}
Tim Brooks, Aleksander Holynski, and Alexei~A Efros.
\newblock Instructpix2pix: Learning to follow image editing instructions.
\newblock In \emph{Proceedings of the IEEE/CVF Conference on Computer Vision
  and Pattern Recognition}, pp.\  18392--18402, 2023.

\bibitem[Brown et~al.(2020)Brown, Mann, Ryder, Subbiah, Kaplan, Dhariwal,
  Neelakantan, Shyam, Sastry, Askell, et~al.]{brown2020language}
Tom Brown, Benjamin Mann, Nick Ryder, Melanie Subbiah, Jared~D Kaplan, Prafulla
  Dhariwal, Arvind Neelakantan, Pranav Shyam, Girish Sastry, Amanda Askell,
  et~al.
\newblock Language models are few-shot learners.
\newblock \emph{Advances in neural information processing systems},
  33:\penalty0 1877--1901, 2020.

\bibitem[Chen et~al.(2023)Chen, Zhang, Shi, Li, Smola, and
  Yang]{chen2023parameter}
Jiaao Chen, Aston Zhang, Xingjian Shi, Mu~Li, Alex Smola, and Diyi Yang.
\newblock Parameter-efficient fine-tuning design spaces.
\newblock \emph{arXiv preprint arXiv:2301.01821}, 2023.

\bibitem[Cheng et~al.(2017)Cheng, Han, and Lu]{cheng2017remote}
Gong Cheng, Junwei Han, and Xiaoqiang Lu.
\newblock Remote sensing image scene classification: Benchmark and state of the
  art.
\newblock \emph{Proceedings of the IEEE}, 105\penalty0 (10):\penalty0
  1865--1883, 2017.

\bibitem[Chiang et~al.(2023)Chiang, Li, Lin, Sheng, Wu, Zhang, Zheng, Zhuang,
  Zhuang, Gonzalez, et~al.]{chiang2023vicuna}
Wei-Lin Chiang, Zhuohan Li, Zi~Lin, Ying Sheng, Zhanghao Wu, Hao Zhang, Lianmin
  Zheng, Siyuan Zhuang, Yonghao Zhuang, Joseph~E Gonzalez, et~al.
\newblock Vicuna: An open-source chatbot impressing gpt-4 with 90\%* chatgpt
  quality.
\newblock \emph{See https://vicuna. lmsys. org (accessed 14 April 2023)},
  2\penalty0 (3):\penalty0 6, 2023.

\bibitem[Cimpoi et~al.(2014)Cimpoi, Maji, Kokkinos, Mohamed, and
  Vedaldi]{cimpoi2014describing}
Mircea Cimpoi, Subhransu Maji, Iasonas Kokkinos, Sammy Mohamed, and Andrea
  Vedaldi.
\newblock Describing textures in the wild.
\newblock In \emph{Proceedings of the IEEE conference on computer vision and
  pattern recognition}, pp.\  3606--3613, 2014.

\bibitem[Dettmers et~al.(2024)Dettmers, Pagnoni, Holtzman, and
  Zettlemoyer]{dettmers2024qlora}
Tim Dettmers, Artidoro Pagnoni, Ari Holtzman, and Luke Zettlemoyer.
\newblock Qlora: Efficient finetuning of quantized llms.
\newblock \emph{Advances in Neural Information Processing Systems}, 36, 2024.

\bibitem[Dosovitskiy et~al.(2020)Dosovitskiy, Beyer, Kolesnikov, Weissenborn,
  Zhai, Unterthiner, Dehghani, Minderer, Heigold, Gelly,
  et~al.]{dosovitskiy2020image}
Alexey Dosovitskiy, Lucas Beyer, Alexander Kolesnikov, Dirk Weissenborn,
  Xiaohua Zhai, Thomas Unterthiner, Mostafa Dehghani, Matthias Minderer, Georg
  Heigold, Sylvain Gelly, et~al.
\newblock An image is worth 16x16 words: Transformers for image recognition at
  scale.
\newblock \emph{arXiv preprint arXiv:2010.11929}, 2020.

\bibitem[Edalati et~al.(2022)Edalati, Tahaei, Kobyzev, Nia, Clark, and
  Rezagholizadeh]{edalati2022krona}
Ali Edalati, Marzieh Tahaei, Ivan Kobyzev, Vahid~Partovi Nia, James~J Clark,
  and Mehdi Rezagholizadeh.
\newblock Krona: Parameter efficient tuning with kronecker adapter.
\newblock \emph{arXiv preprint arXiv:2212.10650}, 2022.

\bibitem[Gao et~al.(2024)Gao, Wang, Chen, Liu, Wu, Chen, and
  Li]{gao2024parameter}
Ziqi Gao, Qichao Wang, Aochuan Chen, Zijing Liu, Bingzhe Wu, Liang Chen, and
  Jia Li.
\newblock Parameter-efficient fine-tuning with discrete fourier transform.
\newblock \emph{arXiv preprint arXiv:2405.03003}, 2024.

\bibitem[Hao et~al.(2024)Hao, Cao, and Mou]{hao2024flora}
Yongchang Hao, Yanshuai Cao, and Lili Mou.
\newblock Flora: Low-rank adapters are secretly gradient compressors.
\newblock \emph{arXiv preprint arXiv:2402.03293}, 2024.

\bibitem[He et~al.(2021)He, Zhou, Ma, Berg-Kirkpatrick, and
  Neubig]{he2021towards}
Junxian He, Chunting Zhou, Xuezhe Ma, Taylor Berg-Kirkpatrick, and Graham
  Neubig.
\newblock Towards a unified view of parameter-efficient transfer learning.
\newblock \emph{arXiv preprint arXiv:2110.04366}, 2021.

\bibitem[Hedegaard et~al.(2024)Hedegaard, Alok, Jose, and
  Iosifidis]{hedegaard2024structured}
Lukas Hedegaard, Aman Alok, Juby Jose, and Alexandros Iosifidis.
\newblock Structured pruning adapters.
\newblock \emph{Pattern Recognition}, pp.\  110724, 2024.

\bibitem[Helber et~al.(2019)Helber, Bischke, Dengel, and
  Borth]{helber2019eurosat}
Patrick Helber, Benjamin Bischke, Andreas Dengel, and Damian Borth.
\newblock Eurosat: A novel dataset and deep learning benchmark for land use and
  land cover classification.
\newblock \emph{IEEE Journal of Selected Topics in Applied Earth Observations
  and Remote Sensing}, 12\penalty0 (7):\penalty0 2217--2226, 2019.

\bibitem[Houlsby et~al.(2019)Houlsby, Giurgiu, Jastrzebski, Morrone,
  De~Laroussilhe, Gesmundo, Attariyan, and Gelly]{houlsby2019parameter}
Neil Houlsby, Andrei Giurgiu, Stanislaw Jastrzebski, Bruna Morrone, Quentin
  De~Laroussilhe, Andrea Gesmundo, Mona Attariyan, and Sylvain Gelly.
\newblock Parameter-efficient transfer learning for nlp.
\newblock In \emph{International conference on machine learning}, pp.\
  2790--2799. PMLR, 2019.

\bibitem[Howard \& Gugger(2020)Howard and Gugger]{howard2020fastai}
Jeremy Howard and Sylvain Gugger.
\newblock Fastai: a layered api for deep learning.
\newblock \emph{Information}, 11\penalty0 (2):\penalty0 108, 2020.

\bibitem[Hu et~al.(2021)Hu, Shen, Wallis, Allen-Zhu, Li, Wang, Wang, and
  Chen]{hu2021lora}
Edward~J Hu, Yelong Shen, Phillip Wallis, Zeyuan Allen-Zhu, Yuanzhi Li, Shean
  Wang, Lu~Wang, and Weizhu Chen.
\newblock Lora: Low-rank adaptation of large language models.
\newblock \emph{arXiv preprint arXiv:2106.09685}, 2021.

\bibitem[Hu et~al.(2023)Hu, Wang, Lan, Xu, Lim, Bing, Xu, Poria, and
  Lee]{hu2023llm}
Zhiqiang Hu, Lei Wang, Yihuai Lan, Wanyu Xu, Ee-Peng Lim, Lidong Bing, Xing Xu,
  Soujanya Poria, and Roy Ka-Wei Lee.
\newblock Llm-adapters: An adapter family for parameter-efficient fine-tuning
  of large language models.
\newblock \emph{arXiv preprint arXiv:2304.01933}, 2023.

\bibitem[Hyeon-Woo et~al.(2021)Hyeon-Woo, Ye-Bin, and Oh]{hyeon2021fedpara}
Nam Hyeon-Woo, Moon Ye-Bin, and Tae-Hyun Oh.
\newblock Fedpara: Low-rank hadamard product for communication-efficient
  federated learning.
\newblock \emph{arXiv preprint arXiv:2108.06098}, 2021.

\bibitem[Johansson(2000)]{johansson2000shape}
Kurt Johansson.
\newblock Shape fluctuations and random matrices.
\newblock \emph{Communications in mathematical physics}, 209:\penalty0
  437--476, 2000.

\bibitem[Johnstone(2001)]{johnstone2001distribution}
Iain~M Johnstone.
\newblock On the distribution of the largest eigenvalue in principal components
  analysis.
\newblock \emph{The Annals of statistics}, 29\penalty0 (2):\penalty0 295--327,
  2001.

\bibitem[Kopiczko et~al.(2023)Kopiczko, Blankevoort, and
  Asano]{kopiczko2023vera}
Dawid~Jan Kopiczko, Tijmen Blankevoort, and Yuki~Markus Asano.
\newblock Vera: Vector-based random matrix adaptation.
\newblock \emph{arXiv preprint arXiv:2310.11454}, 2023.

\bibitem[Krause et~al.(2013)Krause, Stark, Deng, and Fei-Fei]{krause20133d}
Jonathan Krause, Michael Stark, Jia Deng, and Li~Fei-Fei.
\newblock 3d object representations for fine-grained categorization.
\newblock In \emph{Proceedings of the IEEE international conference on computer
  vision workshops}, pp.\  554--561, 2013.

\bibitem[Krizhevsky et~al.(2009)Krizhevsky, Hinton,
  et~al.]{krizhevsky2009learning}
Alex Krizhevsky, Geoffrey Hinton, et~al.
\newblock Learning multiple layers of features from tiny images.
\newblock 2009.

\bibitem[Lawton et~al.(2023)Lawton, Kumar, Thattai, Galstyan, and
  Steeg]{lawton2023neural}
Neal Lawton, Anoop Kumar, Govind Thattai, Aram Galstyan, and Greg~Ver Steeg.
\newblock Neural architecture search for parameter-efficient fine-tuning of
  large pre-trained language models.
\newblock \emph{arXiv preprint arXiv:2305.16597}, 2023.

\bibitem[Lee et~al.(2017)Lee, Bahri, Novak, Schoenholz, Pennington, and
  Sohl-Dickstein]{lee2017deep}
Jaehoon Lee, Yasaman Bahri, Roman Novak, Samuel~S Schoenholz, Jeffrey
  Pennington, and Jascha Sohl-Dickstein.
\newblock Deep neural networks as gaussian processes.
\newblock \emph{arXiv preprint arXiv:1711.00165}, 2017.

\bibitem[Lester et~al.(2021)Lester, Al-Rfou, and Constant]{lester2021power}
Brian Lester, Rami Al-Rfou, and Noah Constant.
\newblock The power of scale for parameter-efficient prompt tuning.
\newblock \emph{arXiv preprint arXiv:2104.08691}, 2021.

\bibitem[Li et~al.(2017)Li, Yang, Song, and Hospedales]{li2017deeper}
Da~Li, Yongxin Yang, Yi-Zhe Song, and Timothy~M Hospedales.
\newblock Deeper, broader and artier domain generalization.
\newblock In \emph{Proceedings of the IEEE international conference on computer
  vision}, pp.\  5542--5550, 2017.

\bibitem[Li et~al.(2022)Li, Li, Xiong, and Hoi]{li2022blip}
Junnan Li, Dongxu Li, Caiming Xiong, and Steven Hoi.
\newblock Blip: Bootstrapping language-image pre-training for unified
  vision-language understanding and generation.
\newblock In \emph{International conference on machine learning}, pp.\
  12888--12900. PMLR, 2022.

\bibitem[Lin et~al.(2020)Lin, Madotto, and Fung]{lin2020exploring}
Zhaojiang Lin, Andrea Madotto, and Pascale Fung.
\newblock Exploring versatile generative language model via parameter-efficient
  transfer learning.
\newblock \emph{arXiv preprint arXiv:2004.03829}, 2020.

\bibitem[Liu et~al.(2024)Liu, Wang, Yin, Molchanov, Wang, Cheng, and
  Chen]{liu2024dora}
Shih-Yang Liu, Chien-Yi Wang, Hongxu Yin, Pavlo Molchanov, Yu-Chiang~Frank
  Wang, Kwang-Ting Cheng, and Min-Hung Chen.
\newblock Dora: Weight-decomposed low-rank adaptation.
\newblock \emph{arXiv preprint arXiv:2402.09353}, 2024.

\bibitem[Liu et~al.(2019)Liu, Ott, Goyal, Du, Joshi, Chen, Levy, Lewis,
  Zettlemoyer, and Stoyanov]{liu2019roberta}
Yinhan Liu, Myle Ott, Naman Goyal, Jingfei Du, Mandar Joshi, Danqi Chen, Omer
  Levy, Mike Lewis, Luke Zettlemoyer, and Veselin Stoyanov.
\newblock Roberta: A robustly optimized bert pretraining approach.
\newblock \emph{arXiv preprint arXiv:1907.11692}, 2019.

\bibitem[Liu et~al.(2021)Liu, Lin, Cao, Hu, Wei, Zhang, Lin, and
  Guo]{liu2021swin}
Ze~Liu, Yutong Lin, Yue Cao, Han Hu, Yixuan Wei, Zheng Zhang, Stephen Lin, and
  Baining Guo.
\newblock Swin transformer: Hierarchical vision transformer using shifted
  windows.
\newblock In \emph{Proceedings of the IEEE/CVF international conference on
  computer vision}, pp.\  10012--10022, 2021.

\bibitem[Maji et~al.(2013)Maji, Rahtu, Kannala, Blaschko, and
  Vedaldi]{maji2013fine}
Subhransu Maji, Esa Rahtu, Juho Kannala, Matthew Blaschko, and Andrea Vedaldi.
\newblock Fine-grained visual classification of aircraft.
\newblock \emph{arXiv preprint arXiv:1306.5151}, 2013.

\bibitem[Mangrulkar et~al.(2022)Mangrulkar, Gugger, Debut, Belkada, Paul, and
  Bossan]{peft}
Sourab Mangrulkar, Sylvain Gugger, Lysandre Debut, Younes Belkada, Sayak Paul,
  and Benjamin Bossan.
\newblock Peft: State-of-the-art parameter-efficient fine-tuning methods.
\newblock \url{https://github.com/huggingface/peft}, 2022.

\bibitem[Martin \& Mahoney(2021)Martin and Mahoney]{martin2021implicit}
Charles~H Martin and Michael~W Mahoney.
\newblock Implicit self-regularization in deep neural networks: Evidence from
  random matrix theory and implications for learning.
\newblock \emph{Journal of Machine Learning Research}, 22\penalty0
  (165):\penalty0 1--73, 2021.

\bibitem[Muirhead(2009)]{muirhead2009aspects}
Robb~J Muirhead.
\newblock \emph{Aspects of multivariate statistical theory}.
\newblock John Wiley \& Sons, 2009.

\bibitem[Novikova et~al.(2017)Novikova, Du{\v{s}}ek, and
  Rieser]{novikova2017e2e}
Jekaterina Novikova, Ond{\v{r}}ej Du{\v{s}}ek, and Verena Rieser.
\newblock The e2e dataset: New challenges for end-to-end generation.
\newblock \emph{arXiv preprint arXiv:1706.09254}, 2017.

\bibitem[Parkhi et~al.(2012)Parkhi, Vedaldi, Zisserman, and
  Jawahar]{parkhi2012cats}
Omkar~M Parkhi, Andrea Vedaldi, Andrew Zisserman, and CV~Jawahar.
\newblock Cats and dogs.
\newblock In \emph{2012 IEEE conference on computer vision and pattern
  recognition}, pp.\  3498--3505. IEEE, 2012.

\bibitem[Paul(2023)]{Paul2023instruction-tuning-sd}
Sayak Paul.
\newblock Instruction-tuning stable diffusion with instructpix2pix.
\newblock \emph{Hugging Face Blog}, 2023.
\newblock https://huggingface.co/blog/instruction-tuning-sd.

\bibitem[Radford et~al.(2019)Radford, Wu, Child, Luan, Amodei, Sutskever,
  et~al.]{radford2019language}
Alec Radford, Jeffrey Wu, Rewon Child, David Luan, Dario Amodei, Ilya
  Sutskever, et~al.
\newblock Language models are unsupervised multitask learners.
\newblock \emph{OpenAI blog}, 1\penalty0 (8):\penalty0 9, 2019.

\bibitem[Radford et~al.(2021)Radford, Kim, Hallacy, Ramesh, Goh, Agarwal,
  Sastry, Askell, Mishkin, Clark, et~al.]{radford2021learning}
Alec Radford, Jong~Wook Kim, Chris Hallacy, Aditya Ramesh, Gabriel Goh,
  Sandhini Agarwal, Girish Sastry, Amanda Askell, Pamela Mishkin, Jack Clark,
  et~al.
\newblock Learning transferable visual models from natural language
  supervision.
\newblock In \emph{International conference on machine learning}, pp.\
  8748--8763. PMLR, 2021.

\bibitem[Ridnik et~al.(2021)Ridnik, Ben-Baruch, Noy, and
  Zelnik-Manor]{ridnik2021imagenet}
Tal Ridnik, Emanuel Ben-Baruch, Asaf Noy, and Lihi Zelnik-Manor.
\newblock Imagenet-21k pretraining for the masses.
\newblock \emph{arXiv preprint arXiv:2104.10972}, 2021.

\bibitem[Rombach et~al.(2022)Rombach, Blattmann, Lorenz, Esser, and
  Ommer]{rombach2022high}
Robin Rombach, Andreas Blattmann, Dominik Lorenz, Patrick Esser, and Bj{\"o}rn
  Ommer.
\newblock High-resolution image synthesis with latent diffusion models.
\newblock In \emph{Proceedings of the IEEE/CVF conference on computer vision
  and pattern recognition}, pp.\  10684--10695, 2022.

\bibitem[R{\"u}ckl{\'e} et~al.(2020)R{\"u}ckl{\'e}, Geigle, Glockner, Beck,
  Pfeiffer, Reimers, and Gurevych]{ruckle2020adapterdrop}
Andreas R{\"u}ckl{\'e}, Gregor Geigle, Max Glockner, Tilman Beck, Jonas
  Pfeiffer, Nils Reimers, and Iryna Gurevych.
\newblock Adapterdrop: On the efficiency of adapters in transformers.
\newblock \emph{arXiv preprint arXiv:2010.11918}, 2020.

\bibitem[Taori et~al.(2023)Taori, Gulrajani, Zhang, Dubois, Li, Guestrin,
  Liang, and Hashimoto]{taori2023stanford}
Rohan Taori, Ishaan Gulrajani, Tianyi Zhang, Yann Dubois, Xuechen Li, Carlos
  Guestrin, Percy Liang, and Tatsunori~B Hashimoto.
\newblock Stanford alpaca: An instruction-following llama model, 2023.

\bibitem[Thamm et~al.(2022)Thamm, Staats, and Rosenow]{thamm2022random}
Matthias Thamm, Max Staats, and Bernd Rosenow.
\newblock Random matrix analysis of deep neural network weight matrices.
\newblock \emph{Physical Review E}, 106\penalty0 (5):\penalty0 054124, 2022.

\bibitem[Touvron et~al.(2023{\natexlab{a}})Touvron, Lavril, Izacard, Martinet,
  Lachaux, Lacroix, Rozi{\`e}re, Goyal, Hambro, Azhar,
  et~al.]{touvron2023llama}
Hugo Touvron, Thibaut Lavril, Gautier Izacard, Xavier Martinet, Marie-Anne
  Lachaux, Timoth{\'e}e Lacroix, Baptiste Rozi{\`e}re, Naman Goyal, Eric
  Hambro, Faisal Azhar, et~al.
\newblock Llama: Open and efficient foundation language models.
\newblock \emph{arXiv preprint arXiv:2302.13971}, 2023{\natexlab{a}}.

\bibitem[Touvron et~al.(2023{\natexlab{b}})Touvron, Martin, Stone, Albert,
  Almahairi, Babaei, Bashlykov, Batra, Bhargava, Bhosale,
  et~al.]{touvron2023llama2}
Hugo Touvron, Louis Martin, Kevin Stone, Peter Albert, Amjad Almahairi, Yasmine
  Babaei, Nikolay Bashlykov, Soumya Batra, Prajjwal Bhargava, Shruti Bhosale,
  et~al.
\newblock Llama 2: Open foundation and fine-tuned chat models.
\newblock \emph{arXiv preprint arXiv:2307.09288}, 2023{\natexlab{b}}.

\bibitem[Valipour et~al.(2022)Valipour, Rezagholizadeh, Kobyzev, and
  Ghodsi]{valipour2022dylora}
Mojtaba Valipour, Mehdi Rezagholizadeh, Ivan Kobyzev, and Ali Ghodsi.
\newblock Dylora: Parameter efficient tuning of pre-trained models using
  dynamic search-free low-rank adaptation.
\newblock \emph{arXiv preprint arXiv:2210.07558}, 2022.

\bibitem[Van Den~Oord et~al.(2017)Van Den~Oord, Vinyals, et~al.]{van2017neural}
Aaron Van Den~Oord, Oriol Vinyals, et~al.
\newblock Neural discrete representation learning.
\newblock \emph{Advances in neural information processing systems}, 30, 2017.

\bibitem[Vaswani et~al.(2017)Vaswani, Shazeer, Parmar, Uszkoreit, Jones, Gomez,
  Kaiser, and Polosukhin]{vaswani2017attention}
Ashish Vaswani, Noam Shazeer, Niki Parmar, Jakob Uszkoreit, Llion Jones,
  Aidan~N Gomez, {\L}ukasz Kaiser, and Illia Polosukhin.
\newblock Attention is all you need.
\newblock \emph{Advances in neural information processing systems}, 30, 2017.

\bibitem[Vivo et~al.(2007)Vivo, Majumdar, and Bohigas]{vivo2007large}
Pierpaolo Vivo, Satya~N Majumdar, and Oriol Bohigas.
\newblock Large deviations of the maximum eigenvalue in wishart random
  matrices.
\newblock \emph{Journal of Physics A: Mathematical and Theoretical},
  40\penalty0 (16):\penalty0 4317, 2007.

\bibitem[Wallace(1991)]{wallace1991jpeg}
Gregory~K Wallace.
\newblock The jpeg still picture compression standard.
\newblock \emph{Communications of the ACM}, 34\penalty0 (4):\penalty0 30--44,
  1991.

\bibitem[Wang et~al.(2018)Wang, Singh, Michael, Hill, Levy, and
  Bowman]{wang2018glue}
Alex Wang, Amanpreet Singh, Julian Michael, Felix Hill, Omer Levy, and Samuel~R
  Bowman.
\newblock Glue: A multi-task benchmark and analysis platform for natural
  language understanding.
\newblock \emph{arXiv preprint arXiv:1804.07461}, 2018.

\bibitem[Wang \& Yu(2020)Wang and Yu]{wang2020learning}
Xinrui Wang and Jinze Yu.
\newblock Learning to cartoonize using white-box cartoon representations.
\newblock In \emph{Proceedings of the IEEE/CVF conference on computer vision
  and pattern recognition}, pp.\  8090--8099, 2020.

\bibitem[Wang et~al.(2023)Wang, Wu, Dabral, Zhang, Brown, Lu, Liu, Liang, Pang,
  Bendersky, et~al.]{wang2023non}
Yaqing Wang, Jialin Wu, Tanmaya Dabral, Jiageng Zhang, Geoff Brown, Chun-Ta Lu,
  Frederick Liu, Yi~Liang, Bo~Pang, Michael Bendersky, et~al.
\newblock Non-intrusive adaptation: Input-centric parameter-efficient
  fine-tuning for versatile multimodal modeling.
\newblock \emph{arXiv preprint arXiv:2310.12100}, 2023.

\bibitem[Wolf et~al.(2020)Wolf, Debut, Sanh, Chaumond, Delangue, Moi, Cistac,
  Rault, Louf, Funtowicz, Davison, Shleifer, von Platen, Ma, Jernite, Plu, Xu,
  Scao, Gugger, Drame, Lhoest, and Rush]{transformers}
Thomas Wolf, Lysandre Debut, Victor Sanh, Julien Chaumond, Clement Delangue,
  Anthony Moi, Pierric Cistac, Tim Rault, Rémi Louf, Morgan Funtowicz, Joe
  Davison, Sam Shleifer, Patrick von Platen, Clara Ma, Yacine Jernite, Julien
  Plu, Canwen Xu, Teven~Le Scao, Sylvain Gugger, Mariama Drame, Quentin Lhoest,
  and Alexander~M. Rush.
\newblock Transformers: State-of-the-art natural language processing.
\newblock In \emph{Proceedings of the 2020 Conference on Empirical Methods in
  Natural Language Processing: System Demonstrations}, pp.\  38--45, Online,
  October 2020. Association for Computational Linguistics.
\newblock URL \url{https://www.aclweb.org/anthology/2020.emnlp-demos.6}.

\bibitem[Wright(2015)]{wright2015coordinate}
Stephen~J Wright.
\newblock Coordinate descent algorithms.
\newblock \emph{Mathematical programming}, 151\penalty0 (1):\penalty0 3--34,
  2015.

\bibitem[Xu et~al.(2021)Xu, Luo, Zhang, Tan, Chang, Huang, and
  Huang]{xu2021raise}
Runxin Xu, Fuli Luo, Zhiyuan Zhang, Chuanqi Tan, Baobao Chang, Songfang Huang,
  and Fei Huang.
\newblock Raise a child in large language model: Towards effective and
  generalizable fine-tuning.
\newblock \emph{arXiv preprint arXiv:2109.05687}, 2021.

\bibitem[Yang et~al.(2012)Yang, Itoi, and Tanaka-Yamawaki]{yang2012testing}
Xin Yang, Ryota Itoi, and Mieko Tanaka-Yamawaki.
\newblock Testing randomness by means of random matrix theory.
\newblock \emph{Progress of Theoretical Physics Supplement}, 194:\penalty0
  73--83, 2012.

\bibitem[Yohai \& Maronna(1979)Yohai and Maronna]{yohai1979asymptotic}
Victor~J Yohai and Ricardo~A Maronna.
\newblock Asymptotic behavior of m-estimators for the linear model.
\newblock \emph{The Annals of Statistics}, pp.\  258--268, 1979.

\bibitem[Zaken et~al.(2021)Zaken, Ravfogel, and Goldberg]{zaken2021bitfit}
Elad~Ben Zaken, Shauli Ravfogel, and Yoav Goldberg.
\newblock Bitfit: Simple parameter-efficient fine-tuning for transformer-based
  masked language-models.
\newblock \emph{arXiv preprint arXiv:2106.10199}, 2021.

\bibitem[Zhang et~al.(2023{\natexlab{a}})Zhang, Zhang, Shi, Chu, and
  Lora-fa]{zhang2023memory}
Longteng Zhang, Lin Zhang, Shaohuai Shi, Xiaowen Chu, and Bo~Li Lora-fa.
\newblock Memory-efficient low-rank adaptation for large language models
  fine-tuning.
\newblock \emph{arXiv preprint arXiv:2308.03303}, 2, 2023{\natexlab{a}}.

\bibitem[Zhang et~al.(2023{\natexlab{b}})Zhang, Chen, Bukharin, Karampatziakis,
  He, Cheng, Chen, and Zhao]{zhang2023adalora}
Qingru Zhang, Minshuo Chen, Alexander Bukharin, Nikos Karampatziakis, Pengcheng
  He, Yu~Cheng, Weizhu Chen, and Tuo Zhao.
\newblock Adalora: Adaptive budget allocation for parameter-efficient
  fine-tuning.
\newblock \emph{arXiv preprint arXiv:2303.10512}, 2023{\natexlab{b}}.

\bibitem[Zheng et~al.(2024)Zheng, Chiang, Sheng, Zhuang, Wu, Zhuang, Lin, Li,
  Li, Xing, et~al.]{zheng2024judging}
Lianmin Zheng, Wei-Lin Chiang, Ying Sheng, Siyuan Zhuang, Zhanghao Wu, Yonghao
  Zhuang, Zi~Lin, Zhuohan Li, Dacheng Li, Eric Xing, et~al.
\newblock Judging llm-as-a-judge with mt-bench and chatbot arena.
\newblock \emph{Advances in Neural Information Processing Systems}, 36, 2024.

\end{thebibliography}
   \bibliographystyle{iclr2025_conference}
   
   \newpage
   \appendix
   \section{Justification of Assumptions} \label{sec:justification}
   In the pre-training and fine-tuning paradigm, deep neural networks are initially trained on a large dataset with distribution $P(X, Y; \overline{W}_0)$ and subsequently fine-tuned on a specific down-stream dataset with distribution $P(X, Y; \overline{W})$. In this context, $\overline{W}$ becomes a random variable associated with a specific data distribution.

   First for assumption (A1), the large dataset used for pre-training represents an aggregation of numerous sub-datasets. Each sub-dataset contributes to the overall distribution $P(X, Y;\overline{W}_0)$. The parameter $\overline{W}_0$ can be seen as the central tendency (mean) of the parameters for all sub-datasets. This aggregation naturally leads to a central limit theorem effect, where the mixture of multiple sub-datasets can be approximated by a normal distribution around $\overline{W}_0$, which also reflects the idea of symmetry in the distribution of sub-datasets. In the absence of strong directional biases, it is reasonable to consider that the parameters for different sub-datasets are symmetrically distributed.
   Note that our proposition is based on all sub-datasets, which also follows the philosophy of the No Free Lunch (NFL) theorem in machine learning. By modeling $\overline{W}$ as a distribution centered on $\overline{W}_0$, we account for the variability across different sub-datasets.

   % On the other hand, for assumption (A2), a general model for neural network is $y=f(x;W)+\varepsilon$, where $W$ represents the true parameters and $\varepsilon$ is the error term. If the input distribution $x$ is exchangeable, meaning that its components can be permuted without affecting the distribution, it is reasonable to assume that the parameters of the neural network also exhibit some form of symmetry. Therefore, the covariance structure of the parameters should reflect this symmetry, leading to a diagonal covariance matrix with equal variances.
   
   \begin{figure*}[h] 
     \centering 
     \subfloat[Layer 10]{
     \includegraphics[width=0.32\linewidth] {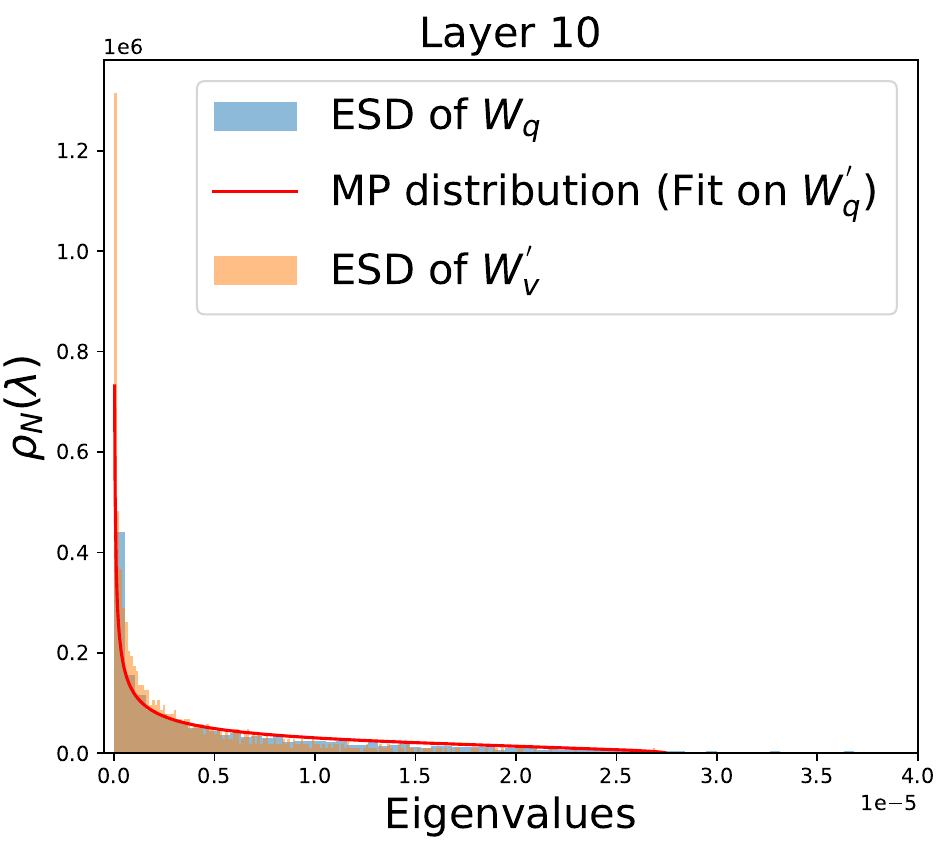} 
     \label{fig:appendix_esd1}
     }
     \subfloat[Layer 20]{
     \includegraphics[width=0.32\linewidth] {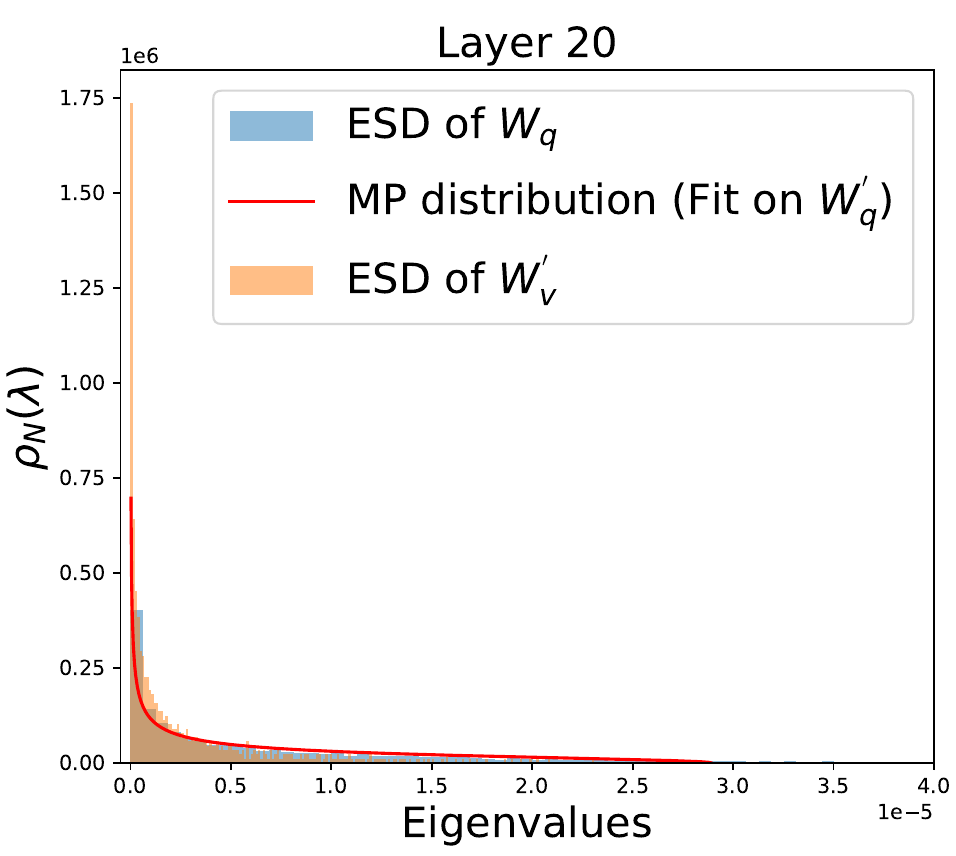}
     \label{fig:appendix_esd2}
     }
     \subfloat[Layer 30]{
     \includegraphics[width=0.32\linewidth] {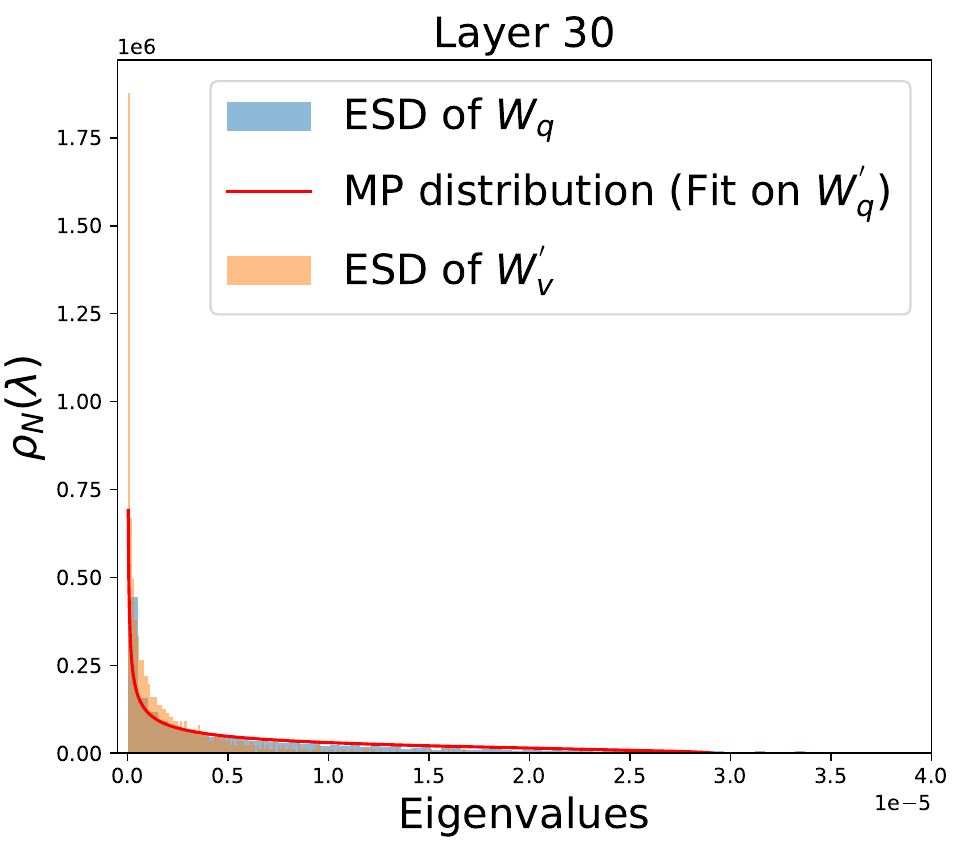}
     \label{fig:appendix_esd3}
     }
     % \vspace{-5pt}
     \caption{Empirical spectral density of the fine-tuned $W^{\prime}$ across multiple layers. The experimental settings are the same as those in Section \ref{sec:Preliminary_Analysis}.} 
     \label{fig:spectral_analysis_appendix} 
     % \vspace{-14pt} 
   \end{figure*}
   
   Regarding assumption (A2), the asymptotic normality of M-estimators is a commonly used assumption in statistics and machine learning. The strongest assumption here should be that the elements of $W^{\prime}-\overline{W}$ are asymptotically independent and identically distributed given $\overline{W}$. To demonstrate the reasonability of this assumption. We first consider the asymptotically i.i.d. property of $W^{\prime}$. While the strict i.i.d. property of parameters in trained neural networks remains a subject of ongoing research, several studies have shwon that certain statistical properties of these parameters resemble those of random i.i.d. matrices \citep{thamm2022random,martin2021implicit,lee2017deep}. Our work extends this line by examining the spectral properties of the trained weight during LLM fine-tuning. Specifically, we use the Marchenko-Pastur (MP) law to test the fit between the empirical spectral densities of $W^{\prime}$ and that of random matrices.
   The MP law is a fundamental result in random matrix theory. It describes the asymptotic behavior of the eigenvalue distribution of large random matrices. The law can be formally stated as follows:
   Consider a $p \times q$ random matrix $W$, where each element is an independent and identically distributed random variable with mean $0$ and variance $\sigma^2$. Let $C = (1/p)W^{\prime T}W^{\prime}$ be the covariance matrix.
   As $p, q \rightarrow \infty$ with a fixed aspect ratio, the empirical spectral distribution of the eigenvalues of $C$ converges almost surely to a deterministic probability distribution known as the Marchenko-Pastur distribution. Here we are dealing with large Transformer weight matrices. If they are asymptotically i.i.d. matrixes, the ESD of them should closely approximate the MP distribution corresponding to their current aspect ratios. We visualize the ESD of the fine-tuned $W^{\prime}$ across multiple layers, as shown in Fig. \ref{fig:spectral_analysis_appendix}. And the results show that $W^{\prime}$ behaves like an i.i.d random matrix. As each element on $\overline{W}$ is permutable due to the equal role of different positions, we can summarize that $\overline{W}$ has a zero-expectation influence on $W^\prime-\overline{W}$. Therefore, the asymptotically i.i.d property of $W^\prime-\overline{W}$ does not violate our observations.
   % As their works verified asymptotic property of $W^\prime$, and the $\overline{W}_0^V$ should be to some extent symmetric due to the equal role of parameters, the $W^\prime$ can exhibit independent property. 
   The assumption that $W^\prime-\overline{W}$ and $\overline{W}$ are independent is analogous to treating $W^\prime-\overline{W}$ as noise, while $\overline{W}$ is the true signal. This is a common assumption in the context of asymptotic analysis, where the estimation error (or noise) is considered to be independent of the true parameter.
   
   \section{Details of the Hypothesis Testing} \label{sec:hypothesis_test}
   We now describe the detailed procedure of the hypothesis testing adopted in Section \ref{sec:Preliminary_Analysis}. Recall that our goal is to test whether the elements $w$ from the weight incremental matrix $\Delta W$ follows a distribution that is close to a Gaussian. Formally, we have the following hypothesis setup and test statistic.
   
   {\bf Hypothesis Setup:}
   $$
   H_0: d_{TV}(P(w), \mathcal{N}(w; \hat{\mu}, \hat{\sigma}^2)) \leq \epsilon,~~~H_1: d_{TV}(P(w), \mathcal{N}(w; \hat{\mu}, \hat{\sigma}^2)) > \epsilon
   $$
   Where $d_{TV}(\cdot, \cdot)$ denotes the total variation distance, $P(w)$ is the true distribution of elements in $\Delta W$, and $\mathcal{N}(\hat{\mu}, \hat{\sigma}^2)$ is the normal distribution with sample mean and variance as parameters.
   
   {\bf Test Statistic:}
   $$
   T = d_{TV}(\hat{P}_n(w), \mathcal{N}(w; \hat{\mu}, \hat{\sigma}^2))
   $$
   Where $\hat{P}_n(w)$ is the empirical distribution of $w$.
   
   {\bf Testing Procedure:}
   
   Given a $\Delta W \in \mathbb{R}^{p \times q}$ yielded by full fine-tuning, our test procedure consists of the following steps.

   \begin{enumerate}
       \item From the observed $\Delta W$, compute the empirical mean $\hat{\mu}$ and variance $\hat{\sigma}^2$.
       \item Generate $1e^5$ samples from $\mathcal{N}(w; \hat{\mu}, \hat{\sigma}^2)$, denoted this set of samples by $\mathcal{G}$.
       \item Generate $B$ perturbed distributions:
       \begin{itemize}
           \item Add small random perturbations $e \sim \mathcal{N}(e; 0, {\sigma_e}^2)$ to the $M$ samples, where ${\sigma_e} = 1e^{-5}$.
           \item Calculate the empirical distribution of the perturbed samples.
           \item Compute the total variation distance between the obtained empirical distribution and $\mathcal{G}$.
           \item If the total variation distance is less than $\epsilon$, keep this distribution.
           \item Repeat until $100$ valid perturbed distributions are obtained.
       \end{itemize}
       \item For each of the $100$ perturbed distributions:
       \begin{itemize}
           \item Sample 10 sets of $p \times q$ points.
           \item  For each set, calculate the total variation distance between the empirical distribution of this set and $\mathcal{G}$. This results in $M \times P$ total variation distances, forming the distribution of the test statistic under $H_0$.
       \end{itemize}
       \item Calculate the total variation distance between the empirical distribution of $\Delta W$ and $\mathcal{G}$, denoted by $T$.
       \item The p-value is the percentile of $T$ in the $M \times P$ total variation distances.
       \item Reject $H_0$ if the p-value is less than the chosen significance level (e.g., 0.05). Otherwise, accept $H_0$.
   \end{enumerate}
   
   Note that although this process is not strictly a bootstrap (as it does not directly resample from the original data), it does use the idea of repeated sampling to generate the distribution of the test statistic. Traditional bootstrap typically resamples with replacement directly from the original data, whereas our method first generates a series of perturbed distributions and then samples from these distributions. The advantage of this approach is that it allows us to explore the behavior of distributions that are close to a Gaussian distribution, while allowing for small variations. This method is more akin to a Monte Carlo simulation, used to estimate the distribution of total variation under the null hypothesis.

   \section{Details about Baseline Methods}
    $\bullet$
    \textit{Full fine-tuning} (FF) updates all parameters of the pre-trained model during the fine-tuning process, allowing for comprehensive adaptation at the cost of significant computational resources.
    
    $\bullet$
    \textit{BitFit} \citep{zaken2021bitfit} solely fine-tunes the bias weights while keeping other parameters frozen.

    $\bullet$
    \textit{Adapter-based methods} inject extra trainable modules into pre-trained models and keep the original model parameters frozen. In our comparison, we primarily focused on three types of Adapters: Adapter\textsuperscript{H} \citep{houlsby2019parameter}, which inserts a two-layer adapter between the self-attention module (or the FFN module) and the subsequent residual connections, Adapter\textsuperscript{L} \citep{lin2020exploring} that inserts a lightweight adapter layer with a bottleneck architecture after the MLP module and a LayerNorm layer in each Transformer block, and Adapter\textsuperscript{D} \citep{ruckle2020adapterdrop} that further enhances efficiency by strategically dropping inactive adapter layers.

    $\bullet$
    \textit{LoRA} \citep{hu2021lora} reparameterizes $\Delta W$ using two trainable low-rank matrices. Therefore, the number of trainable parameters is controlled by the chosen rank and the shape of weight matrixs.

    $\bullet$
    \textit{AdaLoRA} \citep{zhang2023adalora} extends LoRA by introducing an adaptive mechanism to dynamically allocate the rank budget across different parameter matrices.

   $\bullet$
   \textcolor{black}{\textit{VeRA} \citep{kopiczko2023vera} extends LoRA by introducing trainable scaling vectors ($d$ and $b$) to adaptively adjust the contribution of each dimension in the low-rank matrices, achieving comparable performance with significantly fewer parameters.}

   $\bullet$
   \textit{DoRA} \citep{liu2024dora} is a LoRA variant that decomposes pre-trained weights into magnitude and direction components for fine-tuning. It demonstrates learning patterns closer to full fine-tuning.

   $\bullet$
   \textit{FourierFT} \citep{gao2024parameter} treats weight changes as spatial-domain matrices and reparameterizes them with a set of learnable frequency components. The number of trainable parameters is controlled by the number of frequency components, allowing for more flexible scaling of parameter budgets.

   \section{Hyperparameters} 
   Table \ref{tab:glue_hyper}, \ref{tab:It_hyper}, \ref{tab:e2e_hyper} and \ref{tab:vit_hyper} summarize the hyperparameters we used in each experiment. It is worth noting that for LoCA, the weight decay is not applied to the optimization of the location variables. Regarding the total number of alternating learning steps $\mathcal{B}_s$, we set it to approximately 10\% of the total training steps, based on the size of different datasets.

   It is worth noting that our method has very stable hyperparameters (especially the scaling value) across different tasks on GLUE, while FourierFT requires extensive parameter tuning to achieve satisfactory results, as can be seen from \citet{gao2024parameter}. 
   %This aligns with our theoretical analysis: the expected expressivity of FouriorFT is weaker than LoRA. Therefore, to obtain better results, it requires very careful adjustment of hyperparameters.
   
\begin{table}[H]
   \centering
   \caption{Hyperparameters for our method on the GLUE benchmark.}
   \label{tab:glue_hyper}
   \resizebox{0.95\linewidth}{!}{
\begin{tabular}{llcccccccc}
\toprule
Model                          & Datasets                                                                    & CoLA & MNLI & MRPC & QNLI & QQP  & RTE  & SST2 & STS-B \\ \midrule \midrule
\multirow{8}{*}{Common}        & Optimizer                                                                   & \multicolumn{8}{c}{AdamW}                              \\
                               & LR Schedule                                                                 & \multicolumn{8}{c}{Linear}                             \\
                               & Batch Size                                                                  & \multicolumn{8}{c}{32}                                 \\
                               & Where                                                                & \multicolumn{8}{c}{Query, Value}                               \\
                               & Warmup Ratio                                                                & \multicolumn{8}{c}{0.06}                               \\
                               & $\mathcal{B}$ & \multicolumn{8}{c}{1000}                               \\
                               & Learning Rate (Postions)                                                    & \multicolumn{8}{c}{1e-4}                               \\
                               & Scaling Value $\alpha$                                                              & \multicolumn{8}{c}{1}                                  \\
                               & Random Seeds                                                                & \multicolumn{8}{c}{\{6,66,666\}}                       \\ \midrule
\multirow{6}{*}{RoBERTa-base}  & Learning Rate (Head)                                                        & 5e-3 & 5e-4 & 6e-3 & 1e-3 & 5e-4 & 6e-3 & 1e-3 & 1e-3  \\
                               & Learning Rate (Coefficients)                                                & 5e-3 & 5e-4 & 1e-2 & 5e-3 & 5e-4 & 5e-3 & 5e-3 & 5e-3  \\
                               & Max Seq. Len                                                                & \multicolumn{8}{c}{512}                                \\
                               & Weight Decay                                                                & 1e-4 & 1e-4    & 1e-4 & 5e-4 & 1e-4 & 0    & 5e-4 & 5e-4  \\
                               & Epochs                                                                      & 80   & 30   & 50   & 40   & 35   & 80  & 30   & 50    \\
                               & $\mathcal{B}_s$                                                                        & 2100 & 3000 & 600  & 3000 & 3000 & 600  & 3000 & 600   \\ \midrule
\multirow{6}{*}{RoBERTa-large} & Learning Rate (Head)                                                        & 5e-3 & 5e-4 & 5e-3 & 1e-3 & 5e-4 & 5e-3 & 1e-3 & 1e-3  \\
                               & Learning Rate (Coefficients)                                                & 5e-3 & 5e-4 & 1e-2 & 5e-3 & 5e-4 & 5e-3 & 5e-3 & 5e-3  \\
                               & Max Seq. Len                                                                & \multicolumn{8}{c}{512}                                \\
                               & Weight Decay                                                                & 1e-4 & 1e-4 & 1e-4 & 5e-4 & 1e-4 & 0 & 5e-4 & 5e-4  \\
                               & Epochs                                                                      & 40   & 15   & 30   & 25   & 20   & 50   & 20   & 50    \\
                               & $\mathcal{B}_s$                                                                        & 1000 & 3000 & 400  & 3000 & 3000 & 300  & 3000 & 600   \\ \bottomrule
\end{tabular}}
   \end{table}

\begin{table}[H]
\centering
\caption{\textcolor{black}{Hyperparameter configuration of LoCA on the E2E benchmark.}}
\label{tab:e2e_hyper}
% \resizebox{0.4\textwidth}{!}{%
\begin{tabular}{@{}l|c@{}}
\toprule
Hyperparameter & GPT-2 Medium/Large \\ \midrule
Optimizer & AdamW \\
Dropout & 0 \\
Warmup Steps & 100 \\
Epochs & 5 \\
Where & Query, Value \\
Label Smooth &  0.1  \\
LR Schedule & Linear  \\
Learning Rate (Coefficients) & 5e-3 \\
Learning Rate (Positions) & 1e-4 \\
Learning Rate (Head) & 2e-4 \\
Batch Size & 32 \\
Weight Decay & 0.01 \\
$\mathcal{B}$ & 1000 \\
Learning iterations $\mathcal{B}_s$ & 1200 \\
Scaling Value $\alpha$ & 1 \\ \bottomrule
\end{tabular}%
% }
\end{table}

\begin{table}[H]
   \centering
   \caption{Hyperparameter configuration for all methods on the instruction tuning task.}
   % \resizebox{0.65\linewidth}{!}{
   \label{tab:It_hyper}
   \begin{tabular}{lccc}
   \toprule
   Method                   & Hyperparameter             & LLaMA-7B       & LLaMA-13B    \\ \midrule \midrule
   \multirow{5}{*}{Common}    & Optimizer                  & \multicolumn{2}{c}{AdamW}     \\
                              & LR schedule                & \multicolumn{2}{c}{Linear}    \\
                              & Batch Size                 & \multicolumn{2}{c}{16}       \\
                              & Where                 & \multicolumn{2}{c}{Query, Value}       \\
                              & Weight Decay               & \multicolumn{2}{c}{0}      \\ 
                              & Epochs               & 3 & 1  \\
                              & Accumulation Steps   & 
                              \multicolumn{2}{c}{4}  \\
                              
                              \midrule
   \multirow{3}{*}{LoRA}      & Rank                       & \multicolumn{2}{c}{64}        \\
                              & Scaling Value                & \multicolumn{2}{c}{16}        \\
                              & Learning Rate        & \multicolumn{2}{c}{3e-4}      \\ \midrule
    FF    & Learning Rate         & 2e-5 & 1e-5        \\ \midrule
    
   \multirow{3}{*}{FourierFT} & Frequency Components       & \multicolumn{2}{c}{150000}      \\
                              & Scaling Value                      & \multicolumn{2}{c}{64}       \\
                              & Learning Rate        & \multicolumn{2}{c}{1e-3}      \\ \midrule
   \multirow{5}{*}{LoCA}                        & Frequency Components       & \multicolumn{2}{c}{150000} \\
                              & Learning Rate (coefficient)        & \multicolumn{2}{c}{5e-4}      \\
                              & Scaling Value                      & \multicolumn{2}{c}{1}         \\
                              & Learning iterations ($\mathcal{B}_s$) & 600 & 300   \\
                              & Learning Rate (locations)  & \multicolumn{2}{c}{1e-4}      \\ \bottomrule
   \end{tabular}
   \end{table}

\begin{table}[H]
   \centering
   \caption{Hyperparameter configuration for all methods on eight image classification datasets.}
   % \resizebox{0.65\linewidth}{!}{
   \label{tab:vit_hyper}
   \begin{tabular}{lccc}
   \toprule
   Method                   & Hyperparameter             & ViT-Base       & ViT-Large    \\ \midrule \midrule
   \multirow{5}{*}{Common}    & Optimizer                  & \multicolumn{2}{c}{AdamW}     \\
                              & LR schedule                & \multicolumn{2}{c}{Linear}    \\
                              & Batch Size                 & \multicolumn{2}{c}{128}       \\
                              & Where                 & \multicolumn{2}{c}{Query, Value}       \\
                              & Learning Rate (Head)       & 1e-2           & 1e-3         \\
                              & Weight Decay               & \multicolumn{2}{c}{5e-5}      \\ 
                              & Random Seeds                      & \multicolumn{2}{c}{\{2020, 2021, 2022, 2023, 2024\}} \\
                              \midrule
   \multirow{3}{*}{LoRA}      & Rank                       & \multicolumn{2}{c}{16}        \\
                              & Scaling Value                & \multicolumn{2}{c}{16}        \\
                              & Learning Rate (ViT)        & \multicolumn{2}{c}{5e-3}      \\ \midrule
   \multirow{3}{*}{FourierFT} & Frequency Components       & \multicolumn{2}{c}{3000 and 10,000}      \\
                              & Scaling Value                      & \multicolumn{2}{c}{300}       \\
                              & Learning Rate (ViT)        & \multicolumn{2}{c}{5e-2}      \\ \midrule
   \multirow{5}{*}{LoCA}                        & Frequency Components       & \multicolumn{2}{c}{3000 and 10,000} \\
                              & Learning Rate (ViT)        & \multicolumn{2}{c}{5e-2}      \\
                              & Scaling Value                      & \multicolumn{2}{c}{1 and 0.5}         \\
                              & Learning iterations ($\mathcal{B}_s$) & \multicolumn{2}{c}{120}       \\
                              & Learning Rate (locations)  & \multicolumn{2}{c}{1e-4}      \\ \bottomrule
   \end{tabular}
   \end{table}

   \section{Training Procedure} \label{sec:algorithm}
   We provide a pseudo code of our LoCA fine-tuning method in Algorithm \ref{loca_algorithm}.
   \begin{algorithm}
\caption{LoCA Fine-tuning} \label{loca_algorithm}
\begin{algorithmic}[1]
\Require Pre-trained weight $W_0$, dataset $\mathcal{D}$, learning rates $\eta_a$, $\eta_l$, number of alternating iterations $\mathcal{B}_s$, number of coefficient update steps $\mathcal{B}_a$, number of location update steps $\mathcal{B}_l$, total iterations $T$, scaling factor $\alpha$
\Ensure Fine-tuned weight $W'$
\State Initialize $\bm{a} \leftarrow 0$, $\bm{l}$ randomly
\For{$t = 1$ to $T$}
    \State Sample a mini-batch $\mathcal{D}$ and compute the training loss $\mathcal{L}$ 
    \If{$t \leq \bm{B}_s$}
        \If{$t \bmod (\bm{B}_a + \bm{B}_l) < \bm{B}_a$}
            \State Update $\bm{a}$ by $\bm{a} \leftarrow \bm{a} - \eta_a \nabla_a \mathcal{L}$
        \Else
            \State Update $\bm{l}$ by $\bm{l} \leftarrow \bm{l} - \eta_l \frac{\partial \mathcal{L}}{\partial \bm{l}}$ using Eq. (\ref{eq:position_gradient2})
        \EndIf
    \Else
        \State Update $\bm{a}$ by $\bm{a} \leftarrow \bm{a} - \eta_a \nabla_a \mathcal{L}$
    \EndIf
\EndFor
\State \Return $W' = W_0 + \alpha[C^T \mathcal{S}(\bm{a}, \bm{l}, 1)D]$
\end{algorithmic}
\end{algorithm}
   
   \section{Derivation of Proposition \ref{normalmatrixprop}}
       Given any parameter $\overline{W}$ for a down-stream dataset, we assume that the M-estimator $W^\prime$ has asymptotic normality, the estimation error $W^{\prime}-\overline{W}$ is independent of $\overline{W}$ and are asymptotically independent and identically distributed, which can be specified as
       \begin{equation}
           \sqrt{n^\prime}\left(W^\prime-\overline{W} \right)^V\mid\overline{W}\overset{d.}{\rightarrow}\mathcal{N}_{K^2}\left( 0,\sigma_0^2I_{K^2} \right),\label{asymptoticnormalofWprime}
       \end{equation}
   where $n^{\prime}$ is the number of samples in the dataset, $K$ is the width (length) of the weight matrix and $\sigma_0>0$ is a constant independent of $\overline{W}$.
   
       % \begin{lemma}
       %     Assume for random variable sequence $X_1,X_2,\ldots$ and parameterized function $g(X_n,s)$ satisfies $\sqrt{n}g(X_n,s)\overset{d.}{\rightarrow}\mathcal{N}_k(0,I_k),\forall s\in\mathcal{S}$, then for any given random variable $S$ taking values in $\mathcal{S}$ and independent of $X_n$ we can generalize that $\sqrt{n}g(X_n,S)\overset{d.}{\rightarrow}\mathcal{N}_k(0,I_k)$.
       % \end{lemma}
        \begin{lemma}
        % Let $X_1,X_2,\ldots$ be a sequence of $k$-dimensional random variables and $g(X_n,s)$ be a parameterized function with $\mathcal{S}$ as its parameter space, such that for all $s \in \mathcal{S}$,
        % $\sqrt{n}g(X_n,s)\overset{d.}{\rightarrow}\mathcal{N}_k(0,I_k)$. Then, for any random variable $S$ taking values in $\mathcal{S}$ and independent of ${X_n}$, we have
        % $\sqrt{n}g(X_n,S)\overset{d.}{\rightarrow}\mathcal{N}_k(0,I_k)$
        Let $X_1, X_2, \ldots$ be a sequence of $k$-dimensional random variables, and let $g(X_n, s)$ be a parameterized function with parameter space $\mathcal{S}$, such that for all $s \in \mathcal{S}$,
$\sqrt{n}g(X_n, s) \overset{d}{\rightarrow} \mathcal{N}_k(0, I_k)$.
Then, for any random variable $S$ taking values in $\mathcal{S}$ and independent of ${X_n}$, we have
$\sqrt{n}g(X_n, S) \overset{d}{\rightarrow} \mathcal{N}_k(0, I_k)$.
        \end{lemma}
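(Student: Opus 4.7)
The plan is to prove this via characteristic functions and Lévy's continuity theorem, using conditioning on $S$ to reduce to the fixed-parameter hypothesis. Write $\phi_n(t) = \mathbb{E}[\exp(i t^\top \sqrt{n}\, g(X_n, S))]$ for $t \in \mathbb{R}^k$. Since $S$ is independent of the sequence $\{X_n\}$, the conditional expectation given $S$ is obtained by freezing $s = S$ in the expectation over $X_n$, so
\begin{equation*}
\phi_n(t) \;=\; \mathbb{E}\bigl[\,\phi_{n,S}(t)\,\bigr], \qquad \phi_{n,s}(t) \;=\; \mathbb{E}\bigl[\exp(i t^\top \sqrt{n}\, g(X_n, s))\bigr].
\end{equation*}

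By the hypothesis that $\sqrt{n}\,g(X_n, s) \overset{d.}{\to} \mathcal{N}_k(0, I_k)$ for every fixed $s \in \mathcal{S}$, Lévy's continuity theorem (in the easy direction) gives $\phi_{n,s}(t) \to \exp(-\tfrac{1}{2}\|t\|^2)$ pointwise in $s$ for each fixed $t$. Since $|\phi_{n,s}(t)| \le 1$ uniformly and the constant $1$ is integrable against the law of $S$, the dominated convergence theorem yields
\begin{equation*}
\phi_n(t) \;=\; \mathbb{E}\bigl[\phi_{n,S}(t)\bigr] \;\longrightarrow\; \mathbb{E}\bigl[\exp(-\tfrac{1}{2}\|t\|^2)\bigr] \;=\; \exp(-\tfrac{1}{2}\|t\|^2)
\end{equation*}
for every $t$. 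The limit is the characteristic function of $\mathcal{N}_k(0, I_k)$, so another application of Lévy's continuity theorem (in the hard direction) delivers the desired convergence in distribution.

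The only genuine subtlety is the measurability of $s \mapsto \phi_{n,s}(t)$, which is needed to make the outer expectation well-defined; this follows from the joint measurability of $(X_n, s) \mapsto g(X_n, s)$ together with Fubini, under the standard convention that $g$ is jointly measurable in its arguments. Apart from that, the argument is essentially a textbook conditioning trick, and I do not anticipate any serious obstacle. The key conceptual point to emphasize is that the pointwise-in-$s$ convergence of the characteristic function, combined with the uniform bound $|\phi_{n,s}(t)| \le 1$, is exactly what is needed to pass from the conditional CLT to the unconditional one, regardless of the distribution of $S$.
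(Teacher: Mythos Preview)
Your proof is correct and follows essentially the same approach as the paper: condition on $S$, use independence to write the relevant quantity as an integral over $s$, apply the fixed-$s$ hypothesis pointwise, and pass the limit through via dominated convergence with the uniform bound $1$. The only cosmetic difference is that the paper carries this out with cumulative distribution functions (using continuity of the limiting Gaussian CDF), whereas you use characteristic functions and L\'evy's continuity theorem; both variants implement the same conditioning-plus-DCT argument.
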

       \begin{proof}
           Fix any point $t\in\mathbb{R}^k$, denote all coordinates of $X_n$ not larger than $t$ by $X_n\leq t$. Assume the distribution of $S$ and $X_n$ are $P_S,P_n$ respectively. Thus
           \begin{align*}
               \mathbb{P}\left( \sqrt{n}g(X_n,S)\leq t \right)=&\int_{\sqrt{n}g(x,s)\leq t}dP_S(s)dP_n(x)\\
               =&\int_{s\in\mathcal{S}}\mathbb{P}\left( \sqrt{n}g(X_n,s)\leq t \right)dP_{S}(s).
           \end{align*}
           As $\sqrt{n}g(X_n,s)\overset{d.}{\rightarrow}\mathcal{N}_k(0,I_k),\forall s\in\mathcal{S}$ implies $\mathbb{P}\left( \sqrt{n}g(X_n,s)\leq t \right)\rightarrow\Phi_k(t),\forall s\in\mathcal{S}$, where $\Phi_k(\cdot)$ is the C.D.F of standard multivariate normal distribution. Based on dominate convergence theorem and $\mathbb{P}\left( \sqrt{n}g(X_n,s)\leq t \right)\leq 1$, we have
           \begin{equation*}
               \mathbb{P}\left( \sqrt{n}g(X_n,S)\leq t \right)\rightarrow\Phi_k(t),
           \end{equation*}
           which is $\sqrt{n}g(X_n,S)\overset{d.}{\rightarrow}\mathcal{N}_k(0,I_k)$.
       \end{proof}
       Note that we can replace $\mathcal{N}_k(0,I_k)$ with any continuous distribution in $\mathbb{R}^k$ and the result still holds. 
       Based on our assumption and Eq. (\ref{asymptoticnormalofWprime}), we consider $\sqrt{n^\prime}\left(W^\prime-\overline{W} \right)^V\mid\overline{W}$ as a random variable parameterized by $\overline{W}$. Therefore, there exists a constant $\sigma_0$ such that we have:
       \begin{equation*}
           \sqrt{n^\prime}\left( W^\prime-\overline{W} \right)^V\overset{d.}{\rightarrow}\mathcal{N}_{K^2}\left( 0,\sigma_0^2I_{K^2} \right),
       \end{equation*}
       in other words,
       \begin{equation}
           \left( W^\prime-\overline{W} \right)^V=\mathcal{N}_{K^2}\left( 0,\dfrac{\sigma_0^2}{n^\prime}I_{K^2} \right)+o_P\left( \dfrac{1}{\sqrt{n^\prime}} \right).\label{asymptoticnormalofWprime1}
       \end{equation}

       Besides, the assumption gives
       \begin{equation*}
           \left( \overline{W}-\overline{W}_0 \right)^V=\mathcal{N}_{K^2}\left( 0,\overline{\sigma}^2I_{K^2} \right).
       \end{equation*}
       Adding it to Eq. (\ref{asymptoticnormalofWprime1}), we have
       \begin{equation}
           \left( W^\prime-\overline{W}_0 \right)^V=\mathcal{N}_{K^2}\left( 0,\left( \dfrac{\sigma_0^2}{n^\prime}+\overline{\sigma}^2 \right)I_{K^2} \right)+o_P\left( \dfrac{1}{\sqrt{n^\prime}} \right).\label{asymptoticnormalofWprime2}
       \end{equation}

       On the other hand, $W_0$ is the M-estimator of $\overline{W}_0$ using $N$ samples, we have
       \begin{equation*}
           W_0-\overline{W}_0=O_P\left( \dfrac{1}{\sqrt{N}} \right).
       \end{equation*}
       Combining it with Eq. (\ref{asymptoticnormalofWprime2}) we have
       \begin{equation*}
           \Delta W^V=\left( W^\prime-W_0 \right)^V=\mathcal{N}_{K^2}\left( 0,\left( \dfrac{\sigma_0^2}{n^\prime}+\overline{\sigma}^2 \right)I_{K^2} \right)+o_P\left( \dfrac{1}{\sqrt{n^\prime}} \right)+O_P\left( \dfrac{1}{\sqrt{N}} \right).
       \end{equation*}

   \section{Proof of Theorem \ref{theorem_1}}
       Before proving the proposed theorem, we first give a proposition.

       For any matrix $W \in \mathbb{R}^{K \times K}$, let its singular values be $|\lambda_1| \geq \ldots \geq |\lambda_K|$.
Define the discrete Fourier transform of $W$ as $\mathcal{F}(W) = HWH$, where $H \in \mathbb{C}^{K \times K}$ is the DFT matrix. More specifically, we can express $H$ as $H = Re(H) + i Im(H)$, where $i$ is the imaginary unit, and $Re(H), Im(H) \in \mathbb{R}^{K \times K}$ are the real and imaginary coefficients, respectively.
Let $F = (F_{ij})_{1 \leq i,j \leq K} = \mathcal{F}(W)$. 
% we sort the elements of $F$ by magnitude in descending order and denote them as $|F_{(1)}| \geq \ldots \geq |F_{(K^2)}|$.
For each location $(i,j)$, we define a reference matrix $R = (R_{ij})_{1 \leq i,j \leq K}$ as follows:
\begin{equation*}    
R_{ij} = \begin{cases}
-1, & \text{if $F_{ij}$ has a symmetric counterpart and $(i,j)$ satisfies condition $U$} \\
1, & \text{if $F_{ij}$ has a symmetric counterpart but $(i,j)$ does not satisfy condition $U$} \\
0, & \text{otherwise}
\end{cases}
\end{equation*}
Here the condition $U$ is a set of conjugate that
       \begin{gather*}
           \left[ (i=0)\land(j>K-j) \right]\lor\left[ (j=0)\land(i>K-i) \right]\lor~~~~~~~~~~~~~~~~~~~~~~~~~~~~~~~~\\
           ~~~~~~~~~~~~~~~~~~~~~~~~~~~~~~~~\left[ (j>0)\land(j>K-j) \right]\lor\left[ (j=n-j)\land(i>K-i) .\right]
       \end{gather*}
       We then define the half matrix of $F$ by $F^H=(F^H_{ij})_{1\leq i,j\leq K}$, where
       \begin{equation*}
           F^H_{ij}=\left\{ 2\mathbbm{1}(R_{ij}=1)+\mathbbm{1}(R_{ij}=0) \right\}|F_{ij}|^2.
       \end{equation*}
       Similarly, we define the real and imaginary part half matrix of $F$ by $F^R$ and $F^I$, where
       \begin{gather*}
           F^R_{ij}=\left\{ 2\mathbbm{1}(R_{ij}=1)+\mathbbm{1}(R_{ij}=0) \right\}Re(F_{ij})^2,\\
           F^I_{ij}=\left\{ 2\mathbbm{1}(R_{ij}=1)+\mathbbm{1}(R_{ij}=0) \right\}Im(F_{ij})^2.
       \end{gather*}
       Based on the definition, we have $F^H=F^R+F^I$. We then sort $F^H$ in descending order, denoting it as $F^H_{(1)}\geq\ldots\geq F^H_{(K^2)}=0$. It can be inferred that approximately half of these elements are equal to 0. Consider the separate matrix $F^S=(F^R,F^I)\in\mathbb{R}^{K\times 2K}$, and also sort it in descending order, denoted as $F^S_{(1)}\geq\ldots\geq F^S_{(2K^2)}=0$. There are also about half of these elements equal to 0. For the simplicity of notations, we define $L_R=\mathbb{E}_{W\sim G}L(W,\hat{W}_R),L_F^{(i)}=\mathbb{E}_{W\sim G}L(W,\hat{W}_F^{(i)})$ for $i=1,2,3$. Denote $\widetilde{Id}^{(1)}$ be the set of locations that are symmetric counterparts of $Id^{(1)}$.
       \begin{proposition}
           With the notations defined above, for $r<K$, we have
           \begin{gather*}
               L_R=\overset{K}{\underset{i=K-r+1}\sum}|\lambda_i|^2,\\
               L_F^{(1)}=\overset{}{\underset{(i,j)\notin Id^{(1)}\cup \widetilde{Id}^{(1)}}\sum}|F_{ij}|^2,\ L_F^{(2)}=\overset{K^2}{\underset{i=N_2+1}\sum}F^H_{(i)},\ L_F^{(3)}=\overset{2K^2}{\underset{i=N_3+1}\sum}F^S_{(i)},\\
               s.t.\ \ \ ||W||_2^2=||F||_2^2=\overset{K}{\underset{i=1}\sum}|\lambda_i|^2=\overset{K}{\underset{i=1}\sum}\overset{K}{\underset{j=1}\sum}|F_{ij}|^2=\overset{K^2}{\underset{i=1}\sum}F_{(i)}^H=\overset{2K^2}{\underset{i=1}\sum}F^S_{(i)},
           \end{gather*}
           % where for $id=(i,j)$, $F_{id}$ means $F_{ij}$.
       \end{proposition}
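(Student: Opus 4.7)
I would prove the four identities one at a time, then the ``$s.t.$'' chain of norm equalities. For $L_R$, the plan is to invoke the Eckart--Young--Mirsky theorem: the Frobenius-optimal rank-$r$ approximation of $W$ is its truncated SVD, so $L(W,\hat{W}_R)$ equals the sum of the squared discarded singular values, which gives the stated formula (up to whichever convention is used for the ordering of the $|\lambda_i|$).

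For the three frequency-domain cases, the central tool is the unitarity of $\mathcal{F}$: with the appropriate normalization of $H$, $\|\mathcal{F}^{-1}(F-\hat{F})\|_F^2=\|F-\hat{F}\|_F^2$ for any sparse approximation $\hat{F}$, so each reconstruction problem reduces to a coordinate-wise $\ell_2$ question on the spectrum. For $\hat{W}_F^{(1)}$, the learned spectrum matches $F$ on $Id^{(1)}\cup \widetilde{Id}^{(1)}$ (the conjugate-symmetric fill is automatic) and vanishes elsewhere, so the loss is the stated sum over the complementary indices. For $\hat{W}_F^{(2)}$, selecting a single location $(i,j)$ on the non-redundant half zeros the error at both $(i,j)$ and its conjugate partner, or only at $(i,j)$ when the point is self-conjugate; the definition of $F^H$ with weight $2$ at paired entries and weight $1$ at the self-conjugate ones encodes exactly this per-location error reduction, so the greedy rule ``take the largest $N_2$ values of $F^H$'' yields the residual $\sum_{i=N_2+1}^{K^2}F^H_{(i)}$. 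The case $\hat{W}_F^{(3)}$ is the same greedy argument applied to $F^S$, where real and imaginary parts at each location are treated as independent scalar variables, giving $\sum_{i=N_3+1}^{2K^2}F^S_{(i)}$.

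For the norm chain at the bottom, I would combine (i) unitarity of $\mathcal{F}$, giving $\|W\|_2^2=\|F\|_2^2=\sum_{i,j}|F_{ij}|^2$; (ii) the SVD expansion $\|W\|_2^2=\sum_i|\lambda_i|^2$; and (iii) a bookkeeping identity $\sum_i F^H_{(i)}=\sum_i F^S_{(i)}=\sum_{i,j}|F_{ij}|^2$, where the factor-$2$ weighting at the paired non-redundant entries exactly compensates for having zeroed the conjugate partners in the definitions of $F^H$ and $F^S$.

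The main obstacle I expect is the conjugate-symmetry bookkeeping: one has to verify that condition $U$ is precisely the indicator of the non-redundant half of every conjugate pair, and that the self-conjugate locations (DC, and the Nyquist rows/columns when $K$ is even) are handled with weight $1$ rather than $2$. Once that combinatorial accounting is nailed down, the remaining ingredients (Parseval plus a greedy selection argument) are routine.
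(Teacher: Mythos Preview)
Your plan is correct and mirrors the paper's own argument: the paper invokes the truncated SVD (Eckart--Young) for $L_R$, then states the discrete Parseval theorem as a lemma and uses linearity of $\mathcal{F}$ to reduce each $L_F^{(i)}$ to $\|F-\hat F^{(i)}\|_2^2$, after which it simply says ``check $i=1,2,3$ separately.'' Your proposal is in fact more explicit than the paper about the greedy selection step and the conjugate-symmetry bookkeeping for $F^H$ and $F^S$, but the underlying ingredients are identical.
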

       \begin{proof}
           First let us explore the reconstruction loss of low rank approximation. For any $W\in\mathbb{R}^{K\times K}$, its SVD decomposition is given by
       \begin{gather*}
           W=U\Lambda V^T,\ \Lambda=diag(\lambda_1,\ldots,\lambda_K),\\
           U^TU=V^TV=I_K,\ |\lambda_1|\geq\ldots\geq|\lambda_K|.
       \end{gather*}
       The best $\hat{W}_R$ that minimize the reconstruction loss in terms of Frobenius norm is
       \begin{gather*}
           \hat{W}_R=\hat{U}\hat{V}^T,\ \hat{U}=U\Lambda_r^{1/2},\ \hat{V}=V\Lambda_r^{1/2},\\
           \Lambda_r=\left( diag(\lambda_1,\ldots,\lambda_r),0_{r\times(K-r)} \right)^T.
       \end{gather*}
       Thus we can easily calculate the reconstruction loss
       \begin{align*}
           L_R = ||W-\hat{W}_R||_2^2&=||U(\Lambda-\Lambda_r)V^T||_2^2\\
           &=tr\left( \left\{U(\Lambda-\Lambda_r)V^T\right\}^T\left\{U(\Lambda-\Lambda_r)V^T\right\} \right)\\
           &=tr\left( (\Lambda-\Lambda_r)^T(\Lambda-\Lambda_r) \right)\\
           &=\overset{K}{\underset{i=K-r+1}\sum}|\lambda_i|^2.
       \end{align*}
       Before moving on to $L_F^{(i)},i=1,2,3$, we introduce discrete Parseval theorem first.

       \begin{lemma}[Discrete Parseval Theorem]
           For a matrix $X$ of size $K\times K$, with its Discrete Fourier Transform (DFT) denoted by $F$, the sum of the squares of the elements in the original matrix is equal to the sum of the squares of the elements in the DFT matrix, scaled by $1/K$. Formally, if $X$ is the original matrix and $F$ is its DFT, then:
   
           \begin{equation*}
               ||X||_2^2=\sum_{i=0}^{K-1} \sum_{j=0}^{K-1} |X_{ij}|^2 = \frac{1}{K} \sum_{i=0}^{K-1} \sum_{j=0}^{K-1} |F_{ij}|^2=\dfrac{1}{K}||F||_2^2.
           \end{equation*}
       \end{lemma}

       Since $F=\mathcal{F}(W),W=\mathcal{F}^{-1}(F)$, and Fourier transform is linear transform, we have
       \begin{align*}
           L_F^{(i)}&=||W-\hat{W}_F^{(i)}||_2^2=||W-\mathcal{F}^{-1}(\hat{F}^{(i)})||_2^2\\
           &=||\mathcal{F}^{-1}(F)-\mathcal{F}^{-1}(\hat{F}^{(i)})||_2^2\\
           \text{linearity of Fourier Transformation }&=||\mathcal{F}^{-1}(F-\hat{F}^{(i)})||_2^2\\
           \text{Parseval Theorem }&=||F-\hat{F}^{(i)}||_2^2.
       \end{align*}
       Check $i=1,2,3$ separately and we have
       \begin{equation*}
           L_F^{(1)}=\overset{}{\underset{(i,j)\notin Id^{(1)}\cup\widetilde{Id}^{(1)}}\sum}|F_{ij}|^2,\ L_F^{(2)}=\overset{K^2}{\underset{i=N_2+1}\sum}F_{(i)}^H,\ L_F^{(3)}=\overset{2K^2}{\underset{i=N_3+1}\sum}F^S_{(i)}.
       \end{equation*}
       \end{proof}

       As we assume $W\sim \mathcal{N}_{K,K}(0,I_K,I_K)$, we then define $A=W^TW\sim W_K(K,I_K,0)$, which follows a central Wishart distribution. Recall the SVD of $W$, i.e., $W=U\Lambda V^T$, and
       \begin{equation*}
           A=W^TW=V\Lambda^2V^T,\ \Lambda^2=diag(\lambda_1^2,\ldots,\lambda_K^2),
       \end{equation*}
       we can conclude that $\lambda_i^\prime=\lambda_i^2$ is the eigenvalue of the matrix that follows $W_K(K,I_K,0)$ distribution.
   
       Next we present a commonly used result about the Wishart distribution in random matrix theory.
       \begin{lemma}\label{WishartEigenvalueDensity}
           The joint density of $\Lambda^2=diag(\lambda_1^\prime,\ldots,\lambda_K^\prime)=diag(\lambda_1^2,\ldots,\lambda_K^2)$ is
           \begin{equation*}
               g_L(\Lambda^2)=C\left[ \overset{K}{\underset{i=1}{\prod}}\lambda_i^{\prime-1/2}e^{-\lambda_i^\prime/2} \right]\left[ \overset{}{\underset{i<j}{\prod}}|\lambda_i^\prime-\lambda_j^\prime| \right].
           \end{equation*}
       \end{lemma}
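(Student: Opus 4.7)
The plan is to derive the joint density of $\Lambda^2$ in three moves: write down the matrix Wishart density of $A = W^T W$, change variables via the spectral decomposition $A = V \Lambda^2 V^T$, and integrate out the orthogonal factor $V$. This is the classical derivation from random matrix theory (James, Muirhead); I sketch the key steps and highlight where the Vandermonde-like factor comes from.

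First, since the entries of $W \in \mathbb{R}^{K \times K}$ are i.i.d.\ $\mathcal{N}(0,1)$, the Gram matrix $A = W^T W$ follows the central Wishart distribution $W_K(K, I_K, 0)$, whose density on the positive-definite cone is
\begin{equation*}
f_A(A) = c_1 \, |A|^{(n-K-1)/2} \exp\!\left(-\tfrac{1}{2}\mathrm{tr}(A)\right), \qquad A \succ 0,
\end{equation*}
with degrees of freedom $n = K$, so the exponent of $|A|$ is $-1/2$, and $c_1$ is an explicit normalizing constant depending only on $K$.

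Second, I apply the spectral decomposition $A = V \Lambda^2 V^T$ with $V \in O(K)$ and $\Lambda^2 = \mathrm{diag}(\lambda_1', \ldots, \lambda_K')$, $\lambda_i' > 0$. The classical change-of-variables formula from matrix coordinates to spectral coordinates is
\begin{equation*}
dA = \prod_{i<j}(\lambda_i' - \lambda_j') \, d\Lambda^2 \, (V^T dV),
\end{equation*}
where $V^T dV$ denotes the (unnormalized) Haar differential on $O(K)$. Because both $|A| = \prod_i \lambda_i'$ and $\mathrm{tr}(A) = \sum_i \lambda_i'$ depend only on $\Lambda^2$, substituting gives a joint density that factorizes into a function of the eigenvalues and a constant in $V$. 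Marginalizing over $V$ then contributes only the finite Haar volume of $O(K)$, another constant in $K$. Absorbing everything into a single $C$ and replacing $\prod_{i<j}(\lambda_i'-\lambda_j')$ by $\prod_{i<j}|\lambda_i'-\lambda_j'|$ (which allows us to drop the ordering constraint on the eigenvalues at the cost of a factorial combinatorial factor absorbed into $C$) yields the stated formula.

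The main obstacle is the Jacobian in step two. Justifying $dA = \prod_{i<j}(\lambda_i' - \lambda_j') \, d\Lambda^2 \, (V^T dV)$ requires parameterizing the tangent space of symmetric matrices by the differentials $(d\Lambda^2, V^T dV)$, writing $dA = dV \, \Lambda^2 V^T + V \, d\Lambda^2 \, V^T + V \Lambda^2 \, dV^T$, and computing the Jacobian determinant whose off-diagonal $(i,j)$ blocks produce the factor $(\lambda_i'-\lambda_j')$ through the identity $V^T dV + (V^T dV)^T = 0$. This is exactly where the Vandermonde-like product arises. Once this Jacobian is granted, steps one and three are elementary bookkeeping of constants.
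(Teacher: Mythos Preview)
Your proposal is correct and follows essentially the same route as the paper: write the Wishart density of $A=W^TW$, pass to spectral coordinates to pick up the Vandermonde factor $\prod_{i<j}|\lambda_i'-\lambda_j'|$, and absorb the Haar volume and combinatorial constants into $C$. The only cosmetic difference is that the paper packages your Jacobian computation as a black-box citation to Weyl's Integration Formula (Br\"ocker--tom Dieck) and then invokes a separate order-statistics lemma to pass between unordered and ordered eigenvalues, whereas you sketch the differential $dA = dV\,\Lambda^2 V^T + V\,d\Lambda^2\,V^T + V\Lambda^2\,dV^T$ directly and handle the ordering by the absolute-value/factorial remark; the substance is identical.
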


       Noting that Lemma \ref{WishartEigenvalueDensity} is a direct corollary of Weyl's Integration Formula in Lemma \ref{Weyl's Integration Formula} and \ref{exchangeableOrderStatistics}.
       \begin{lemma}\label{Weyl's Integration Formula}\citep{brocker2013representations}.
           If $X\in\mathbb{R}^{K\times K}$ is a real symmetric random matrix with density $g(\lambda_1^\prime,\ldots,\lambda_K^\prime)$, where $g$ is exchangeable, and $\lambda_1^\prime,\ldots,\lambda_K^\prime$ are eigenvalues. Thus the joint density of $(\lambda_1^\prime,\ldots,\lambda_K^\prime)$ is
           \begin{equation*}
               f^\prime(\lambda_1^\prime,\ldots,\lambda_K^\prime)=Cg(\lambda_1^\prime,\ldots,\lambda_K^\prime)\overset{}{\underset{i<j}{\prod}}|\lambda_i^\prime-\lambda_j^\prime|,
           \end{equation*}
           where $C$ is some constant such that
           \begin{equation*}
               \int Cg(\lambda_1^\prime,\ldots,\lambda_K^\prime)\overset{}{\underset{i<j}{\prod}}|\lambda_i^\prime-\lambda_j^\prime|d\lambda_1^\prime\ldots d\lambda_K^\prime=1.
           \end{equation*}
       \end{lemma}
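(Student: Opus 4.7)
The plan is to derive the density of the eigenvalues via a change of variables using the spectral decomposition $X = O\Lambda O^T$, where $O$ is orthogonal and $\Lambda = \mathrm{diag}(\lambda_1^\prime,\ldots,\lambda_K^\prime)$, and then to integrate out the angular (orthogonal) part. Since the density $g$ depends on $X$ only through its (unordered) eigenvalues and is exchangeable, the angular integration will contribute a constant and the remaining factor produced by the Jacobian will be the Vandermonde $\prod_{i<j}|\lambda_i^\prime-\lambda_j^\prime|$.

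First I would set up the parametrization carefully: on the space of real symmetric $K\times K$ matrices (of dimension $K(K+1)/2$) endowed with Lebesgue measure, use the map $\Phi:(O,\Lambda)\mapsto O\Lambda O^T$ from $O(K)\times\mathbb{R}^K$. This map is not one-to-one because permuting the diagonal entries of $\Lambda$ together with the corresponding columns of $O$ leaves $X$ unchanged, and similarly flipping the sign of any column of $O$ leaves $X$ fixed; these symmetries contribute a finite multiplicity factor that I would absorb into the constant $C$. Restricting to a fundamental domain (for instance $\lambda_1^\prime\ge\cdots\ge\lambda_K^\prime$ together with a sign convention on $O$) makes $\Phi$ a diffeomorphism onto the generic stratum (eigenvalues distinct), which has full measure.

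Next I would compute the Jacobian of $\Phi$. Writing the exterior derivative $dX = dO\,\Lambda O^T + O\,d\Lambda\,O^T + O\,\Lambda\,dO^T$ and conjugating by $O^T$, I obtain $O^T dX\,O = [A,\Lambda] + d\Lambda$ where $A = O^T dO$ is skew-symmetric (because $O^T O = I$ gives $A + A^T = 0$). The diagonal entries give $(O^T dX\,O)_{ii} = d\lambda_i^\prime$ and the off-diagonal entries give $(O^T dX\,O)_{ij} = (\lambda_j^\prime-\lambda_i^\prime)A_{ij}$ for $i<j$. Since the congruence $X\mapsto O^T X O$ is a measure-preserving linear map on symmetric matrices, the wedge product of the independent entries of $dX$ equals (up to sign) the wedge of the independent entries of $O^T dX\,O$, which factorizes as
\begin{equation*}
\bigwedge_{i\le j}(O^T dX\,O)_{ij} \;=\; \Bigl(\prod_{i<j}|\lambda_i^\prime-\lambda_j^\prime|\Bigr)\,\bigwedge_i d\lambda_i^\prime \,\wedge\, \bigwedge_{i<j} A_{ij}.
\end{equation*}
The collection $\{A_{ij}\}_{i<j}$ is precisely the Maurer--Cartan basis of left-invariant one-forms on $O(K)$, so $\bigwedge_{i<j}A_{ij}$ is (a constant multiple of) the Haar measure $d\mu(O)$ on the orthogonal group. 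Hence $|\det D\Phi|\,dO\,d\Lambda = \prod_{i<j}|\lambda_i^\prime-\lambda_j^\prime|\,d\mu(O)\,d\Lambda$.

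Finally, substituting this Jacobian into the density identity and marginalizing over $O$ gives
\begin{equation*}
f^\prime(\lambda_1^\prime,\ldots,\lambda_K^\prime) \;=\; \Bigl(\int_{O(K)} d\mu(O)\Bigr)\,g(\lambda_1^\prime,\ldots,\lambda_K^\prime)\prod_{i<j}|\lambda_i^\prime-\lambda_j^\prime|,
\end{equation*}
which, after absorbing the Haar volume and the combinatorial multiplicity into a single normalizing constant $C$, is the claim. The main obstacle I expect is the Jacobian computation, specifically the careful identification of the skew-symmetric block $\{A_{ij}\}_{i<j}$ with the Haar-measure one-forms and the bookkeeping of the multiplicity due to eigenvalue permutations and column-sign symmetries of $O$; these both simplify into the single constant $C$, but justifying this requires handling the generic (distinct-eigenvalue) stratum and noting that the coincidence locus $\{\lambda_i^\prime = \lambda_j^\prime\text{ for some }i\neq j\}$ is a measure-zero set on which the formula degenerates harmlessly.
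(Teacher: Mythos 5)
Your proof is correct, but note that the paper does not actually prove this lemma: it is imported as a black-box citation to Br\"ocker and tom Dieck, so there is no internal argument to compare against. What you have written is the standard change-of-variables derivation for the eigenvalue density of a real symmetric random ensemble, and all the essential ingredients are in place: the dimension count $K(K+1)/2 = K(K-1)/2 + K$ for the parametrization $(O,\Lambda)\mapsto O\Lambda O^T$, the identity $O^T\,dX\,O = [A,\Lambda]+d\Lambda$ with $A=O^T dO$ skew-symmetric, the resulting factorization of the volume form into $\prod_{i<j}|\lambda_i^\prime-\lambda_j^\prime|\,\bigwedge_i d\lambda_i^\prime\wedge\bigwedge_{i<j}A_{ij}$, the identification of $\bigwedge_{i<j}A_{ij}$ with Haar measure on $O(K)$, and the treatment of the $2^K K!$ fiber multiplicity (column signs and eigenvalue permutations) by restriction to the generic stratum of distinct eigenvalues. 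The only place where you could be slightly more explicit is the role of exchangeability of $g$: it is what guarantees that after passing to the ordered fundamental domain $\lambda_1^\prime\ge\cdots\ge\lambda_K^\prime$ and then symmetrizing back to the unordered joint density, the factor $K!$ really is a constant independent of the $\lambda_i^\prime$ and can be absorbed into $C$ together with the Haar volume. This is a presentational remark, not a gap; the argument is sound and is the proof one would find in the cited reference or in any random matrix theory text.
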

       \textbf{Remark.} Exchangeable function $g$ means for any permutation $\pi:[K]\rightarrow [K]$ and $\lambda_1^\prime,\ldots,\lambda_K^\prime$,
       \begin{equation*}
           g(\lambda_1^\prime,\ldots,\lambda_K^\prime)=g(\lambda_{\pi(1)}^\prime,\ldots,\lambda_{\pi(K)}^\prime).
       \end{equation*}

       Wishart distribution $W_K(K,I_K,0)$ has density
       \begin{equation*}
           g(A)=\dfrac{|A|^{-1/2}\exp\left\{ -tr(A)/2 \right\}}{2^{K^2/2}\pi^{K(K-1)/4}\overset{K}{\underset{i=1}{\prod}}\Gamma((K-i+1)/2)},
       \end{equation*}
       where 
       \begin{gather*}
           |A|^{-1/2}=\left( \overset{K}{\underset{i=1}{\prod}}\lambda_i^\prime \right)^{-1/2}=\overset{K}{\underset{i=1}{\prod}}\lambda_i^{-1},\\
           tr(A)=\overset{K}{\underset{i=1}\sum}\lambda_i^\prime=\overset{K}{\underset{i=1}\sum}\lambda_i^2.
       \end{gather*}
       This directly yields an unordered version of the result in Lemma \ref{WishartEigenvalueDensity}. Specifically, let $\lambda_1^\prime, \ldots, \lambda_K^\prime$ be the unordered eigenvalues of $A$. To avoid confusion, we denote these unordered eigenvalues as $\tilde{\Lambda}^2 = (\tilde{\lambda}_1^\prime, \ldots, \tilde{\lambda}_K^\prime)$. Their joint density function is given by:
       \begin{equation}
           \tilde{g}_L(\tilde{\Lambda}^2)=\tilde{C}\left[ \overset{K}{\underset{i=1}{\prod}}\tilde{\lambda}_i^{\prime-1/2}e^{-\tilde{\lambda}_i^\prime/2} \right]\left[ \overset{}{\underset{i<j}{\prod}}|\tilde{\lambda}_i^\prime-\tilde{\lambda}_j^\prime| \right].\label{unorderedEigenvalue}
       \end{equation}

       Note that in the density function of $\Lambda^2$, all $\lambda_1^\prime,\ldots,\lambda_K^\prime$ are exchangeable, and for exchangeable random variables we have Lemma \ref{exchangeableOrderStatistics}.
       \begin{lemma}\label{exchangeableOrderStatistics}
           For any $K$ exchangeable variables $X_1,\ldots,X_K$, which means for any permutation $\pi:[K]\rightarrow [K]$, the following equation holds,
           \begin{equation*}
               (X_1,\ldots,X_K)\overset{d.}{=}(X_{\pi(1)},\ldots,X_{\pi(K)}).
           \end{equation*}
            Let $g$ be the density function of $X_1,\ldots,X_K$. Denote their order statistics as $X_{(1)} \geq \ldots \geq X_{(K)}$. If we use $\overline{g}$ to represent the joint distribution of these order statistics, then we have:
            \begin{equation*}
               \overline{g}(x_{(1)},\ldots,x_{(K)})=K!g(x_1,\ldots,x_K).
           \end{equation*}
       \end{lemma}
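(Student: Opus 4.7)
The plan is to prove the order-statistics density identity by writing the distribution of $(X_{(1)},\ldots,X_{(K)})$ explicitly as a sum over permutations and collapsing that sum with exchangeability. The argument is essentially a one-line change of variables, but it requires careful setup of the ordered-simplex region and a short remark on ties.

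First, I would let $R = \{(y_1,\ldots,y_K) \in \mathbb{R}^K : y_1 \geq y_2 \geq \cdots \geq y_K\}$ denote the closed descending simplex on which $(X_{(1)},\ldots,X_{(K)})$ takes values, and fix an arbitrary Borel set $A \subseteq R$. The event $\{(X_{(1)},\ldots,X_{(K)}) \in A\}$ can be decomposed according to which permutation $\pi \in S_K$ realizes the descending order of $(X_1,\ldots,X_K)$: on the event $X_{\pi(1)} > X_{\pi(2)} > \cdots > X_{\pi(K)}$, the ordered tuple equals $(X_{\pi(1)},\ldots,X_{\pi(K)})$. Since $A \subseteq R$ already encodes the descending inequalities, this yields the almost-surely disjoint decomposition
\[
\{(X_{(1)},\ldots,X_{(K)}) \in A\} \;=\; \bigsqcup_{\pi \in S_K} \{(X_{\pi(1)},\ldots,X_{\pi(K)}) \in A\}.
\]

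Second, I would invoke exchangeability: for every $\pi \in S_K$, the vector $(X_{\pi(1)},\ldots,X_{\pi(K)})$ has the same joint density $g$ as $(X_1,\ldots,X_K)$, so $\mathbb{P}((X_{\pi(1)},\ldots,X_{\pi(K)}) \in A) = \int_A g(y_1,\ldots,y_K)\,dy_1\cdots dy_K$ independently of $\pi$. Summing over all $K!$ permutations gives $\mathbb{P}((X_{(1)},\ldots,X_{(K)}) \in A) = K! \int_A g$, and since $A \subseteq R$ was arbitrary this identifies the joint density $\overline{g}$ of the order statistics on $R$ with $K! \cdot g$, which is the claim.

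The only real obstacle is cosmetic and measure-theoretic: the $K!$ events above are only almost-surely disjoint because of possible ties. Since $X_1,\ldots,X_K$ admit a joint Lebesgue density $g$, the set $\{X_i = X_j \text{ for some } i \neq j\}$ has Lebesgue measure zero and hence probability zero, so the disjointness is genuine up to a null set and does not affect the integral identity. I would either restrict attention to the open region of strict descending inequalities and observe that its closure differs by a null set, or simply note this standard fact in passing before concluding.
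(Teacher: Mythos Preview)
Your argument is correct: the permutation decomposition of $\{(X_{(1)},\ldots,X_{(K)})\in A\}$ for $A$ contained in the descending simplex, the use of exchangeability to equate all $K!$ summands with $\int_A g$, and the null-set treatment of ties are all sound and together yield $\overline{g}=K!\,g$ on $R$. The paper itself does not supply a proof of this lemma; it is stated as a standard auxiliary fact (used alongside Weyl's integration formula to derive the Wishart eigenvalue density), so there is nothing to compare against and your write-up would serve as a complete justification.
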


       Based on Lemma \ref{exchangeableOrderStatistics} and Eq. (\ref{unorderedEigenvalue}), let $g_L$ denote the density function of the random variables with joint density $\tilde{g}_L$, and we finally have
       \begin{equation*}
           g_L(\Lambda^2)=C\left[ \overset{K}{\underset{i=1}{\prod}}\lambda_i^{\prime-1/2}e^{-\lambda_i^\prime/2} \right]\left[ \overset{}{\underset{i<j}{\prod}}|\lambda_i^\prime-\lambda_j^\prime| \right],
       \end{equation*}
       where the constant $C$ has following representation \citep{muirhead2009aspects}:
       \begin{equation*}
           C=\left( \dfrac{\pi}{2} \right)^{K^2/2}\dfrac{1}{\Gamma_K^2(K/2)},
       \end{equation*}
       here $\Gamma_p(a)$ is the multivariate gamma function. To summarize, we can calculate $L_R$ by taking expectation over distribution $g_L$,
       \begin{equation*}
           L_R=\int\overset{K}{\underset{i=K-r+1}\sum}\lambda_i^\prime g_L(\Lambda^2)d\lambda_1^\prime\ldots d\lambda_K^\prime.
       \end{equation*}

       Note that if $K/2\in\mathbb{N}$, there are in total $C_{K^2/2+2}^{Kr}$ possible choice of $Id^{(1)}$ with equal probability.
       \begin{align*}
           &\mathbb{E}_{Id^{(1)}}\left[ \mathbb{E}_{W\sim G}\left( K^2-L_F^{(1)} \right) \right]\\
           =&\dfrac{1}{C_{K^2/2+2}^{Kr}}\overset{}{\underset{Id^{(1)}}\sum}\mathbb{E}_{W\sim G}\left( K^2-L_F^{(1)} \right)\\
           =&\dfrac{1}{C_{K^2/2+2}^{Kr}}\overset{}{\underset{Id^{(1)}}\sum}\mathbb{E}_{W\sim G}\left\{ \overset{}{\underset{id\in Id^{(1)}\cup\widetilde{Id}^{(1)}}\sum}|F_{id}|^2 \right\}\\
           =&\dfrac{1}{C_{K^2/2+2}^{Kr}}\overset{}{\underset{Id^{(1)}}\sum}\mathbb{E}_{W\sim G}\left[ \overset{K}{\underset{i=1}\sum}\overset{K}{\underset{j=1}\sum}|F_{ij}|^2\mathbbm{1}\left\{ (i,j)\in Id^{(1)}\cup\widetilde{Id}^{(1)} \right\} \right]\\
           =&\dfrac{1}{C_{K^2/2+2}^{Kr}}\mathbb{E}_{W\sim G}\left[ \overset{K}{\underset{i=1}\sum}\overset{K}{\underset{j=1}\sum}|F_{ij}|^2\overset{}{\underset{Id^{(1)}}\sum}\mathbbm{1}\left\{ (i,j)\in Id^{(1)}\cup\widetilde{Id}^{(1)} \right\} \right]\\
           =&\dfrac{C_{K^2/2+1}^{Kr-1}}{C_{K^2/2+2}^{Kr}}\mathbb{E}_{W\sim G}\left( \overset{K}{\underset{i=1}\sum}\overset{K}{\underset{j=1}\sum}|F_{ij}|^2 \right)\\
           =&\dfrac{K^3r}{K^2/2+2}<2Kr,
       \end{align*}
       which aligns with intuition that random choice gives average performance. Similarly, if $(K+1)/2\in\mathbb{N}$, there are in total $C_{(K^2+1)/2}^{Kr}$ possible choices of $Id^{(1)}$ with equal probability. And 
       \begin{align*}
           \mathbb{E}_{Id^{(1)}}\mathbb{E}_{W\sim G}\left( K^2-L_F^{(1)} \right)=\dfrac{K^3r}{(K^2+1)/2}<2Kr.
       \end{align*}
   
       On the other hand,
       \begin{align*}
           \mathbb{E}_{W\sim G}\left( K^2-L_R \right)&=\mathbb{E}_{W\sim G}\left( \overset{r}{\underset{i=1}\sum}|\lambda_i|^2 \right)\\
           &=\int g_L(\Lambda^2)\overset{r}{\underset{i=1}\sum}\lambda_i^\prime d\lambda_K^\prime\ldots d\lambda_1^\prime.
       \end{align*}
       % The inequality comes from $|\lambda_1|\geq\ldots\geq|\lambda_K|$, and only takes equality when all $|\lambda_i|$ are exactly the same, which is of probability less than 1. Thus we get Theorem \ref{underachieve},
       % \begin{equation*}
       %     \mathbb{E}_{Id^{(1)}}\mathbb{E}_{W^V\sim N_{n^2}(0,I_{n^2})}L_F^{(1)}>1-\dfrac{k}{n}>\mathbb{E}_{W^V\sim N_{n^2}(0,I_{n^2})}L_R.
       % \end{equation*}
       This calculation is complicated and does not have a closed-form expression. Next, we demonstrate
       \begin{equation*}
           \mathbb{E}_{W\sim G}\left(K^2-L_R\right)>\mathbb{E}_{Id^{(1)}}\mathbb{E}_{W\sim G}\left(K^2-L_F^{(1)}\right).
       \end{equation*}
       We begin by proving that this inequality holds for the case where $r=1$ and $K$ is sufficiently large. Following this, we extend our analysis by numerically approximating the exact values of the integrals for various combinations of $r$ and $K$.       
       We first prove that for $r=1$ and sufficiently large $K$, the inequality $\mathbb{E}_{W\sim G}|\lambda_1|^2=\mathbb{E}_{W\sim G}\lambda_1^\prime>2Kr$ holds. $\lambda_1^\prime,\ldots,\lambda_K^\prime$ has density
       \begin{equation}
           g_L(\Lambda^2)=\left( \dfrac{\pi}{2} \right)^{K^2/2}\dfrac{1}{\Gamma_K^2(K/2)}\left[ \overset{K}{\underset{i=1}{\prod}}\lambda_i^{\prime-1/2}e^{-\lambda_i^\prime/2} \right]\left[ \overset{}{\underset{i<j}{\prod}}|\lambda_i^\prime-\lambda_j^\prime| \right],\label{glgamma}
       \end{equation}
       and $\lambda_1^\prime$ is the largest eigenvalue of a standard Wishart ensemble. We refer to the large deviation result under this circumstance that for large $K$ there exists $c\leq 1$ and
       \begin{equation}
           \lambda_1^\prime=\left( \dfrac{1}{\sqrt{c}}+1 \right)^2K+c^{1/6}\left( \dfrac{1}{\sqrt{c}}+1 \right)^{4/3}K^{1/3}\chi,\label{maxeigenvaluecalcu}
       \end{equation}
       where the random variable $\chi$ has an $K$-independent limiting distribution, which is Tracy-Widom distribution \citep{vivo2007large,johnstone2001distribution,johansson2000shape}. Take expectation on both sides of Eq. (\ref{maxeigenvaluecalcu}) and
       \begin{equation*}
           \mathbb{E}_{W\sim G}\left( K^2-L_R \right)=\mathbb{E}\lambda_1^\prime=\left( \dfrac{1}{\sqrt{c}}+1 \right)^2K+O(K^{1/3}).
       \end{equation*}
       Thus $\dfrac{\mathbb{E}_{W\sim G}\left( 1-L_R \right)}{K}\rightarrow\left( \dfrac{1}{\sqrt{c}}+1 \right)^2\geq 4>2$, which concludes the first inequality in Theorem \ref{theorem_1}. For $r=1$ but not sufficiently large $K$, we directly calculate the $\mathbb{E}\lambda_1^\prime$ and compare it with $2K$. For $r>1$ we can apply similar analysis but that will be much more complex. We demonstrate the result in later numerical approximation (Fig. \ref{fig:sim_1} and \ref{fig:sim_2}).

       Now we turn to $L_F^{(i)},i=1,2,3$. Remember we have
       \begin{align*}
           \mathcal{F}(W)&=\left\{ Re(H)+\bm{i}Im(H) \right\}W\left\{ Re(H)+\bm{i}Im(H) \right\}\\
           &=\left\{ Re(H)WRe(H)-Im(H)WIm(H) \right\}+\bm{i}\left\{ Im(H)WRe(H)+Re(H)WIm(H) \right\}\\
           &=Re(\mathcal{F}(W))+\bm{i}Im(\mathcal{F}(W)).
       \end{align*}

       After vectorization,
       \begin{gather*}
           Re(\mathcal{F}(W))^V=\left\{ Re(H)\otimes Re(H)-Im(H)\otimes Im(H) \right\}W^V,\\
           Im(\mathcal{F}(W))^V=\left\{ Re(H)\otimes Im(H)+Im(H)\otimes Re(H) \right\}W^V.
       \end{gather*}
       As $W^V\sim N_{K^2}\left( 0,I_{K^2} \right)$, and the linear transform of multivariate normal is still normal, we have
       \begin{gather*}
           Re(\mathcal{F}(W))^V\sim N_{n^2}(0,\Sigma_R),Im(\mathcal{F}(W))^V\sim N_{n^2}(0,\Sigma_I), \text{where}\\
           \Sigma_R=\left\{ Re(H)\otimes Re(H)-Im(H)\otimes Im(H) \right\}\left\{ Re(H)\otimes Re(H)-Im(H)\otimes Im(H) \right\}^T,\\
           \Sigma_I=\left\{ Re(H)\otimes Im(H)+Im(H)\otimes Re(H) \right\}\left\{ Re(H)\otimes Im(H)+Im(H)\otimes Re(H) \right\}^T.
       \end{gather*}
       Next we propose that $Re(H)Im(H)=0$.

       \begin{lemma}
           For any $K$, $H$ is the 2d DFT $K\times K$ matrix defined by
           \begin{equation*}
               H_{u,v}=\dfrac{1}{\sqrt{K}}\left\{ \cos(2\pi uv/K)-\bm{i}\sin(2\pi uv/K) \right\}\,,
           \end{equation*}
           we have $Re(H)Im(H)=0$.
       \end{lemma}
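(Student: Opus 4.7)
The plan is to compute the $(u,w)$ entry of the matrix product $Re(H) Im(H)$ directly and show it vanishes by a standard roots-of-unity argument. From the definition, $Re(H)_{u,v} = \frac{1}{\sqrt{K}}\cos(2\pi uv/K)$ and $Im(H)_{u,v} = -\frac{1}{\sqrt{K}}\sin(2\pi uv/K)$, so
\begin{equation*}
(Re(H)\,Im(H))_{u,w} \;=\; -\frac{1}{K}\sum_{v=0}^{K-1}\cos\!\Big(\tfrac{2\pi uv}{K}\Big)\sin\!\Big(\tfrac{2\pi vw}{K}\Big).
\end{equation*}

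Next, I would apply the product-to-sum identity $2\cos A \sin B = \sin(A+B) - \sin(A-B)$ with $A = 2\pi uv/K$ and $B = 2\pi vw/K$, which reduces the sum to
\begin{equation*}
\sum_{v=0}^{K-1}\sin\!\Big(\tfrac{2\pi v(u+w)}{K}\Big)\;-\;\sum_{v=0}^{K-1}\sin\!\Big(\tfrac{2\pi v(u-w)}{K}\Big).
\end{equation*}
Each of these two sums is the imaginary part of a geometric sum $\sum_{v=0}^{K-1} e^{2\pi i v m / K}$ with integer $m$. For $m \not\equiv 0 \pmod K$ the geometric sum equals $0$, and for $m \equiv 0 \pmod K$ it equals $K$, which is real; in either case the imaginary part is $0$. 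Hence both sums vanish, giving $(Re(H)\,Im(H))_{u,w} = 0$ for every $u,w$, and therefore $Re(H)\,Im(H) = 0$.

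There is essentially no obstacle here: the result is a direct trigonometric identity combined with the orthogonality of characters on $\mathbb{Z}/K\mathbb{Z}$. The only mild care needed is to handle the $m \equiv 0 \pmod K$ case separately (where each summand is individually zero, so the conclusion still holds), and to be consistent with the sign convention $Im(H)_{u,v} = -\frac{1}{\sqrt{K}}\sin(2\pi uv/K)$ coming from the $-\bm{i}\sin$ in the definition of $H$. The same argument also yields $Im(H)\,Re(H) = 0$ by swapping the roles of $A$ and $B$, which may be useful downstream when simplifying $\Sigma_R$ and $\Sigma_I$.
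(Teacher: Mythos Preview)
Your proof is correct and follows essentially the same approach as the paper: compute the $(u,w)$ entry directly, apply the product-to-sum identity $\cos A\sin B=\tfrac{1}{2}[\sin(A+B)-\sin(A-B)]$, and observe that each resulting sine sum over $v=0,\dots,K-1$ vanishes. You add a slightly more explicit justification (imaginary part of a geometric sum of roots of unity) and the $m\equiv 0\pmod K$ case, which the paper leaves implicit.
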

       \begin{proof}
           First, let us denote the real part $R$ and the imaginary part $I$ of $H$ as follows:
           \begin{equation*}
               R_{u,v} = \dfrac{1}{\sqrt{K}} \cos\left(\dfrac{2\pi uv}{K}\right),I_{u,v} = -\dfrac{1}{\sqrt{K}} \sin\left(\dfrac{2\pi uv}{K}\right)
           \end{equation*}
   
           We calculate the matrix product $R \cdot I$, where $R$ and $I$ are $K \times K$ matrices. The element of the resulting matrix at location $(u, w)$ is given by:
           \begin{equation*}
               (RI)_{u,w} = \overset{K-1}{\underset{v=0}\sum} R_{u,v} I_{v,w}.
           \end{equation*}
           Substituting the expressions for $R$ and $I$:
           \begin{align*}
               (RI)_{u,w} &= \overset{K-1}{\underset{v=0}\sum} \left\{ \dfrac{1}{\sqrt{K}} \cos\left(\dfrac{2\pi uv}{K}\right) \right\} \left\{ -\dfrac{1}{\sqrt{K}} \sin\left(\dfrac{2\pi vw}{K}\right) \right\}\\
               &=-\dfrac{1}{K} \overset{K-1}{\underset{v=0}\sum} \cos\left(\dfrac{2\pi uv}{K}\right) \sin\left(\dfrac{2\pi vw}{K}\right).
           \end{align*}
           Next, we use the trigonometric identity that $\cos(x) \sin(y) = \left[ \sin(x + y) - \sin(x - y) \right]/2$. Applying this identity, we have
           \begin{equation*}
               \cos\left(\dfrac{2\pi uv}{K}\right) \sin\left(\dfrac{2\pi vw}{K}\right) = \frac{1}{2} \left\{ \sin\left(\dfrac{2\pi uv}{K} + \dfrac{2\pi vw}{K}\right) - \sin\left(\dfrac{2\pi uv}{K} - \dfrac{2\pi vw}{K}\right) \right\}\,.
           \end{equation*}
           Substituting back, we get
           \begin{equation*}
               (RI)_{u,w} = -\dfrac{1}{2K} \overset{K-1}{\underset{v=0}\sum}\left\{ \sin\left(\dfrac{2\pi (u+ w)v}{K}\right) - \sin\left(\dfrac{2\pi (u - w)v}{K}\right) \right\}=0\,.
           \end{equation*}
       \end{proof}

       This lemma gives $Re(H)Im(H)=Im(H)Re(H)=0$. Therefore
       \begin{align*}
           &\left\{ Re(H)\otimes Re(H)-Im(H)\otimes Im(H) \right\}\left\{ Re(H)\otimes Im(H)+Im(H)\otimes Re(H) \right\}\\
           =&\left\{ Re(H) \right\}^2\otimes Re(H)Im(H)+Re(H)Im(H)\otimes \left\{ Re(H) \right\}^2-\\
           &Im(H)Re(H)\otimes \left\{ Im(H) \right\}^2-\left\{ Im(H) \right\}^2\otimes Im(H)Re(H)\\
           =&0,
       \end{align*}
       which indicates $\Sigma_R\Sigma_I=0$, due to the normality, $Re(\mathcal{F}(W))$ is independent of $Im(\mathcal{F}(W))$. $H$ has slightly different property when $K$ is an odd or even number. For the simplicity of proof, we assume $K/2\in\mathbb{N}$, the odd case can be proved similarly.

       \begin{lemma}\label{reimform}
           When $K/2\in\mathbb{N}$, $Re(H)Re(H)^T$ and $Im(H)Im(H)^T$ have the following property:
           \begin{gather*}
               \left\{ Re(H)Re(H)^T \right\}_{u,w}=\left\{
                   \begin{aligned}
                   &1,\ u=w=0,K/2,\\
                   &\dfrac{1}{2},\ u=w\neq 0,K/2,\\
                   &\dfrac{1}{2},\ u\neq w,\ u+w=K,\\
                   &0,\ otherwise.
                   \end{aligned}
               \right.\\
               \left\{ Im(H)Im(H)^T \right\}_{u,w}=\left\{
                   \begin{aligned}
                   &0,\ u=w=0,K/2,\\
                   &\dfrac{1}{2},\ u=w\neq 0,K/2,\\
                   &-\dfrac{1}{2},\ u\neq w,\ u+w=K,\\
                   &0,\ otherwise.
                   \end{aligned}
               \right.
           \end{gather*}
       \end{lemma}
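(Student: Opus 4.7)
The plan is to proceed by direct computation, since both quantities $(Re(H)Re(H)^T)_{u,w}$ and $(Im(H)Im(H)^T)_{u,w}$ are sums of products of sines and cosines over the index $v \in \{0,\ldots,K-1\}$, which can be collapsed using product-to-sum identities. Specifically, I would start by writing
\[
(Re(H)Re(H)^T)_{u,w} = \frac{1}{K}\sum_{v=0}^{K-1}\cos\!\left(\tfrac{2\pi uv}{K}\right)\cos\!\left(\tfrac{2\pi wv}{K}\right),
\]
and apply $\cos A\cos B = \tfrac{1}{2}[\cos(A-B)+\cos(A+B)]$ to convert the summand into $\tfrac{1}{2}[\cos(2\pi(u-w)v/K)+\cos(2\pi(u+w)v/K)]$. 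Analogously for $Im(H)Im(H)^T$, I would apply $\sin A\sin B = \tfrac{1}{2}[\cos(A-B)-\cos(A+B)]$ to obtain the same two cosines with a minus sign on the second.

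The second step is to invoke the standard root-of-unity identity
\[
\sum_{v=0}^{K-1}\cos\!\left(\tfrac{2\pi m v}{K}\right) = \begin{cases} K, & m \equiv 0 \pmod K, \\ 0, & \text{otherwise}, \end{cases}
\]
which follows from the geometric sum $\sum_{v=0}^{K-1} e^{2\pi i m v/K}$ by taking real parts. This reduces the problem to checking for which pairs $(u,w)\in\{0,\ldots,K-1\}^2$ we have $u-w\equiv 0\pmod K$ and/or $u+w\equiv 0\pmod K$.

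The third step is the case analysis. Since $u,w\in[0,K-1]$, the condition $u-w\equiv 0\pmod K$ is simply $u=w$, while $u+w\equiv 0\pmod K$ holds iff $u+w\in\{0,K\}$, i.e.\ either $u=w=0$ or $u+w=K$. Combining, the four regimes where a nonzero contribution arises are: (i) $u=w=0$, where both conditions hold; (ii) $u=w=K/2$ (requiring $K$ even), where both hold since $u+w=K$; (iii) $u=w\notin\{0,K/2\}$, where only the $u-w$ condition holds; (iv) $u\ne w$ with $u+w=K$, where only the $u+w$ condition holds. Plugging these into the reduced expressions produces $1,1,\tfrac{1}{2},\tfrac{1}{2}$ for $Re(H)Re(H)^T$ (the two cosine sums add) and $0,0,\tfrac{1}{2},-\tfrac{1}{2}$ for $Im(H)Im(H)^T$ (the second cosine sum is subtracted), matching the claimed formulas exactly.

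This proof is essentially a routine trigonometric computation, so there is no deep obstacle. The only place one must be careful is the case split, particularly distinguishing $u=w=K/2$ from other diagonal entries: it is precisely at this index that the "wrap-around" $u+w=K$ coincides with $u=w$, doubling the contribution for the cosine case and cancelling it for the sine case. Ensuring this boundary case is handled consistently (and noting that the argument relies on $K/2\in\mathbb{N}$ for $K/2$ to be a valid index; the odd-$K$ analogue would omit case (ii)) is the only subtlety worth flagging.
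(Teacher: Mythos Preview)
Your proposal is correct and follows essentially the same route as the paper: both compute $(Re(H)Re(H)^T)_{u,w}$ via the cosine product-to-sum identity and reduce to the root-of-unity sum $\sum_v \cos(2\pi m v/K)$, with the same case split on whether $u\pm w\equiv 0\pmod K$. The only minor difference is that the paper, rather than computing $Im(H)Im(H)^T$ directly via the sine identity as you do, deduces it in one line from the unitarity relation $H\overline{H}^T=I_K$, which gives $Re(H)Re(H)^T+Im(H)Im(H)^T=I_K$ and hence $Im(H)Im(H)^T=I_K-Re(H)Re(H)^T$.
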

       \begin{proof}
           Follow previous notations,
           \begin{equation*}
               \left( RR^T \right)_{u,w}=\dfrac{1}{2K}\overset{K-1}{\underset{v=0}\sum}\left\{ \cos\left( \dfrac{2\pi (u+w)v}{K} \right)+\cos\left( \dfrac{2\pi (u-w)v}{K} \right) \right\}\,.
           \end{equation*}
           First we get $\left( RR^T \right)_{0,0}=\left( RR^T \right)_{K/2,K/2}=1$. When $u=w\neq 0,K/2$,
           \begin{equation*}
               \left( RR^T \right)_{u,w}=\dfrac{1}{2K}\overset{K-1}{\underset{v=0}\sum}\cos\left( \dfrac{2\pi (u+w)v}{K} \right)+\dfrac{1}{2}=\dfrac{1}{2},
           \end{equation*}
           since $K\nmid (u+w)$. When $u\neq w$ but $u+w=K$,
           \begin{equation*}
               \left( RR^T \right)_{u,w}=\dfrac{1}{2K}\overset{K-1}{\underset{v=0}\sum}\cos\left( \dfrac{2\pi (u-w)v}{K} \right)+\dfrac{1}{2}=\dfrac{1}{2},
           \end{equation*}
           since $K\nmid (u-w)$. For other $u,w$, it is easy to derive $\left( RR^T \right)_{u,w}=0$.

           Moreover, $H\overline{H}^T=I_K$, where $\overline{\cdot}$ means conjugation, indicating that $RR^T+II^T=I_K$, and we get the result for $II^T$.
       \end{proof}

       As $Re(H)Im(H)=Im(H)Re(H)=0$, we can calculate
       \begin{gather*}
           \Sigma_R=\left\{ Re(H)Re(H)^T \right\}\otimes\left\{ Re(H)Re(H)^T \right\}+\left\{ Im(H)Im(H)^T \right\}\otimes\left\{ Im(H)Im(H)^T \right\},\\
           \Sigma_I=\left\{ Re(H)Re(H)^T \right\}\otimes\left\{ Im(H)Im(H)^T \right\}+\left\{ Im(H)Im(H)^T \right\}\otimes\left\{ Re(H)Re(H)^T \right\}.
       \end{gather*}
       Based on Lemma (\ref{reimform}), we can assert that there are $4$ locations in $\Sigma_R$ containing the element $1$. These locations are $(0,0)$, $(K/2,K/2)$, $(K^2/2,K^2/2)$, and $((K^2+K)/2,(K^2+K)/2)$. Excluding rows and columns $0$, $K/2$, $K^2/2$, and $(K^2+K)/2$, each of the remaining rows and columns contains $2$ locations with the value $0.5$. There exists a row permutation matrix $U\in\mathbb{R}^{K\times K}$, such that
       \begin{gather}
           U\Sigma_RU^T=\left(
                   \begin{array}{ccccc}
                       I_4 & & & &  \\
                        & \Delta_2 & & &  \\
                        & & \Delta_2 & & \\
                        & & & \ddots & \\
                        & & & & \Delta_2
                   \end{array}
               \right),\ \ 
           \Delta_2=\left(\begin{array}{cc}
               0.5 & 0.5 \\
               0.5 & 0.5
           \end{array}\right).\label{sigmar}
       \end{gather}
       Since
       \begin{align*}
           \Sigma_R+\Sigma_I&=\left\{ Re(H)Re(H)^T+Im(H)Im(H)^T \right\}\otimes\left\{ Re(H)Re(H)^T+Im(H)Im(H)^T \right\}\\
           &=\left( H\overline{H}^T \right)\otimes \left( H\overline{H}^T \right)=I_K\otimes I_K=I_{K^2},
       \end{align*}
       we have similar results on $\Sigma_I$ that 
       \begin{gather}
           U\Sigma_IU^T=\left(
                   \begin{array}{ccccc}
                       {\bf{0}}_4 & & & &  \\
                        & \Delta_2^- & & &  \\
                        & & \Delta_2^- & & \\
                        & & & \ddots & \\
                        & & & & \Delta_2^-
                   \end{array}
               \right),\ \ 
           \Delta_2^-=\left(\begin{array}{cc}
               0.5 & -0.5 \\
               -0.5 & 0.5
           \end{array}\right).\label{sigmai}
       \end{gather}
       This analysis aligns with the definitions of $F^R$ and $F^I$. Given that $W^V$ follows a standard normal distribution and $\Sigma_R\Sigma_I = 0$, we can represent $\Sigma_R$ and $\Sigma_I$ as shown in Eq. (\ref{sigmar}) and Eq. (\ref{sigmai}), respectively. Let $R$ be the reference matrix, for $i,j$ with $R_{ij}=0$, the $i,j$-th element corresponds to the element with variance 1, and $F_{ij}^R\sim \chi_1^2,F_{ij}^I=0$; for $i,j$ with $R_{ij}=1$, $F_{ij}^R,F_{ij}^I\sim\chi_1^2$; for $i,j$ such that $R_{ij}=-1$, $F_{ij}^R=F_{ij}^I=0$. And for all $i,j$ with $R_{ij}\neq -1$, $F_{ij}^R$ and $F_{ij}^I$ are independent.
       
       When $R_{ij}=0$, $\left|F_{ij}^H\right|^2=Re(F_{ij})^2\sim\chi_1^2$; when $R_{ij}=1$, $\left|F_{ij}^H\right|^2=2Re(F_{ij})^2+2Im(F_{ij})^2\sim\chi_2^2$. Thus we can reformulate $L_F^{(2)}$ and $L_F^{(3)}$ in a more clear way. Define $\psi_1,\ldots,\psi_{K^2}\overset{i.i.d.}{\sim}\chi_1^2$, $\phi_1,\ldots,\phi_{(K^2-4)/2}\overset{i.i.d.}{\sim}\chi_2^2,\phi_{(K^2-2)/2},\ldots,\phi_{(K^2+4)/2}\overset{i.i.d.}{\sim}\chi_1^2$. Denote the order statistics of $\psi_i,\phi_i$ as $\psi_{(1)}\geq\ldots\geq\psi_{(K^2)}$ and $\phi_{(1)}\geq\ldots\geq\phi_{((K^2+4)/2)}$, we then have
       \begin{equation}
           L_F^{(2)}\overset{d.}{=}\overset{(K^2+4)/2}{\underset{i=N_2+1}\sum}\phi_{(i)},\ L_F^{(3)}\overset{d.}{=}\overset{K^2}{\underset{i=N_3+1}\sum}\psi_{(i)}\,,\label{reformulate}
       \end{equation}
       where $\overset{d.}{=}$ means equality in distribution.      
       In other words,
       \begin{equation*}
           \mathbb{E}_{W\sim G}\left( K^2-L_F^{(2)} \right)=\overset{N_2}{\underset{i=1}\sum}\mathbb{E}\phi_{(i)},\ \mathbb{E}_{W\sim G}\left( K^2-L_F^{(3)} \right)=\overset{N_3}{\underset{i=1}\sum}\mathbb{E}\psi_{(i)}\,,
       \end{equation*}
       i.e., $\mathbb{E}_{W\sim G}\left( K^2-L_F^{(3)} \right)$ is the summation of i.i.d. chi square order statistics' expectation. Similarly, we can bound $\mathbb{E}_{W\sim G}\left( K^2-L_F^{(2)} \right)$, by defining
       \begin{equation*}
           \xi_1^{(1)},\ldots,\xi_{(K^2-4)/2}^{(1)},\xi_1^{(2)},\ldots,\xi_{(K^2+4)/2}\sim\chi_2^2,
       \end{equation*}
       and corresponding order statistics
       \begin{equation*}
           \xi_{(1)}^{(1)}\geq\ldots\geq\xi_{((K^2-4)/2)}^{(1)},\ \ \xi_{(1)}^{(2)}\geq\ldots\geq\xi_{((K^2+4)/2)}^{(2)}.
       \end{equation*}
       Define $M_1=\overset{N_2}{\underset{i=1}\sum}\mathbb{E}\xi_{(i)}^{(1)}$ and $M_2=\overset{N_2}{\underset{i=1}\sum}\mathbb{E}\xi_{(i)}^{(2)}$, we have $M_1\leq \mathbb{E}_{W\sim G}\left( K^2-L_F^{(2)} \right)\leq M_2$.
       \begin{lemma}
           For any $n$ i.i.d. random variables with pdf $f(x)$ and cdf $H(x)$, the $l$-th largest order statistic has density $h_l(x)=nC_{n-1}^{l-1}h(x)H(x)^{l-1}\left\{ 1-H(x) \right\}^{n-l}$.
       \end{lemma}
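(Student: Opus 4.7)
The plan is to derive the density of the $l$-th order statistic by a direct combinatorial/infinitesimal argument, since this is a classical result and no new machinery is required. First I would clarify the convention: as stated, the formula $h_l(x)=n\binom{n-1}{l-1}h(x)H(x)^{l-1}\{1-H(x)\}^{n-l}$ matches the density of the $l$-th \emph{smallest} order statistic under the ordering $X_{(1)}\le\cdots\le X_{(n)}$; equivalently, with the ``$l$-th largest'' reading used elsewhere in the proof, one should swap the roles of $H$ and $1-H$. I will pick one convention (say $l$-th smallest) and state the density accordingly, then note the cosmetic relabeling needed for the $l$-th largest version invoked in Eq.~(\ref{reformulate}).

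Next I would carry out the combinatorial argument. For $X_{(l)}\in[x,x+dx]$ to occur, exactly one of the $n$ i.i.d.\ samples must fall in $[x,x+dx]$, exactly $l-1$ of the remaining $n-1$ samples must lie below $x$, and the remaining $n-l$ must lie above $x$. The number of ways to designate the sample hitting the interval is $n$, and the number of ways to split the remaining $n-1$ into the ``below'' and ``above'' groups is $\binom{n-1}{l-1}$. By independence, the three probabilities multiply to $h(x)\,dx\cdot H(x)^{l-1}\cdot\{1-H(x)\}^{n-l}$; the probability that two or more samples land simultaneously in $[x,x+dx]$ is $O(dx^2)$ and is discarded in the limit. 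Dividing by $dx$ delivers the claimed density.

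As a sanity check I would verify the result by an alternative CDF route: write
\begin{equation*}
\mathbb{P}(X_{(l)}\le x)=\sum_{k=l}^{n}\binom{n}{k}H(x)^{k}\{1-H(x)\}^{n-k},
\end{equation*}
differentiate term by term in $x$, and confirm that the telescoping sum collapses to $n\binom{n-1}{l-1}h(x)H(x)^{l-1}\{1-H(x)\}^{n-l}$. This also doubles as a bookkeeping check on the binomial coefficient.

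I do not anticipate any real obstacle: this is a textbook fact and the proof is two lines once the convention is fixed. The only place requiring care is making sure the stated formula is consistent with the downstream use in Eq.~(\ref{reformulate}), where the order statistics are written in decreasing order; a brief remark on this convention will suffice.
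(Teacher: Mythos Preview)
Your proposal is correct and gives the standard textbook derivation; the paper itself states this lemma as a known fact without proof, so there is nothing to compare against. Your remark about the convention is on point: as written, the formula is the density of the $l$-th \emph{smallest} order statistic, whereas the surrounding text orders $\psi_{(1)}\ge\cdots\ge\psi_{(K^2)}$ in decreasing fashion, so a brief relabeling note is indeed warranted.
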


       We claim that for given $r<K/3$,
       \begin{align}
           \overset{N_3}{\underset{i=1}\sum}\mathbb{E}\psi_{(i)}\geq M_2\geq M_1\geq \int g_L(\Lambda^2)\overset{r}{\underset{i=1}\sum}\lambda_i^\prime d\lambda_K^\prime\ldots d\lambda_1^\prime,\label{finaltarget}
       \end{align}
       where $g_L(\Lambda^2)$ is given in Eq. (\ref{glgamma}). We verify this inequality by numerical calculation, since each item in Eq. (\ref{finaltarget}) is already a closed form integration. Specifically, we compare the ratios $\frac{L}{K^2}$ for various combinations of $K$ and $r$, where $L$ represents $L_R$, $L_F^{(1)}$, $L_F^{(2)}$, and $L_F^{(3)}$. For commonly used $r$ values, we examined $K$ from 100 to 300, while for larger matrices with $K = 768$ and $K = 4096$, we tested $r$ values from 8 to 32. Throughout these tests, we employ specific definitions for the different $L$ values: $L_F^{(1)} = 2Kr$, $K^2 - M_2 \leq L_F^{(2)} \leq K^2 - M_1$, and $L_F^{(3)} = L_D$, with the last definition verified by Theorem \ref{theorem2}.

       \textbf{Remark.} Given that the four integrals in Eq. (\ref{finaltarget}) are not easily expressed in a straightforward manner, directly proving the inequality is impractical. Beyond numberical approximation for commonly used $r$ and $K$ in Fig. \ref{fig:sim_1} and \ref{fig:sim_2}, we offer an intuitive illustration to show why the inequality holds.

       A tight bound on order statistics is given by \citet{arnold1979bounds,bertsimas2006tight}: for $X_1,\cdots,X_n$ i.i.d. with expectation $\mu$ and variance $\sigma^2$, the expectation of $l$-th order statistic is bounded by $\mu+\sigma\sqrt{\dfrac{n-l}{l}}$. Consider using this bound to approximate $\overset{N_3}{\underset{i=1}\sum}\mathbb{E}\psi_{(i)}$ and $M_1,M_2$:
       \begin{gather*}
           n_1=K^2,\mu_1=\mathbb{E}\psi_i=1,\sigma_1=\sqrt{Var(\psi_i)}=\sqrt{2},\\
           n_2=K^2/2+2,\mu_2=\mathbb{E}\xi_i^{(2)}=2,\sigma_2=\sqrt{Var(\xi_i^{(2)})}=2.
       \end{gather*}
       Thus
       \begin{align*}
           2 \overset{2Kr/3}{\underset{i=1}\sum}\sqrt{\dfrac{K^2/2+2-i}{i}}&=\sqrt{2}\overset{2Kr/3}{\underset{i=1}\sum}\sqrt{\dfrac{K^2}{i}+\dfrac{4}{i}-2}\\
           &\approx\sqrt{2}\overset{2Kr/3}{\underset{i=1}\sum}\sqrt{\dfrac{K^2}{i}-1}\\
           &<\sqrt{2}\overset{Kr}{\underset{i=1}\sum}\sqrt{\dfrac{K^2}{i}-1},
       \end{align*}
       which gives the upper bound of $M_2$ is smaller than that of $\overset{N_3}{\underset{i=1}\sum}\mathbb{E}\psi_{(i)}$.

       \begin{figure}[H]
           \centering
           \includegraphics[width=0.9\linewidth]{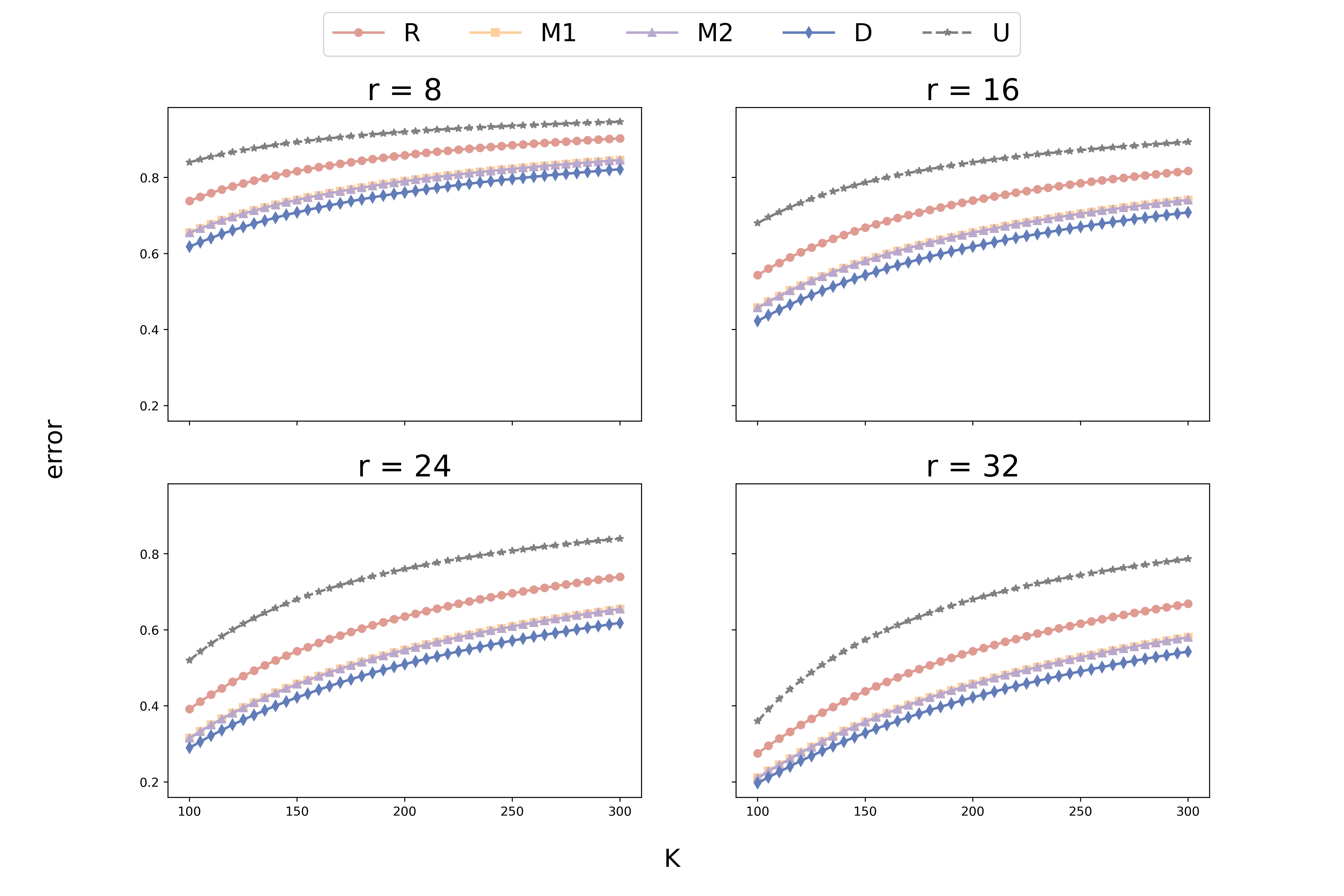}
           \caption{Reconstruction errors of different $r$, $K$ and methods. Specify $r=8,16,24,32$ and $K\in[100,300]$. R denotes the low rank method, the curve is $L_R/K^2$; M1 and M2 denotes $1-M_1/K^2,1-M_2/K^2$ respectively; D denotes $L_F^{(3)}/K^2$ or $L_D/K^2$; U denotes $1-2r/K$.}
           \label{fig:sim_1}
       \end{figure}
       \begin{figure}[H]
           \centering
           \includegraphics[width=0.9\linewidth]{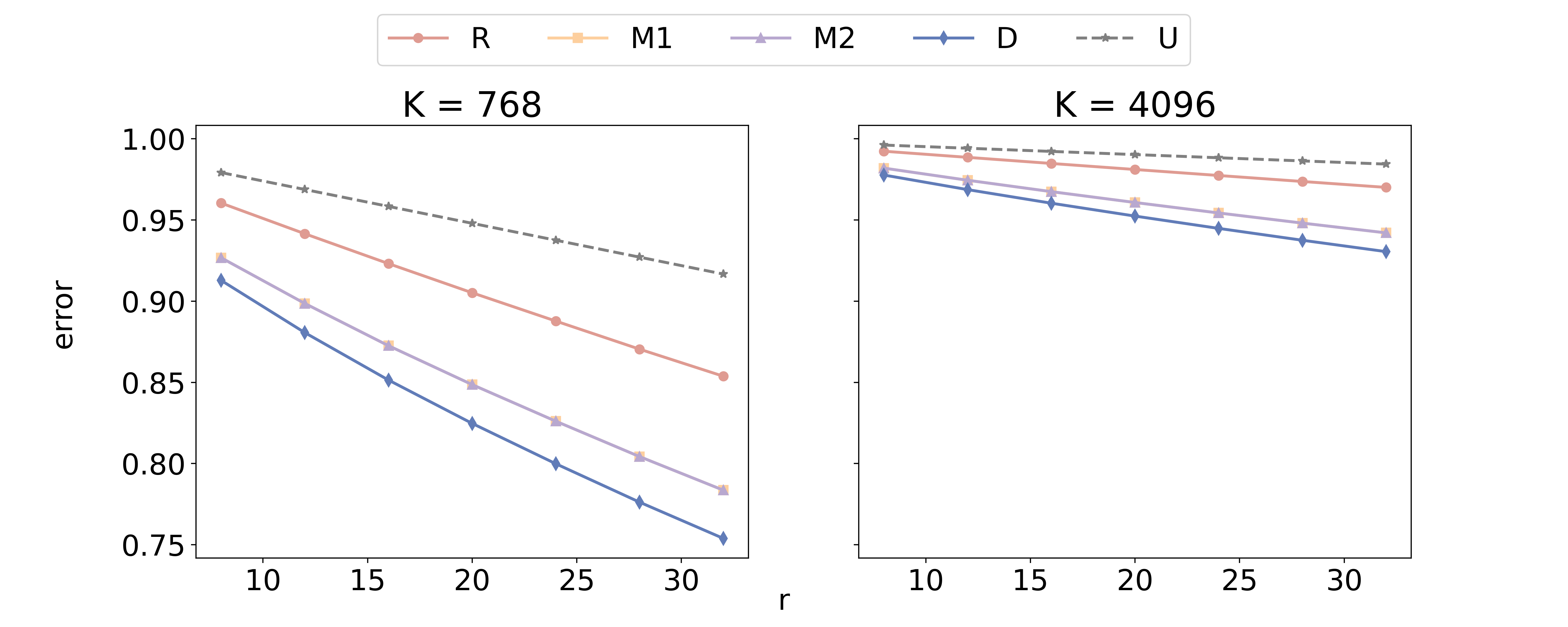}
           \caption{Reconstruction errors of different $r$, $K$ and methods. Specify $K=768,4096$ and $r\in[8,32]$. R denotes the low rank method, the curve is $L_R/K^2$; M1 and M2 denotes $1-M_1/K^2,1-M_2/K^2$ respectively; D denotes $L_F^{(3)}/K^2$ or $L_D/K^2$; U denotes $1-2r/K$.}
           \label{fig:sim_2}
       \end{figure}

   \section{Proof of Theorem \ref{theorem2}}
   \begin{proof}
       Theorem \ref{theorem2} is a corollary of Eq. (\ref{reformulate}). For notation simplicity, denote the expectation of reconstruction loss of DCT method as $L_D=\mathbb{E}_{W\sim G}\left\{ L(W,\hat{W}_D) \right\}$.

       Denote discrete cosine transform as $D=\mathcal{D}(W)=QWQ^T$, where $Q\in\mathbb{R}^{K\times K}$ is the DCT matrix satisfies $QQ^T=I_K$. Vectorize $D$ we get
       \begin{align*}
           D^V=(Q\otimes Q)W^V\sim\mathcal{N}_{K^2}(0,\Sigma_D),
       \end{align*}
       where $\Sigma_D=(Q\otimes Q)(Q\otimes Q)^T=(QQ^T)\otimes(QQ^T)=I_{K^2}$.

       Denote the order statistics of $D$'s elements as $D_{(1)}\geq\ldots\geq D_{(K^2)}$. The Parseval theorem also holds for DCT, thus
       \begin{equation*}
           L_D=\mathbb{E}_{W\sim G}\left\{ \overset{K^2}{\underset{i=N_D+1}\sum}|D_{(i)}|^2 \right\}=K^2-\mathbb{E}_{W\sim G}\left\{ \overset{N_D}{\underset{i=1}\sum}|D_{(i)}|^2 \right\}.
       \end{equation*}
       Under the situation of $W\sim G$, $|D_{ij}|^2\sim\chi_1^2$ and $K^2-L_D$ is the expectation of the largest $N_D$ out of $K^2$ random $\chi_1^2$ variables, which exactly equals to the $K^2-L_F^{(3)}$ in Eq. (\ref{reformulate}) when $N_D=N_3$.
   \end{proof}

   \section{Computational Efficiency of Gradient Estimation}
   \label{sec:computation_grad}
   Recall that the 2D iDCT can be represented as $\Delta W = \alpha[C^T \mathcal{S}(\bm{a},\bm{l},\bm{1}) D]$, here $C^T$ and $D$ are iDCT transformation matrices for rows and columns respectively. We show that the gradient of location $\bm{l}$ is computed alongside with the gradient of $\bm{a}$, introducing only negligible additional computations. 

    For ease of representation, we denote the sparse matrix $\mathcal{S}(\bm{a},\bm{l},\bm{1})$ as $W_s$. We first show how a change at location $(i,j)$ in $W_s$ affects location $(m,n)$ in $\Delta W$ \footnote{Here we use $[\cdot,\cdot]$ to denote the index operation on a matrix.}:
    \begin{equation}
    \frac{\partial \Delta W[m, n]}{\partial W_s[i,j]} = \alpha C^T[m,i] D[j,n].
    \end{equation}
    Now, consider $\partial \mathcal{L}/\partial \Delta W$ that we get during backpropagation, if we want to get the gradient of an element in $\bm{a}$ (indexed by $i, j$), we need to compute:
    \begin{equation} \label{eq:appendix_gradient}
    \frac{\partial \mathcal{L}}{\partial W_s[i,j]} = \sum_{m,n} (\frac{\partial \mathcal{L}}{\partial \Delta W[m,n]} \frac{\partial \Delta W[m,n]}{\partial W_s[i,j]}).
    \end{equation}
Expanding Eq. (\ref{eq:appendix_gradient}), we have
\begin{equation}
\frac{\partial \mathcal{L}}{\partial W_s[i,j]} =  \alpha \sum_{m,n} (\frac{\partial \mathcal{L}}{\partial \Delta W[m,n]} C^T[m,i] D[j,n])
= \alpha \underbrace{(D (\frac{\partial \mathcal{L}}{\partial \Delta W})^T C^T)^T}_{DCT, \text{matrix} Z}[i,j].
\end{equation}

Therefore, to get the gradient of $\bm{a}$, we also need to compute the matrix $Z$ in Eq. (\ref{eq:position_gradient2}). The gradient of each element in $\bm{a}$ can be directly indexed by locations, while the gradient of each element in $\bm{l}$ can be estimated according to Section \ref{sec:gradient_estimate}, which will introduce only negligible additional computation.

   \section{Computational Complexity and Memory Cost Comparison} \label{sec:computation}
    As discussed in Section \ref{sec:idct_para}, the original implementation of DCT, i.e., Eq. (\ref{eq:original_dct}) can take two enhanced forms depending on the sparsity of the DCT spectrum: a sparse matrix-based implementation and a fast algorithm-based implementation. The computational complexity of using the sparse matrix implementation is $O(\mathcal{B}pq)$, where $\mathcal{B}$ is the number of frequency components, and $p$ and $q$ are the dimensions of the weight matrix. The fast algorithm-based implementation has a complexity of $O(pq \log(pq))$. It is worth noting that PyTorch currently lacks a specialized fast algorithm for DCT. To address this, we implemented a fast DCT based on FFT. In comparison, LoRA has a complexity of $O(rpq)$, where $r$ is the rank. FourierFT, which utilizes iFFT algorithm, has an asymptotic complexity of $O(pq \log(pq))$.

    From an asymptotic analysis perspective, the fast implementations of LoCA and FourierFT have the same complexity, while the complexity of LoRA is lower when $r < \log(pq)$. However, noting that the practical performance can differ significantly from theoretical asymptotic analysis due to various factors such as implementation details, hardware-specific optimizations, the constant coefficient in computation complexity and the actual values of $\mathcal{B}$, $r$, and $pq$. In our experimental comparisons, we observed that the actual running times of these methods are much closer than the asymptotic analysis might suggest. Table \ref{tab:complexity} presents a comparison of the three methods.

\begin{table}[H]
\centering
\caption{\textcolor{black}{Comparison of actual training speed and memory costs on a single Tesla H100 GPU. LoCA$^1$ represents the sparse matrix-based iDCT implementation, and LoCA$^2$ refers to the fast iDCT implementation based on iFFT. LoCA $^3$ is the DCT implementation in the original matrix multiplication form (default implementation). All experimental configurations are consistent with the ones in main experiments.}}
\label{tab:complexity}
\resizebox{\linewidth}{!}{
% Please add the following required packages to your document preamble:
% \usepackage{multirow}
\begin{tabular}{lccccccc}
\toprule
\multirow{2}{*}{Method} & \multirow{2}{*}{Asymptotic Complexity} & \multicolumn{2}{c}{\begin{tabular}[c]{@{}c@{}} MRPC \\ RoBERTa-base\end{tabular}}                                                    & \multicolumn{2}{c}{\begin{tabular}[c]{@{}c@{}}Alpaca-52K\\ LLaMA-1-7b\end{tabular}}                                                  & \multicolumn{2}{c}{\begin{tabular}[c]{@{}c@{}}StanfordCars\\ ViT-base\end{tabular}}                                                  \\ \cmidrule{3-8} 
                        &                                        & \begin{tabular}[c]{@{}c@{}}Training Speed\\ (iterations/s)\end{tabular} & \begin{tabular}[c]{@{}c@{}}Memory Cost\\ (MB)\end{tabular} & \begin{tabular}[c]{@{}c@{}}Training Speed\\ (iterations/s)\end{tabular} & \begin{tabular}[c]{@{}c@{}}Memory Cost\\ (MB)\end{tabular} & \begin{tabular}[c]{@{}c@{}}Training Speed\\ (iterations/s)\end{tabular} & \begin{tabular}[c]{@{}c@{}}Memory Cost\\ (MB)\end{tabular} \\ \midrule
LoCA$^1$                  &      $O(\mathcal{B} \log(pq))$         & 28.56                                                                   & 3,828                                                      & -                                                                       & -                                                          & 2.28                                                                    & 4,214                                                      \\
LoCA$^2$                  &     $O(pq \log(pq))$                   & 25.12                                                                   & 3,834                                                      & 0.63                                                                    & 57,152                                                     & 1.01                                                                    & 3,782                                                      \\
LoCA$^3$                  &        $O(p^2q^2)$                     & 27.77                                                                   & 3,793                                                      & 0.87                                                                    & 57,888                                                     & 2.33                                                                    & 3,754                                                      \\
FourierFT               &         $O(pq \log(pq))$               & 28.82                                                                   & 4,050                                                      & 0.89                                                                    & 58,868                                                     & 2.35                                                                    & 3,760                                                      \\
LoRA                    &          $O(rpq)$                      & 31.14                                                                   & 3,758                                                      & 1.18                                                                    & 53,154                                                     & 2.78                                                                    & 3,708                                                      \\ \bottomrule
\end{tabular}}
\end{table}

As shown in Table \ref{tab:complexity}, despite the differences in asymptotic complexities, the actual running speeds of LoCA and FourierFT are very close, with LoRA being slightly faster (since the matrix multiplication operation is highly optimized on the GPU). This suggests that the implementation efficiency and hardware utilization play significant roles in practical performance. \textcolor{black}{For the memory consumption, both LoCA and FourierFT exhibit marginally higher memory usage compared to LoRA. However, LoCA consistently maintains a lower memory footprint than FourierFT across all test scenarios.}

Notably, there is still potential for further optimization for our method. Since the current fast DCT implementation is based on FFT, a lot of redundant computation is introduced to construct a DCT into the form of a DFT. A specialized fast DCT algorithm could potentially improve the speed of LoCA even further, potentially bringing its speed closer to or even surpassing that of LoRA while maintaining its superior adaptation capability. For FourierFT, while FFT can handle complex data, it introduces computational redundancies when processing real-valued data, making DCT theoretically superior in terms of both computational speed and memory usage for real-number operations. We leave the development and implementation of such optimizations as future work.
   
\section{Examples of Instruction Tuning Experiments}
Fig. \ref{fig:instruction_example_1} and Fig. \ref{fig:instruction_example_2} show multi-turn conversations on two MT-Bench questions of a fine-tuned LLaMA model using LoCA and a pre-trained LLaMA model. From the dialogue, we can see that the instruction-following ability of the pre-trained LLaMA is quite limited. The first-turn answer does not perfectly address the given question, and in the second round, it barely responds according to the user request at all. However, after LoCA fine-tuning, there is a noticeable improvement in the quality of the answers, which can be seen from the evaluation of GPT-4.

       \begin{figure}[H]
           \centering
           \includegraphics[width=1.0\linewidth]{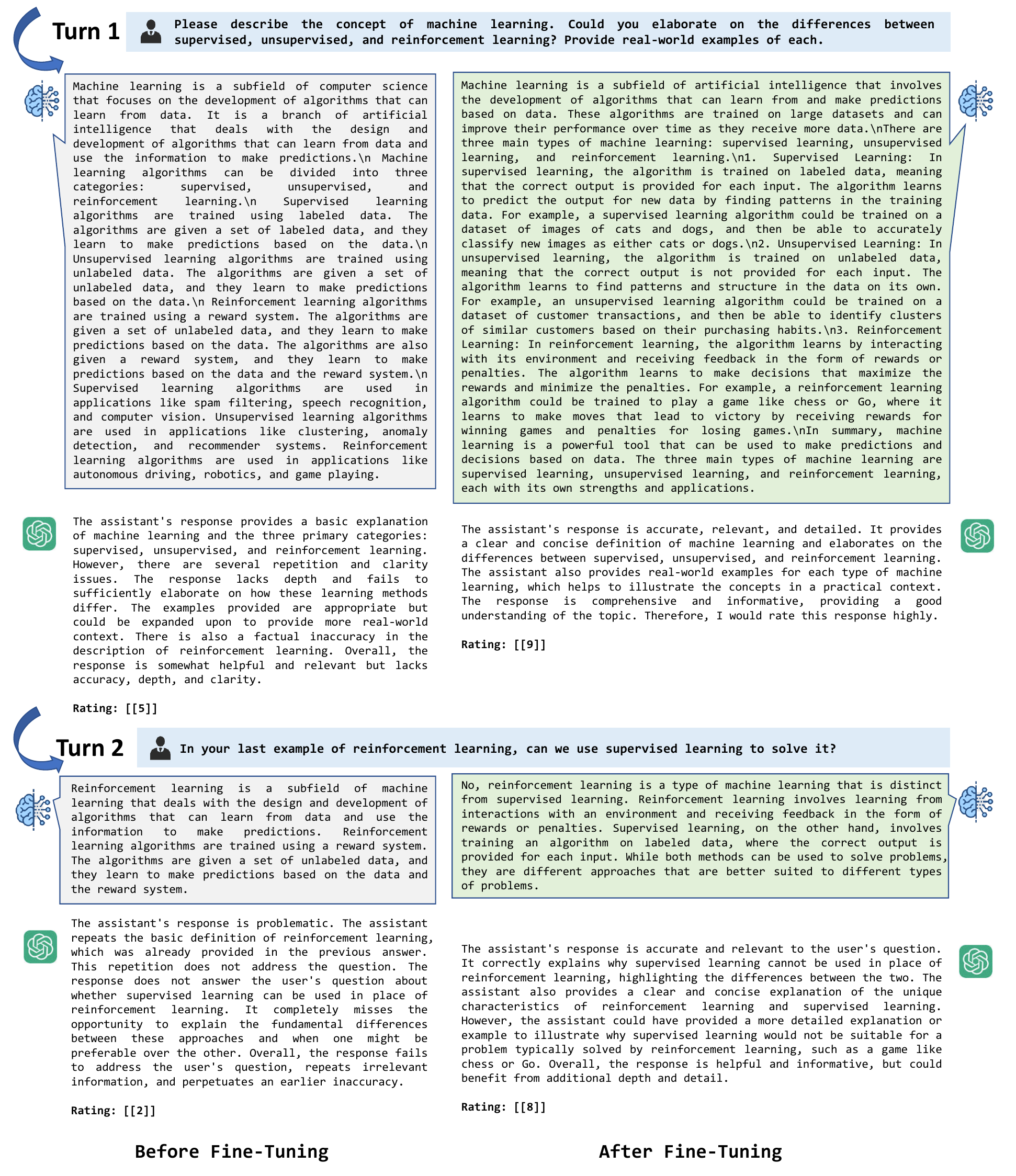}
           \caption{Question 149 from MT-Bench with the multi-turn interactions of pre-trained LLaMA-7b (left) and fine-tuned LLaMA-7b using LoCA (right). We also attach the single-answer grading GPT-4 judgment for reference.}
           \label{fig:instruction_example_1}
       \end{figure}

       \begin{figure}[H]
           \centering
           \includegraphics[width=1.0\linewidth]{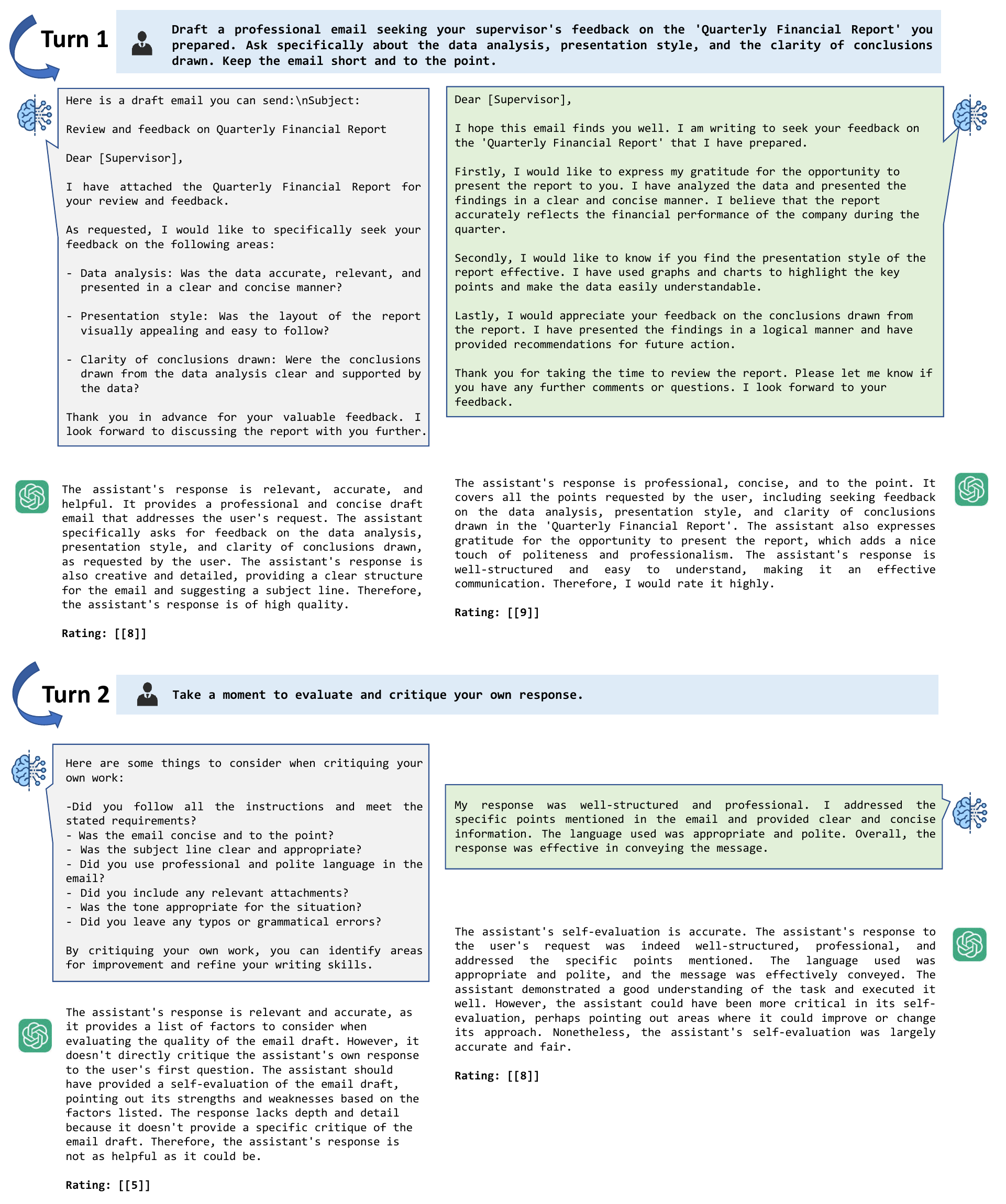}
           \caption{Question 82 from MT-Bench with the multi-turn interactions of pre-trained LLaMA2-13b (left) and fine-tuned LLaMA2-13b using LoCA (right). We also attach the single-answer grading GPT-4 judgment for reference.}
           \label{fig:instruction_example_2}
       \end{figure}
Fig. \ref{fig:instruction_example_3} shows the responses of LLaMA-2-7b models fine-tuned with LoCA and FourierFT on two open-ended questions from the Vicuna dataset. It can be observed that while both fine-tuned models answered the questions well, the LoCA fine-tuned model provided more detailed and insightful responses, as evidenced by GPT-4's evaluation.

       \begin{figure}[H]
           \centering
           \includegraphics[width=1.0\linewidth]{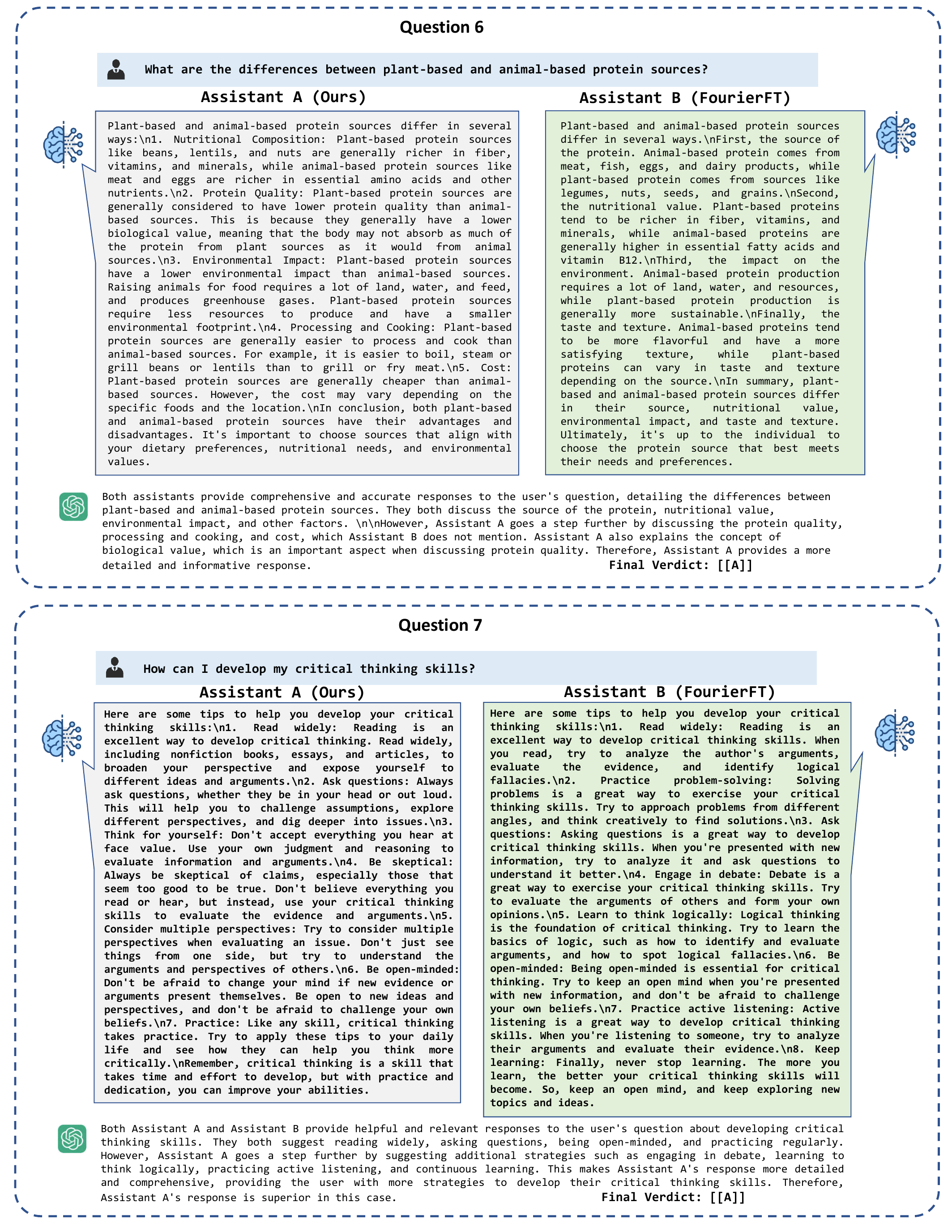}
           \caption{Question 6 and 7 from Vicuna-Bench. We present the pairwise-version judgment by GPT-4 to differentiate the performance of our fine-tuned LLaMA2-7b (left) and FourierFT fine-tuned LLaMA2-7b (right).}
           \label{fig:instruction_example_3}
       \end{figure}

\section{Instruction Tuning Stable Diffusion}
We show how our PEFT method can be used to fine-tune Stable Diffusion \citep{rombach2022high} so that it can perform specific image editing tasks according to instructions. Our experiment is based on InstructPix2Pix \citep{brooks2023instructpix2pix}, which performs instruction fine-tuning on numerous generated image pairs and prompts using pretrained Stable Diffusion checkpoints. The public InstructPix2Pix model is good at executing general instructions, but may not be skilled at specific instructions. 

Following \citet{Paul2023instruction-tuning-sd}, we choose {\it cartoonlization} as the target task for fine-tuning. The fine-tuning dataset includes 5000 paired image-cartoon images as well as the corresponding prompting texts. The original images are randomly sampled from the training set of ImageNette \citep{howard2020fastai}, and the corresponding edited images are obtained with the Whitebox Cartoonizer model \citep{wang2020learning}. The prompts are generated using ChatGPT \footnote{https://chatgpt.com/}. All pretrained models are from the Huggingface Diffusers 
\footnote{https://huggingface.co/docs/diffusers/index} library. We apply PEFT methods to the {\it Key, Query, Value} and {\it Out} matrixs in the Unet of Stable Diffusion for fine-tuning. After fine-tuning, we randomly choose some images from the {\it photo} domain of the PACS dataset \citep{li2017deeper} for evaluation, using the prompt \texttt{Change the natural image to a cartoon-style image}. We provide the hyperparameters for our LoCA and FourierFT in Table \ref{tab:hyper_diffusion}. 

\begin{figure}[H]
           \centering
           \includegraphics[width=1.0\linewidth]{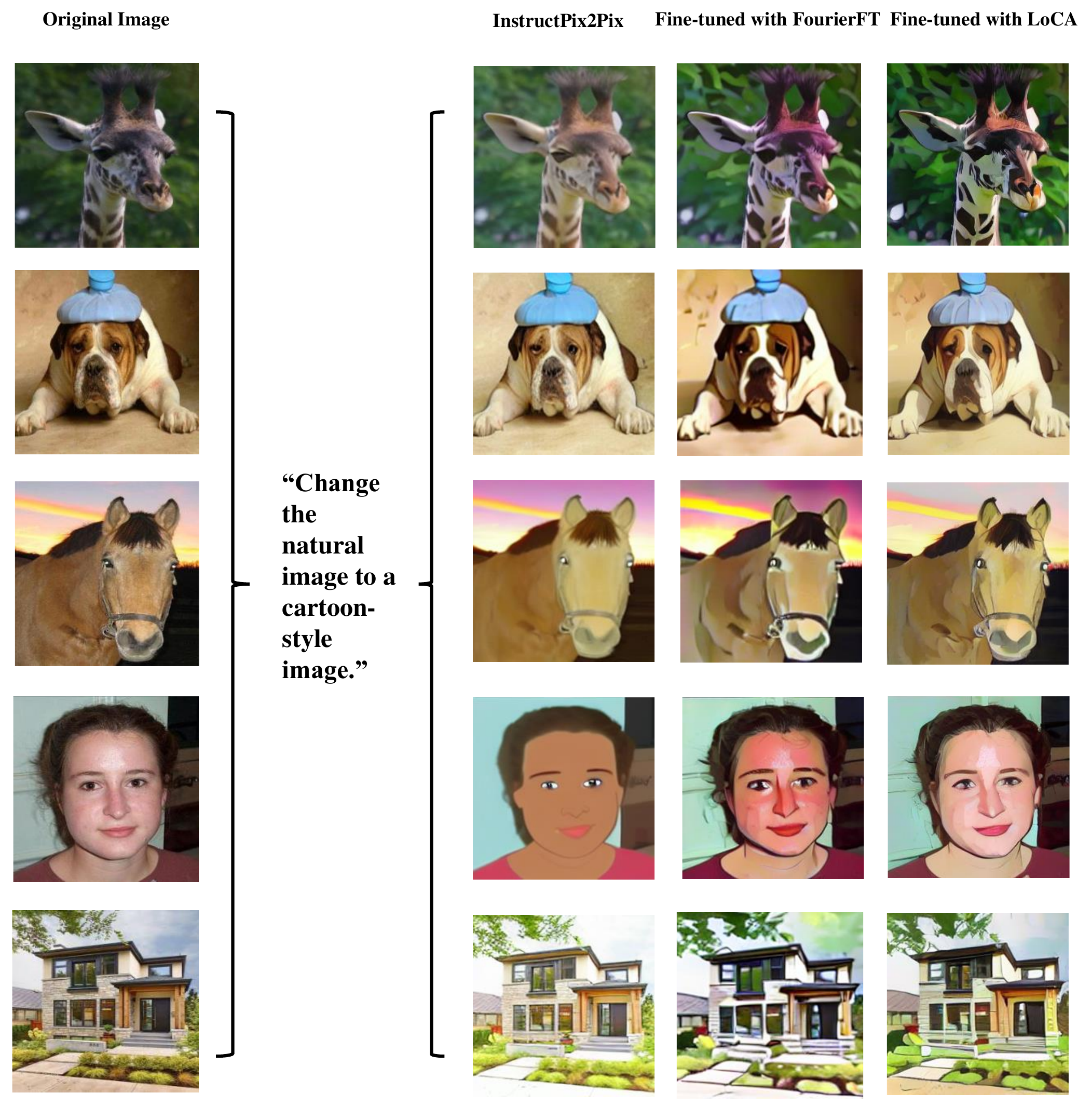}
           \caption{Comparison of the instruction-following abilities of InstructPix2Pix, FourierFT and Our LoCA on the cartoonlization task.}
           \label{fig:diffusion_example}
       \end{figure}

From Fig. \ref{fig:diffusion_example}, we can see that the pre-trained InstructPix2Pix model does not perform perfectly on this specific cartoonization task, especially in terms of preserving the original content. After fine-tuning, there is a noticeable improvement in the quality of the edits. However, the images produced by our fine-tuning method show better detail preservation compared to those generated by FourierFT.

\begin{table}[H]
\centering
\caption{Hyperparameters of FourierFT and LoCA for the Stable Diffusion fine-tuning experiment.}
\label{tab:hyper_diffusion}
\begin{tabular}{lcc}
\toprule
Hyperparameter           & FourierFT               & LoCA              \\ \midrule \midrule
Optimizer                & \multicolumn{2}{c}{AdamW}                  \\
Weight Decay             & \multicolumn{2}{c}{1e-2}                   \\
Learning Rate            & 1e-3                    & 1e-4             \\
Scaling Value            & 64                      & 1                \\
Where                    & \multicolumn{2}{c}{Key, Query, Value, Out} \\
Accumulation Steps       & \multicolumn{2}{c}{4}                      \\
Batch Size               & \multicolumn{2}{c}{2}                      \\
Training Steps           & \multicolumn{2}{c}{10000}                  \\
Learning iterations ($\mathcal{B}_s$) & -                       & 1200             \\ \bottomrule
\end{tabular}
\end{table}

\section{Toy Experiment of the Convergence}
\label{sec:toy_experiment}
To visually demonstrate the convergence process of our method, we designed a toy experiment based on a regression task. 

{\bf Data Generation.} We generated 5000 6-dimensional samples $X \in \mathbb{R}^{5000 \times 6}$, where each dimension of each sample was independently sampled from a Gaussian distribution $\mathcal{N}(0, 20)$. 

{\bf Network and Ground-truth Labels Preparation.}
We design a simple three-layer neural network with parameter matrices $W_1, W_2$, and $W_3$, each with a shape of $6 \times 6$. We reparameterized $W_2$ as $W_2 = \texttt{iDCT}(F_2)$, where $F_2$ is a sparse frequency domain matrix with only 3 non-zero coefficients. Then, we randomly initialize $W_1$, the coefficients of $F_2$, and $W_3$ using $\mathcal{N}(0, 0.2)$, and initialize the locations of $F_2$'s non-zero coefficients using a uniform distribution. We denote these initialized network weights as the ground-truth weights $W_1^*$, $F_2^*$, $W_3^*$, and use them to generate ground-truth labels, i.e., $Y = W_3^* \texttt{iDCT}(F_2^*) W_1^* X$. 

{\bf Optimization Details.} We now fix $W1^*$ and $W3^*$, and re-initialize the coefficient locations of $F_2$, and set its coefficients to zero (the same as that in our method design). We aim to explore whether, through our alternating optimization method, the zero matrix $F_2$ could converge to $F_2^*$ \footnote{We ensure the uniqueness of the solution through a 6x6 full-rank matrix.}. The entire optimization process uses an SGD optimizer and mean squeue error loss function. We set the learning rate of coefficients and locations to 0.02 and 0.05, respectively, and alternately optimize the coefficients and locations of $F_2$ in a period of 10 steps.

{\bf Experimental Results.} From Fig. \ref{fig:toy_experiment}, we can see that after re-initialization, the locations of the learnable coefficients in $F_2$ have changed. If we only learn the coefficients without changing their locations, it would be impossible to converge to $F_2^*$. Through our alternating optimization strategy, the locations of the learnable coefficients begin to change gradually after 200 steps and eventually converge successfully to the ground-truth locations. At that, if we fix the locations and only learn the coefficients, we can perfectly converge to $F_2^*$, which can be observed in Fig. \ref{fig:toy_experiment_loss}. This is the rationale behind the design of our optimization method.

{\bf Remark.} It is worth noting that our location gradient is estimated through difference approximation and is not entirely precise. The most accurate calculation method would be to compute the one-sided gradients in 8 directions separately and then choose the direction with the largest gradient for movement. However, this approach would introduce implementation difficulties and additional computational costs. In our experiments, we find that the difference approximation generally works well. Given the large scale of the weight matrix in Transformer, using an approximate method is a more reasonable approach. Although in practical applications, it may be too demanding to require every coefficient to converge to its optimal locations, we show that even if one parameter moves in a better direction, it will improve the training effect. This can be observed from the loss descent curve in Fig. \ref{fig:toy_experiment_loss}.

\begin{figure}[H]
           \centering
           \includegraphics[width=0.8\linewidth]{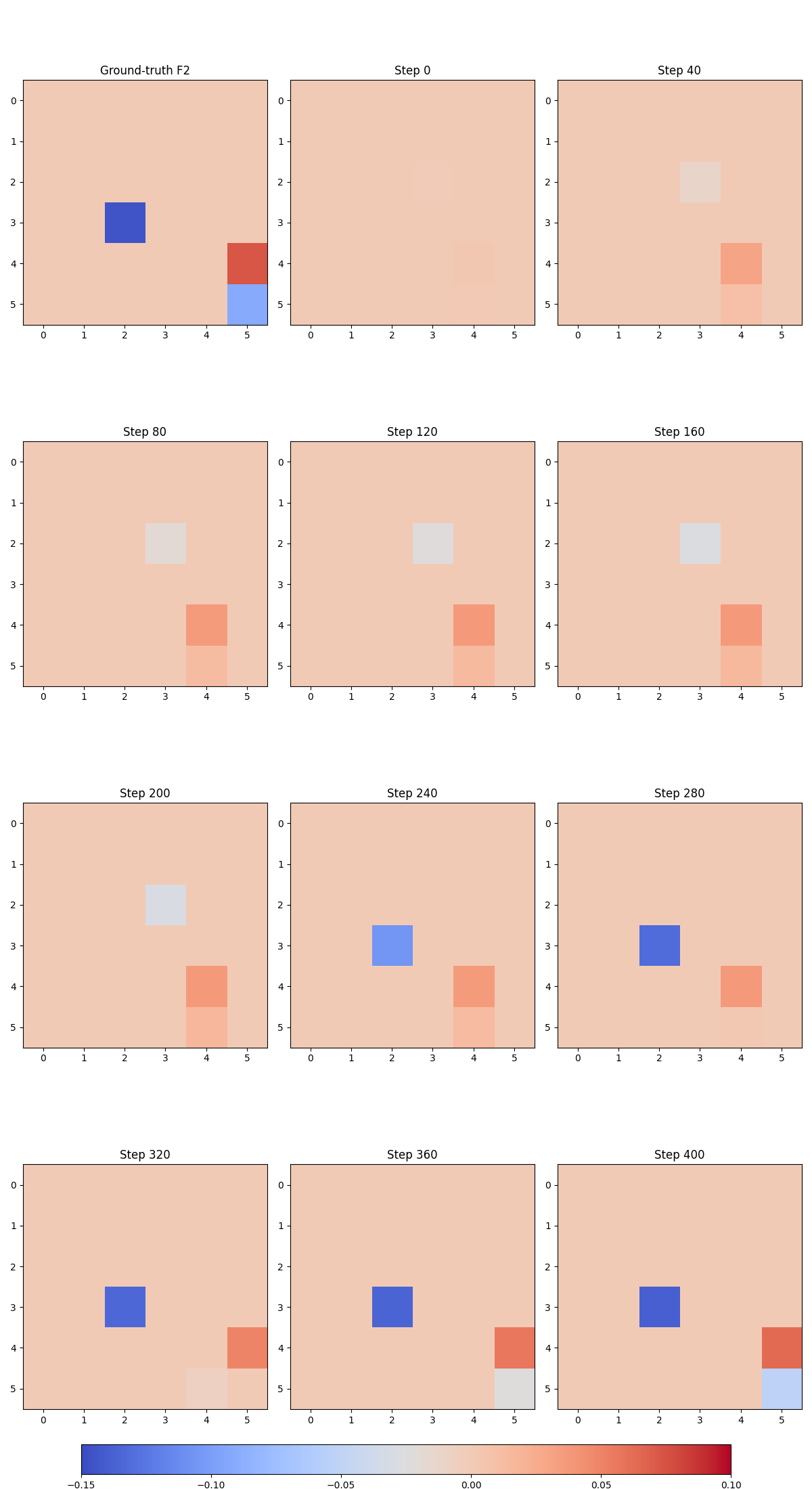}
           \caption{Optimization process of $F_2$ for the toy experiment.}
           \label{fig:toy_experiment}
       \end{figure}

\begin{figure}[H]
           \centering
           \includegraphics[width=0.55\linewidth]{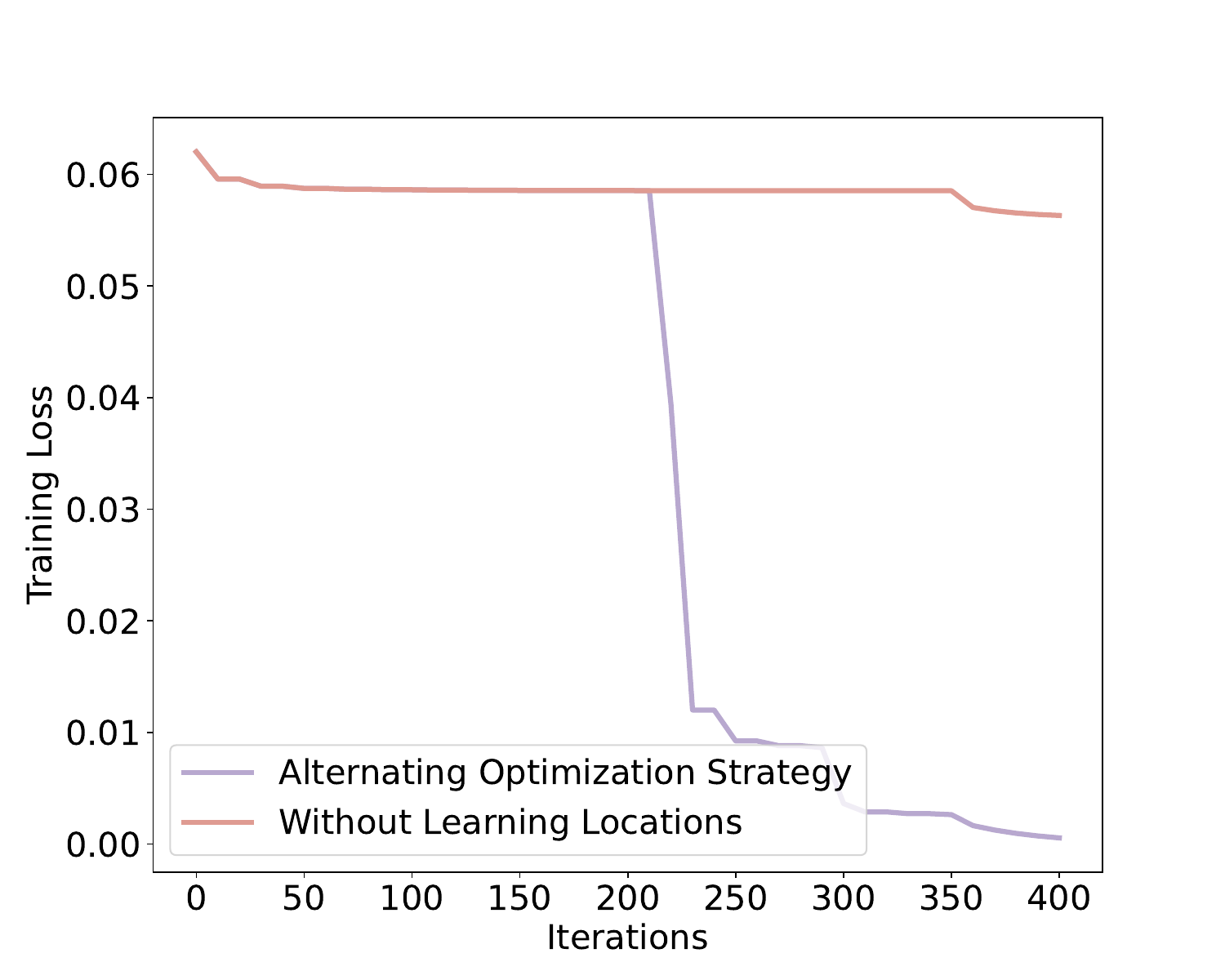}
           \caption{Comparison of the training loss of our method with and without alternating optimization strategy on the toy experiment.}
           \label{fig:toy_experiment_loss}
       \end{figure}

\section{Comparison of Learning Patterns in Different Fine-tuning Methods}
\begin{figure}[H]
           \centering
           \includegraphics[width=\linewidth]{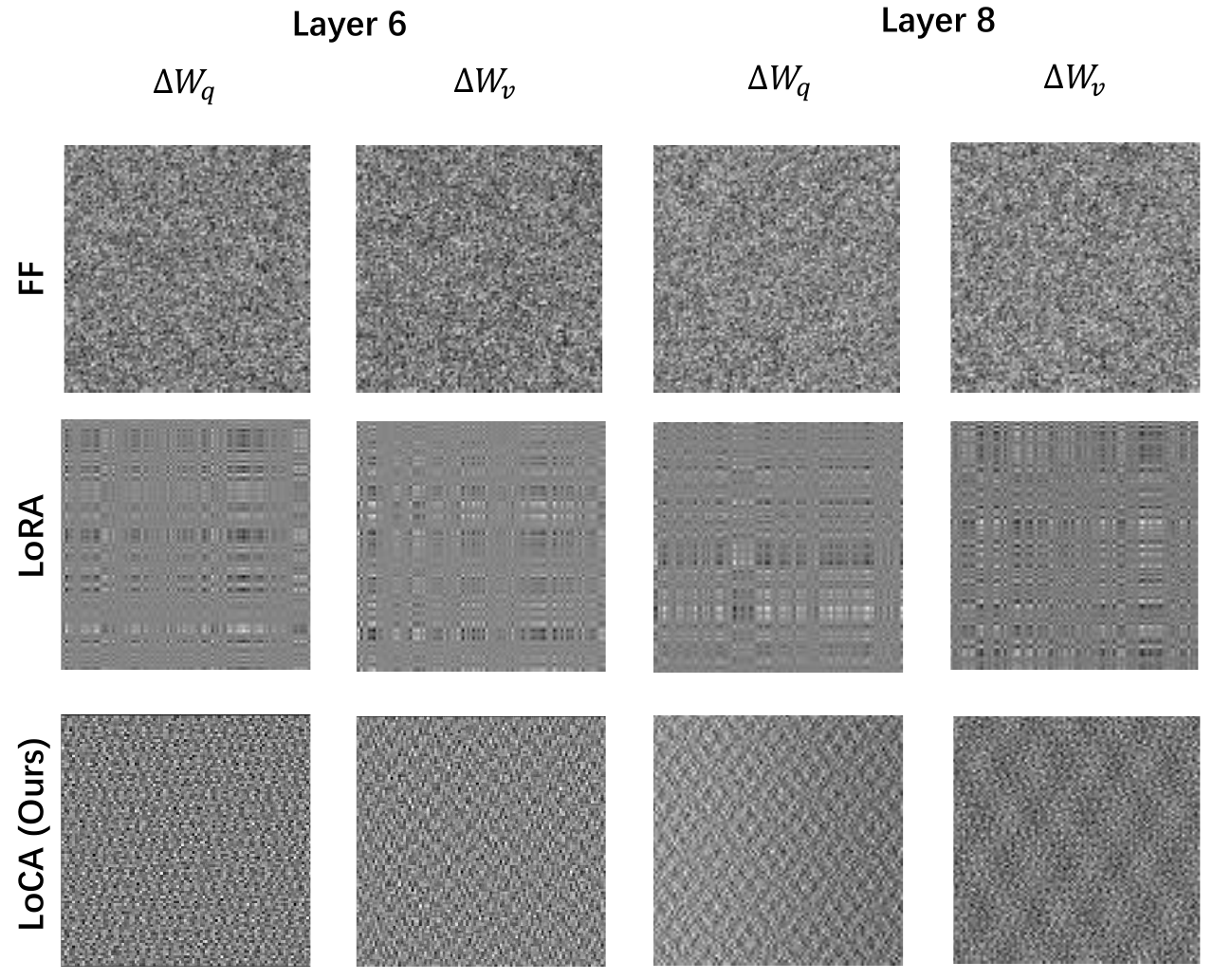}
           \caption{Visualization of learned $\Delta W_q$ and $\Delta W_v$ in different fine-tuning methods with RoBERTa-base. We choose layer 6 and layer 8 tuned on MNLI task as an example. For a clearer presentation, we use average pooling to downsample to 1/8 of the original size.}
           \label{fig:visualization_W}
       \end{figure}

To visually compare the differences in learning patterns between frequency domain methods and low-rank decomposition methods, we present in Fig. \ref{fig:visualization_W} the incremental matrices learned through FF, LoRA, and our LoCA. The hyperparameter settings for the experiment are the same as in Section \ref{sec:NLU}. It can be observed that the $\Delta W$ obtained from full fine-tuning shows more degrees of freedom across the entire matrix, exhibiting a Gaussian-like distribution. This aligns with the asymptotic normality we proposed in Proposition \ref{normalmatrixprop}. In contrast, the incremental weights learned by LoRA display a structured absence of many elements on the matrix, likely due to its low-rank approximation. This suggests that the optimization of LoRA may be constrained and it may not effectively capture the information present in the weight updates. LoCA circumvents the constraints of low-rank decomposition through frequency domain decomposition. As can be seen from Fig. \ref{fig:visualization_W}, the pattern of LoCA is more diverse compared to LoRA, thus enabling it to better capture the learning pattern of full fine-tuning.

\section{Extended Analysis on other LoRA Variants}
\textcolor{black}{Our theoretical analysis in Theorem \ref{theorem_1} focuses specifically on the classical low-rank reconstruction method LoRA \citep{hu2021lora}, which potentially constrains our comparative analysis with various LoRA variants. While it may not be feasible to encompass all low-rank methods within a single theorem, as some methods like VeRA \citep{kopiczko2023vera} are not explicitly designed for reconstruction, we can conduct case-by-case analyses since all low-rank-based methods are inherently bounded in their reconstruction capabilities.}

\textcolor{black}{For a given $\Delta W\in\mathbb{R}^{n\times n}$, VeRA decomposes it to $\Lambda_bB\Lambda_dA$ where $B,A$ are draw i.i.d. from a certain distribution and frozen and shared over all training steps and layers, $\Lambda_b,\Lambda_d$ are learnable diagonal matrix. From a reconstruction perspective, the $i$-th element of $\Lambda_b$ is the ordinary least squares (OLS) coefficient while setting the response as $i$-th row of $\Delta W$ and covariate as $i$-th row of $B\Lambda_dA$. This idea enables us to find $\Lambda_d$ that maximize the correlation between $i$-th row of $\Delta W$ and $i$-th row of $B\Lambda_dA$. However $A$ and $B$ are chosen randomly independent of $\Delta W$, the reconstruction error is approximately the error we learn from white noise.}

\textcolor{black}{We can conduct a detailed theoretical analysis of DoRA \citep{liu2024dora}, here we only give the outline. For a given $\Delta W$, DoRA first decomposes it as $\Delta W=A\Lambda$ where $\Lambda$ is diagonal and each column of $A$ has magnitude $1$. The $r$-rank approximation is $A_r\Lambda$, where $A_r=U_r\Lambda_rV_r^T$, and $U_r,V_r\in\mathbb{R}^{n\times r}$ and $\Lambda_r$ contains $r$ largest eigenvalues of $A$. If each element in $\Delta W$ follows i.i.d. standard normal, we can derive the independency of $A$ and $\Lambda$. Using total expectation, we have the following reconstruction loss
$$
\mathbb{E}(\|A\Lambda-A_r\Lambda\|^2)=\mathbb{E}\{\mathbb{E}(\|A\Lambda-A_r\Lambda\|^2|A)\}=\sqrt{2}\dfrac{\Gamma((n+1)/2)}{\Gamma(n/2)}\mathbb{E}(\|A-A_r\|^2)
$$
As each non-zero element in $\Lambda$ follows i.i.d. $\chi(n)$ distribution. Subsequent calculations only require computing the reconstruction loss based on the distribution of $A$. At this point, the reconstruction loss is consistent with the LoRA method, except that the distributions are different. This requires complex calculations, but since each column of $A$ is the direction of a random normal vector, the difference should not be significant. The loss corresponding to DoRA should therefore be approximately the same as that of LoRA.}

\section{Analysis of Non-i.i.d. Effects}
\label{appendix:non_iid}
\textcolor{black}{While our main theoretical analysis assumes independence of weight updates for analytical tractability, practical neural network training through gradient-based optimization introduces dependencies between parameters. In this section, we provide a detailed analysis of how deviations from the i.i.d. assumption affect our theoretical results.}

\textcolor{black}{
\textbf{Correlation Structure.}
To systematically study the impact of parameter dependencies, we consider a controlled correlation setting where the vectorized weight updates follow a multivariate normal distribution:}
\textcolor{black}{
\begin{equation}
    W^T \sim N_{K^2}(0,\Sigma)
\end{equation}
}
\textcolor{black}{where the covariance matrix $\Sigma$ takes the form:}
\textcolor{black}{
\begin{equation}
    \Sigma = \rho\mathbbm{1}\mathbbm{1}^T + I_{K^2}
\end{equation}}

\textcolor{black}{Here, $\mathbbm{1}=(1,\ldots,1)^T\in\mathbb{R}^{K^2}$ is the all-ones vector, $I_{K^2}$ is the identity matrix, and $\rho$ controls the strength of uniform correlation between all pairs of parameters. This structure allows us to precisely control the degree of dependency while maintaining the marginal distributions of individual parameters.}

\textcolor{black}{
\textbf{Critical Correlation Analysis.}
We conduct extensive numerical experiments to identify the critical correlation levels where the relative performance of different adaptation methods changes significantly. For these experiments, we fix the matrix size to $300 \times 300$ and vary the rank $r$ used in low-rank approximations.}
\textcolor{black}{
For each rank setting, we identified the critical correlation value $\rho_c$ where LoRA's reconstruction ability begins to outperform LoCA. The results are shown in Fig. \ref{fig:sim_3}.}

   \begin{figure}[H]
       \centering
       \includegraphics[width=0.9\linewidth]{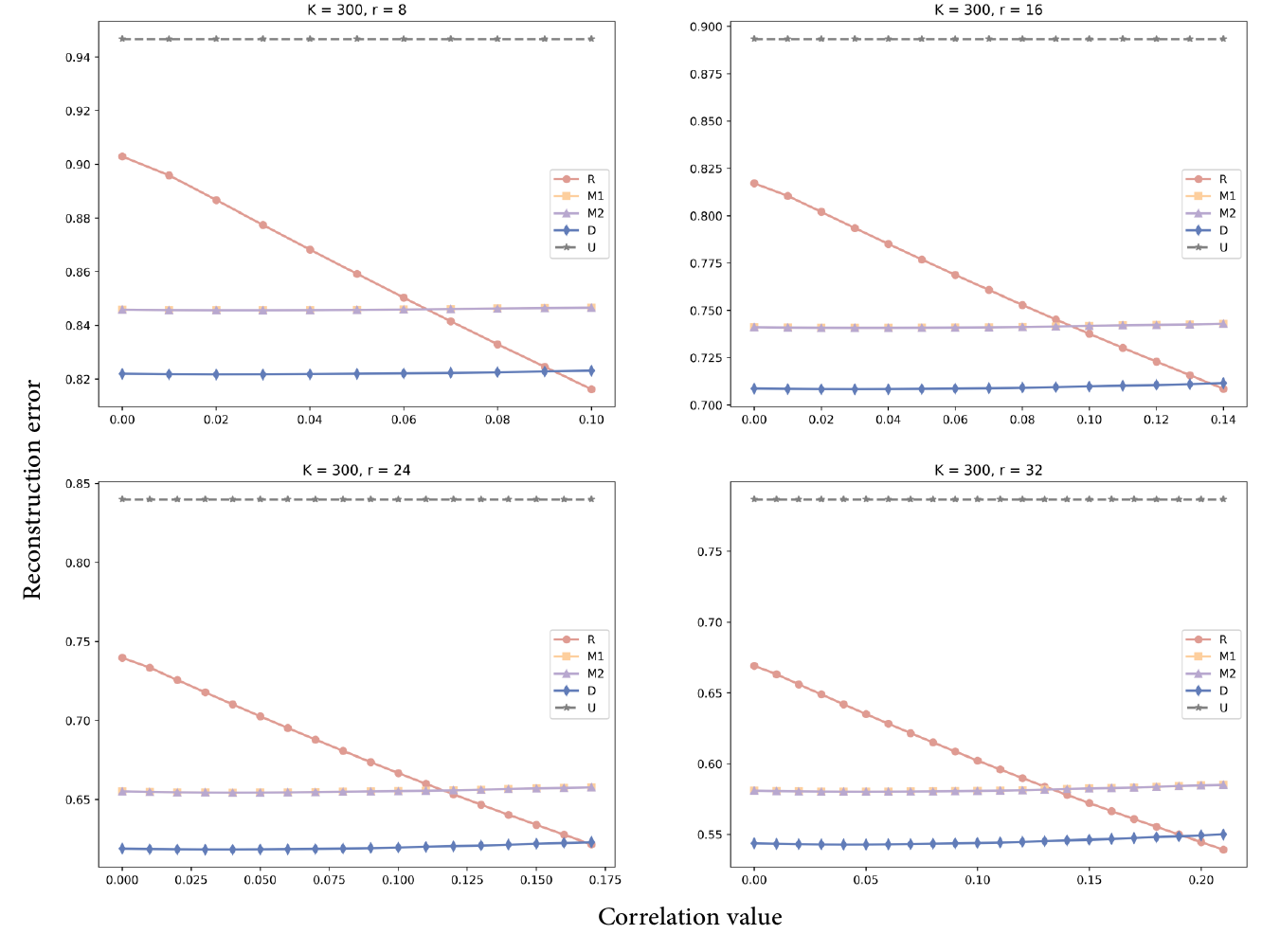}
       \caption{\textcolor{black}{Reconstruction errors of different $r$ under different correlation values $\rho$. R, M1, M2, D, U denote the same meaning in Fig. \ref{fig:sim_1}.}}
       \label{fig:sim_3}
   \end{figure}
   
\textcolor{black}{
The results show that when $r = 8, 16, 24$, and $32$, the critical values $\rho_c$ are $0.09, 0.14, 0.17,$ and $0.19$, respectively, which are quite high and indicate our method remains effective under substantial dependencies.}

\textcolor{black}{
\textbf{Statistical Detection of Correlation.}
To validate that these critical correlation levels represent statistically significant departures from independence, we developed a test based on the Marchenko-Pastur (MP) law. According to the MP law, under independence, the eigenvalues of the sample correlation matrix should fall within a specific interval $[\lambda_-,\lambda_+]$. We define a test statistic:}
\textcolor{black}{
\begin{equation}
    T = \frac{\sum_{\lambda\notin[\lambda_-,\lambda_+]}\lambda}{\sum\lambda}.
\end{equation}
}
\textcolor{black}{
This statistic measures the proportion of eigenvalue mass that falls outside the MP bounds. Through Monte Carlo simulation, we determined that the critical value at the 0.95 significance level is 0.005. For our identified critical correlation values $\rho_c = 0.09, 0.14, 0.17, 0.19$, the corresponding test statistics are:}
\textcolor{black}{
\begin{itemize}
    \item $\rho_c = 0.09$: $T = 0.086$
    \item $\rho_c = 0.14$: $T = 0.134$
    \item $\rho_c = 0.17$: $T = 0.143$
    \item $\rho_c = 0.19$: $T = 0.146$
\end{itemize}}
\textcolor{black}{
All these test statistics substantially exceed the critical value, confirming that these levels of correlation are readily detectable and represent significant departures from independence.}

\textcolor{black}{
\textbf{Implications for Theory.}
These findings have several important implications:}

\begin{enumerate}
    \item \textcolor{black}{The critical correlation values where method performance characteristics change are statistically significant and detectable using standard random matrix theory diagnostics.}
    
    \item \textcolor{black}{The monotonic increase in critical correlation with rank suggests that higher-dimensional representations are more robust to dependencies.}
    
    \item \textcolor{black}{Even under substantial and detectable correlations, the performance advantages of frequency-domain methods persist, supporting the practical validity of our theoretical framework.}
\end{enumerate}

\textcolor{black}{
These results demonstrate that while strict independence is violated in practice, our theoretical insights remain valid under realistic levels of parameter dependency. The robustness of our results to substantial correlations, as quantified by both performance analysis and statistical tests, supports the practical applicability of frequency-domain adaptation methods.}

\end{document}